\documentclass{article}

% if you need to pass options to natbib, use, e.g.:
%     \PassOptionsToPackage{numbers, compress}{natbib}
% before loading neurips_2021

% ready for submission
% \usepackage{neurips_2021}
\usepackage[preprint, nonatbib]{neurips_2021}

% \usepackage[nonatbib]{neurips_2021}
% \usepackage[final]{neurips_2021}
% to compile a preprint version, e.g., for submission to arXiv, add add the
% [preprint] option:
%     \usepackage[preprint]{neurips_2021}

% to compile a camera-ready version, add the [final] option, e.g.:
%     \usepackage[final]{neurips_2021}

% to avoid loading the natbib package, add option nonatbib:
%    \usepackage[nonatbib]{neurips_2021}

\usepackage[utf8]{inputenc} % allow utf-8 input
\usepackage[T1]{fontenc}    % use 8-bit T1 fonts
\usepackage{hyperref}       % hyperlinks
\usepackage{url}            % simple URL typesetting
\usepackage{booktabs}       % professional-quality tables
\usepackage{amsfonts}       % blackboard math symbols
\usepackage{nicefrac}       % compact symbols for 1/2, etc.
\usepackage{microtype}      % microtypography
\usepackage{xcolor}         % colors

\usepackage{graphicx, bbm, latexsym, amsfonts, epsfig, verbatim, amsmath, color, epsfig, verbatim, enumerate, multirow, caption, graphicx, subcaption, algorithm, algpseudocode}
\usepackage{url, multicol, color, latexsym, amsfonts, epsfig, verbatim, tikz, cases, bbm}
\usepackage{algcompatible}
\usepackage{diagbox,multicol,graphicx}
\usepackage{booktabs, caption, makecell}
\usepackage{threeparttable}
\usepackage{array}

\usepackage{bbm}
\usepackage{amsmath, amsthm, amssymb}
\usepackage{lipsum}
\usepackage{graphicx}
\usepackage{color}
\usepackage{multirow}
\usepackage{amsfonts}
\usepackage{sectsty}
\usepackage{subcaption, wrapfig}
\usepackage{footnote}

\usepackage[multidot]{grffile}

\DeclareMathOperator*{\argmin}{arg\,min}

\newtheorem{theorem}{Theorem}
\newtheorem{lemma}{Lemma}

\newtheorem{definition}{Definition}
\newtheorem{assumption}{Assumption}

\newtheorem{proposition}{Proposition}
\newtheorem{remark}{Remark}

\title{Differentially Private Federated Learning via Inexact ADMM}

% The \author macro works with any number of authors. There are two commands
% used to separate the names and addresses of multiple authors: \And and \AND.
%
% Using \And between authors leaves it to LaTeX to determine where to break the
% lines. Using \AND forces a line break at that point. So, if LaTeX puts 3 of 4
% authors names on the first line, and the last on the second line, try using
% \AND instead of \And before the third author name.

\author{%
  Minseok Ryu\\
  Mathematics and Computer Science Division\\
  Argonne National Laboratory\\
  Lemont, IL 60439 \\
  \texttt{mryu@anl.gov} \\
  % examples of more authors
  \And
  Kibaek Kim \\
  Mathematics and Computer Science Division \\
  Argonne National Laboratory\\
  Lemont, IL 60439 \\
  \texttt{kimk@anl.gov} \\
  % \AND
  % Coauthor \\
  % Affiliation \\
  % Address \\
  % \texttt{email} \\
  % \And
  % Coauthor \\
  % Affiliation \\
  % Address \\
  % \texttt{email} \\
  % \And
  % Coauthor \\
  % Affiliation \\
  % Address \\
  % \texttt{email} \\
}

\begin{document}

\maketitle

\begin{abstract}
Differential privacy (DP) techniques can be applied to the federated learning model to protect data privacy against inference attacks to communication among the learning agents. The DP techniques, however, hinder achieving a greater learning performance while ensuring strong data privacy. In this paper we develop a DP inexact alternating direction method of multipliers algorithm that solves a sequence of subproblems with the objective perturbation by random noises generated from a Laplace distribution. We show that our algorithm provides $\bar{\epsilon}$-DP for every iteration, where $\bar{\epsilon}$ is a privacy parameter controlled by a user. 
% and $\mathcal{O}(1/T)$ rate of convergence in expectation, where $T$ is the number of iterations. 
Using MNIST and FEMNIST datasets for the image classification, we demonstrate that our algorithm reduces the testing error by at most $22\%$ compared with the existing DP algorithm, while achieving the same level of data privacy. The numerical experiment also shows that our algorithm converges faster than the existing algorithm.
\end{abstract}

\section{Introduction} \label{sec:introduction}
In this work we propose a privacy-preserving algorithm for solving a federated learning (FL) model \cite{konevcny2015federated}, namely, a machine learning (ML) model that aims to learn global model parameters \textit{without} collecting locally stored data into a central server.
The proposed algorithm is based on an inexact alternating direction method of multipliers (IADMM) that solves a sequence of subproblems whose \textit{objective functions} are perturbed by injecting some random noises for ensuring \textit{differential privacy} (DP) on the distributed data.
We show that the proposed algorithm provides more accurate solutions compared with the state-of-the-art DP algorithm \cite{huang2019dp} while both algorithms provide the same level of data privacy.
As a result, the proposed algorithm can mitigate a trade-off between data privacy and solution accuracy (i.e., learning performance in the context of ML), which is one of the main challenges in developing DP algorithms as described in \cite{dwork2014algorithmic}.

Developing highly accurate privacy-preserving algorithms can enhance the practical uses of FL in applications with sensitive data (e.g., electronic health records \cite{shickel2017deep} and mobile device data \cite{mcmahan2017communication}) because a greater learning performance can be achieved while preserving privacy on the sensitive data exposed to be leaked during a training process.
Because of the importance of FL, incorporating privacy-preserving techniques into optimization algorithms for solving the FL models has been studied extensively.

\textbf{Related Work.}
The empirical risk minimization (ERM) model used for learning parameters in supervised ML is often vulnerable to adversarial attacks \cite{madry2017towards}, a situation that motivates the application of privacy-preserving techniques (e.g., DP \cite{dwork2006calibrating} and homomorphic encryption \cite{kaissis2020secure}) to protect data.
Among these techniques, DP has been widely used in the ML community and is especially useful for protecting data against inference attacks \cite{shokri2017membership}.

Formally, DP is a privacy-preserving technique that randomizes the output of an algorithm such that any single data point cannot be inferred by an adversary that can reverse-engineer the randomized output.
Depending on where to inject noises to randomize the output, DP can be categorized by input \cite{fukuchi2017differentially, kang2020input}, output \cite{dwork2006calibrating, chaudhuri2011differentially}, and objective \cite{chaudhuri2011differentially, kifer2012private} perturbation methods.
Compared with input perturbation, which directly perturbs input data by adding random noises,  \textit{output perturbation} and \textit{objective perturbation} methods provide a randomized output of an optimization problem by injecting random noises into its true output and objective function, respectively.
In \cite{chaudhuri2011differentially}, the authors propose a differentially private ERM that utilizes the output and objective perturbation methods to ensure DP on data.
Also, Abadi et al.~\cite{abadi2016deep} apply the output perturbation to stochastic gradient descent (SGD) in order to ensure DP on data for every iteration of the algorithm.
The privacy-preserving technique in our work is the \textit{objective perturbation} method:  we randomize the output of the trust-region subproblem by perturbing its objective function with some random noises.
For details of differentially private ML, we refer readers to \cite{sarwate2013signal,kifer2012private,iyengar2019towards}.

Within the context of FL, various distributed optimization algorithms have been developed for solving the distributed ERM model.
For example, \texttt{FedAvg} in \cite{mcmahan2017communication} is an algorithm that combines SGD for each agent with a central server that performs model averaging.
Another example is \texttt{FedProx} in \cite{li2018federated} that is constructed by replacing the local SGD in \texttt{FedAvg} with an optimization problem with an additional \textit{proximal} function.
These algorithms do not guarantee data privacy during a training process, however, preventing their practical uses.
Readers interested in details of FL should see \cite{kairouz2019advances,li2019survey,li2020federated}; for details  about FL without the central server, see \cite{li2017robust,elgabli2020gadmm}.

In order to preserve privacy on data used for the FL model, various DP algorithms have been proposed in the literature, where the output and objective perturbations are incorporated for ensuring DP (see \cite{agarwal2018cpsgd, wei2020federated, naseri2020toward,zhang2016dynamic, huang2019dp}).
For example, the intermediate model parameters and/or gradients computed for every iteration of the \texttt{FedAvg}-type and \texttt{FedProx}-type algorithms are perturbed for guaranteeing DP as in \cite{naseri2020toward} and \cite{wei2020federated}, respectively, which can be seen as the output perturbation.
Also, in \cite{zhang2016dynamic}, the primal and dual variables computed for every iteration of the ADMM algorithm are perturbed, which can be seen as the output and objective perturbations, respectively. 
Zhang and Zhu \cite{zhang2016dynamic} compare the two perturbation methods, as \cite{chaudhuri2011differentially} did under the general ML setting, and show that the objective perturbation can provide more accurate solutions compared with the output perturbation.
The use of the objective perturbation is somewhat limited, however, because it requires the objective function to be twice differentiable and strongly convex whereas the twice differentiability restriction can be relaxed to the differentiability for the output perturbation.
In \cite{huang2019dp}, the authors incorporate the output perturbation into IADMM that utilizes the first-order approximation with a proximal function.
Introducing the first-order approximation in ADMM enforces smoothness of the objective function, hence satisfying the aforementioned differentiability assumption for ensuring DP.
Also, the authors show that the algorithm has $\mathcal{O}(1/\sqrt{T})$ rate of convergence in expectation, where $T$ is the number of iterations.
Moreover, their numerical experiments demonstrate that the algorithm outperforms DP-ADMM in \cite{zhang2016dynamic} and DP-SGD in \cite{abadi2016deep}.

\textbf{Contributions.}
In this paper, as compared with the DP-IADMM algorithm in \cite{huang2019dp}, we incorporate the \textit{objective perturbation} into IADMM that utilizes the first-order approximation.
Our main contributions are summarized as follows:
\begin{itemize}
  \itemsep0em
  \item Proof that the our new IADMM algorithm provides DP on data
  % \item Proof that the rate of convergence in expectation for our DP algorithm is improved to $\mathcal{O}(1/T)$
  \item Numerical demonstration that our DP algorithm provides more accurate solutions compared with the existing DP algorithm \cite{huang2019dp}
\end{itemize}

\textbf{Organization and Notation.}
The remainder of the paper is organized as follows.
In Section \ref{sec:model} we describe an FL model using a distributed ERM and present the existing inexact ADMM algorithm for solving the FL model.
In Section \ref{sec:DP-IADMM-Trust} we propose a new DP inexact ADMM algorithm for solving the FL model that ensures DP on data and converges to an optimal solution with the sublinear convergence rate.
In Section \ref{sec:experiments} we describe numerical experiments to demonstrate the outperformance of the proposed algorithm.

We denote by $\mathbb{N}$ a set of natural numbers. For $A \in \mathbb{N}$, we define $[A]:= \{1,\ldots,A\}$ and denote by $\mathbb{I}_{A}$ a $A \times A$ identity matrix.
We use $\langle \cdot, \cdot \rangle$ and $\|\cdot \|$ to denote the scalar product and the Euclidean norm, respectively.

% The existing subgradient-based algorithms (e.g., \cite{nedic2014stochastic}, \cite{cohen2017projected}) can find an optimal solution of \eqref{ERM_0} with $\mathcal{O}(1/\sqrt{T})$ rate of convergence under a general convex function setting, where $T$ is the number of iterations.
% Moreover, the existing ADMM algorithms (e.g., \cite{he20121}, \cite{he2015non}) can solve \eqref{ERM_0} with $\mathcal{O}(1/T)$ rate of convergence.

\section{Federated Learning Model} \label{sec:model}
%% 1. Model Description
\textbf{Distributed ERM.}
Consider a set $[P]$ of agents connected to a central server.
Each agent $p \in [P]$ has a training dataset $\mathcal{D}_p := \{x_{pi}, y_{pi} \}_{i=1}^{I_p}$, where $I_p$ is the number of data samples, $x_{pi} \in \mathbb{R}^J$ is a $J$-dimensional data feature, and $y_{pi} \in \mathbb{R}^K$ is a $K$-dimensional data label.
We consider a \textit{distributed} ERM problem given by
\begin{align}
\min_{w  \in \mathcal{W}} \ & \textstyle\sum_{p=1}^P \Big\{ \frac{1}{I} \sum_{i=1}^{I_p} \ell(w; x_{pi},y_{pi}) + \frac{\beta}{P} r(w) \Big\}, \label{ERM_0}
\end{align}
where
$w \in \mathbb{R}^{J \times K}$ is a global model parameter vector,
$\mathcal{W}$ is a compact convex set,
$\ell(\cdot)$ is a convex loss function, $r(\cdot)$ is a convex regularizer function, $\beta > 0$ is a regularizer parameter, and $I := \sum_{p=1}^P I_p$.
Since \eqref{ERM_0} is a convex optimization problem, it can be expressed by an \textit{equivalent} Lagrangian dual problem. More specifically, we first rewrite \eqref{ERM_0} as
\begin{subequations}
\label{ERM_1}
\begin{align}
  \min_{w, z_1, \ldots, z_P \in \mathcal{W}} \ & \textstyle\sum_{p=1}^P f_p(z_p;\mathcal{D}_p)  \\
  \mbox{s.t.} \ & w_{jk} = z_{pjk}, \ \forall p \in [P], \forall j \in [J], \forall k \in [K], \label{ERM_1-1}
\end{align}
where $z_p \in \mathbb{R}^{J \times K}$ is a local parameter vector defined for every agent $p \in [P]$ and
\begin{align}
& f_p(z_p;\mathcal{D}_p) :=  \textstyle\frac{1}{I} \sum_{i=1}^{I_p} \ell(z_p; x_{pi},y_{pi}) + \frac{\beta}{P} r(z_p). \label{def_fn_f}
\end{align}
\end{subequations}
By introducing dual variables $\lambda_p \in \mathbb{R}^{J \times K}$ associated with constraints \eqref{ERM_1-1}, the Lagrangian dual problem is given by
\begin{align}
  \max_{  \lambda_1, \ldots, \lambda_P} \ \min_{w, z_1, \ldots, z_P  \in \mathcal{W}} \ \  \textstyle\sum_{p=1}^P f_p(z_p; \mathcal{D}_p) + \langle \lambda_p, w-z_p \rangle. \label{ERM_LDual}
\end{align}
Since \eqref{ERM_1} is a convex optimization problem, solving \eqref{ERM_LDual} provides an optimal solution to \eqref{ERM_1}. 

\textbf{Inexact ADMM.}
ADMM is an iterative optimization algorithm that can find an optimal solution of \eqref{ERM_LDual} in the augmented Lagrangian form.
More specifically, for every iteration $t \in [T]$, it updates $(w^{t}, z^{t}, \lambda^{t}) \rightarrow (w^{t+1}, z^{t+1}, \lambda^{t+1})$ by solving a sequence of the following subproblems:
\begin{subequations}
\label{ADMM}
\begin{align}
& w^{t+1} \leftarrow \argmin_{w } \ \textstyle\sum_{p=1}^P \langle \lambda^t_p, w \rangle + \frac{\rho^t}{2} \|w-z^t_p\|^2, \label{ADMM-1} \\
& z^{t+1}_p \leftarrow \argmin_{z_p \in \mathcal{W}} \ \textstyle f_p(z_p;\mathcal{D}_p) - \langle \lambda^t_p, z_p \rangle + \frac{\rho^t}{2} \|w^{t+1}-z_p\|^2, \ \forall p \in [P], \label{ADMM-2} \\
& \lambda^{t+1}_p \leftarrow \textstyle \lambda^{t}_p + \rho^t (w^{t+1}-z^{t+1}_p), \ \forall p \in [P], \label{ADMM-3}
\end{align}
\end{subequations}
where $\rho^t > 0$ is a penalty parameter that controls the proximity of the global and local parameters.

One need not solve the subproblem \eqref{ADMM-2} exactly in each iteration to guarantee the overall convergence. In~\cite{huang2019dp}, \eqref{ADMM-2} is replaced with the following inexact subproblem:
\begin{subequations}
\label{ADMM-2-Prox}
\begin{align}
& z^{t+1}_p \leftarrow \textstyle \argmin_{z_p  \in \mathcal{W} } \ H^t(z_p; \mathcal{D}_p) + \frac{1}{2\eta^t}\|z_p - z^t_p\|^2,   \\
& H^t(z_p; \mathcal{D}_p) := \langle f'_p(z^t_p;\mathcal{D}_p), z_p \rangle  + \textstyle \frac{\rho^t}{2} \|w^{t+1}-z_p + \frac{1}{\rho^t} \lambda^t_p  \|^2. \label{function_G_1}
\end{align}
\end{subequations}
This subproblem is obtained by (i) replacing the convex function $f_p(z_p;\mathcal{D}_p)$ in \eqref{ADMM-2} with its lower bound $\widehat{f}_p(z_p;\mathcal{D}_p) := f_p(z^t_p;\mathcal{D}_p) + \langle f'_p(z^t_p;\mathcal{D}_p), \ z_p - z_p^t \rangle$, where $f'_p(z^t_p;\mathcal{D}_p)$ is a subgradient of $f_p$ at $z^t_p$, and (ii) adding a \textit{proximal} term $\frac{1}{2\eta^t}\|z_p - z^t_p\|^2$ with a proximity parameter $\eta^t > 0$ that controls the proximity of a new solution $z^{t+1}_p$ from $z^t_p$ computed from the previous iteration.

Alternatively, a \textit{trust-region} constraint can be introduced to form the following inexact subproblem:
\begin{subequations}
\label{ADMM-2-Trust}
\begin{align}
& z^{t+1}_p \leftarrow \textstyle \argmin_{z_p  \in  \{\mathcal{W} \cap \widehat{\mathcal{W}}^t_p\}} \ H^t(z_p; \mathcal{D}_p) ,   \\
& \widehat{\mathcal{W}}^t_p := \{ z_p \in \mathbb{R}^{J \times K} : \|z_p - z^t_p\| \leq \delta^t \} , \ \forall p \in [P], \label{trust-region}
\end{align}
\end{subequations}
where \eqref{trust-region} defines a trust region with a proximity parameter $\delta^t > 0$.
Note that both \textit{proximal} and \textit{trust-region} techniques are used for finding a new solution within a certain distance from the solution computed in the previous iteration and have been widely used for numerous optimization algorithms (e.g., the bundle method \cite{teo2010bundle}).
We will discuss how to set $(\rho^t, \eta^t, \delta^t)$ in Sections \ref{sec:convergence} and \ref{sec:experiments}.

In this paper we refer to $\{\text{\eqref{ADMM-1}}\rightarrow\eqref{ADMM-2-Prox}\rightarrow\text{\eqref{ADMM-3}}\}_{t=1}^T$ and $\{\text{\eqref{ADMM-1}}\rightarrow \eqref{ADMM-2-Trust}\rightarrow\text{\eqref{ADMM-3}}\}_{t=1}^T$ as IADMM-Prox and IADMM-Trust, respectively.
Note that each agent $p$ solves the inexact subproblem (\eqref{ADMM-2-Prox} or  \eqref{ADMM-2-Trust}) while the central server computes \eqref{ADMM-1} and \eqref{ADMM-3}.
We consider such a training process, where the data $\mathcal{D}_p$ defining the inexact subproblem can be inferred by an adversary who can access the information $(w^{t+1},\lambda^t_p, z^{t+1}_p)$ exchanged.
To protect $\mathcal{D}_p$, we introduce \textit{differential privacy}  into the algorithmic processes, which will be discussed in the next section.

\section{Differentially Private Inexact ADMM} \label{sec:DP-IADMM-Trust}
In this section we propose two DP-IADMM algorithms that iteratively solve the \textit{constrained} subproblem (\eqref{ADMM-2-Prox} or \eqref{ADMM-2-Trust}) whose \textit{objective function} is perturbed by some random noises for ensuring DP.
The privacy and convergence analyses of the proposed algorithms are presented in Sections \ref{sec:privacy} and \ref{sec:convergence}.
% In Appendix \ref{apx-DP-IADMM-Prox}, we also present a DP-IADMM-Prox algorithm that solves the subproblem \eqref{ADMM-2-Proximal} whose \textit{objective function} is perturbed by some random noises for ensuring DP, as well as its privacy and convergence analyses.

DP is a data privacy preservation technique that aims to protect data by randomizing \textit{outputs} of an algorithm that takes data as inputs.
A formal definition follows.
\begin{definition}{\textbf{(Definition 3 in \cite{chaudhuri2011differentially}})} \label{def:differential_privacy_1}
  A randomized algorithm $\mathcal{A}$ provides $\bar{\epsilon}$-DP if for any two datasets $\mathcal{D}$ and $\mathcal{D}'$ that differ in a single entry and for any set $\mathcal{S}$,
  \begin{align}
  e^{-\bar{\epsilon}} \ \mathbb{P}( \mathcal{A}(\mathcal{D}') \in \mathcal{S} )  \leq \mathbb{P}( \mathcal{A}(\mathcal{D}) \in \mathcal{S} ) \leq e^{\bar{\epsilon}} \ \mathbb{P}( \mathcal{A}(\mathcal{D}') \in \mathcal{S} ),  \label{def_differential_privacy_1}
  \end{align}
  where $\mathcal{A}(\mathcal{D})$ (resp. $\mathcal{A}(\mathcal{D}')$) is the randomized output of $\mathcal{A}$ on input $\mathcal{D}$ (resp. $\mathcal{D}'$).
  \end{definition}
  According to the inequalities \eqref{def_differential_privacy_1}, $\mathbb{P}( \mathcal{A}(\mathcal{D}) \in \mathcal{S}) - \mathbb{P}( \mathcal{A}(\mathcal{D}') \in \mathcal{S}) \rightarrow 0$ as $\bar{\epsilon} \rightarrow 0$.
  This implies that as $\bar{\epsilon}$ decreases, it becomes harder to distinguish the two datasets $\mathcal{D}$ and $\mathcal{D}'$ by analyzing the randomized outputs, thus providing stronger data privacy.

\textbf{Objective Perturbation.}
We construct a randomized algorithm $\mathcal{A}$ satisfying \eqref{def_differential_privacy_1} by introducing some calibrated random noises into the objective function of the subproblem (\eqref{ADMM-2-Prox} or \eqref{ADMM-2-Trust})  to protect data in an $\bar{\epsilon}$-DP manner.
The subproblems \eqref{ADMM-2-Prox} and \eqref{ADMM-2-Trust} with the random noises are given by
\begin{align}
  & z^{t+1}_p (\mathcal{D}_p) = \textstyle \argmin_{z_p \in \mathcal{W}} \ G^t(z_p; \mathcal{D}_p, \tilde{\xi}^t_p) + \frac{1}{2\eta^t} \| z_p - z_p^{t}\|^2, \ \text{ and }  \label{DPADMM-2-Prox} \\
  & z^{t+1}_p (\mathcal{D}_p) = \textstyle \argmin_{z_p \in \mathcal{W} \cap \widehat{\mathcal{W}}^t_p}   \ G^t(z_p; \mathcal{D}_p, \tilde{\xi}^t_p), \label{DPADMM-2-Trust}
\end{align}  
respectively, where
\begin{align}
  G^t(z_p; \mathcal{D}_p, \tilde{\xi}^t_p) := \langle f'_p(z^t_p;\mathcal{D}_p), z_p \rangle  + \textstyle \frac{\rho^t}{2} \|w^{t+1}-z_p + \frac{1}{\rho^t}(\lambda^t_p - \tilde{\xi}^{t}_p) \|^2, \label{function_G_2}
\end{align}
and $\tilde{\xi}^{t}_p \in \mathbb{R}^{J \times K}$ is a noise vector sampled from a Laplace distribution with zero mean, whose probability density function (pdf) is given by
\begin{subequations}  
\label{Laplace}  
\begin{align}
& L ( \tilde{\xi}^{t}_p ; \bar{\epsilon}, \bar{\Delta}_p^t ) := \textstyle \frac{\bar{\epsilon}}{2 \bar{\Delta}_p^t} \exp \big(- \frac{\bar{\epsilon} \| \tilde{\xi}^{t}_p \|_1 }{\bar{\Delta}_p^t} \big), \label{Laplace-pdf} \\
& \bar{\Delta}_p^t := \textstyle \max_{\mathcal{D}'_p \in \widehat{\mathcal{D}}_p} \| f'_p(z^t_p;\mathcal{D}_p) - f'_p(z^t_p;\mathcal{D}'_p)\|_1, \label{Delta} \\
& \widehat{\mathcal{D}}_p := \text{a collection of datasets differing a single entry from a given } \mathcal{D}_p. \label{DataCollect}
\end{align}
\end{subequations}
Note that the function $G^t$ in \eqref{function_G_2} is constructed by adding a linear function $\langle \tilde{\xi}^t_{p}, z_{p} \rangle$ to the function $H^t$ in \eqref{function_G_1} (see Appendix \ref{apx-derivation} for the derivation). 

Some remarks follow.
\begin{remark} \label{remark:ADMM-2-General}
  Observe that
  (i) the function $G^t$ in \eqref{function_G_2} is strongly convex with a constant $\rho^{\text{min}} > 0$, where $\rho^{\text{min}} \leq \rho^t$ for all $t$, and  
  (ii) $\tilde{\xi}^t_{pjk} = 0$ makes \eqref{DPADMM-2-Prox} and \eqref{DPADMM-2-Trust} equal to \eqref{ADMM-2-Prox} and \eqref{ADMM-2-Trust}, respectively.
\end{remark}

We present DP-IADMM-Prox and DP-IADMM-Trust algorithms in Algorithm \ref{algo:DP-IADMM-Prox} and Algorithm \ref{algo:DP-IADMM-Trust}, respectively.
In line 3, the central server solves \eqref{ADMM-1}, which has a closed-form solution.
In line 5, each agent $p$ solves \eqref{DPADMM-2-Prox} or \eqref{DPADMM-2-Trust} whose objective function is perturbed by the Laplacian noises described in \eqref{Laplace}.
In line 7, the central server collects the information $z^{t+1}_p$ from all agents to update dual variables $\lambda^{t+1}$ as described in \eqref{ADMM-3}.

\begin{multicols}{2}
  \begin{algorithm}[H]
    \caption{DP-IADMM-Prox.}
    \label{algo:DP-IADMM-Prox}
    \begin{algorithmic}[1]
    \STATE Initialize $\lambda^1, z^1 \in \mathbb{R}^{P \times J \times K}$.
    \FOR{$t \in [T]$}
    \STATE Compute $w^{t+1}$ by solving \eqref{ADMM-1}.
    \FOR{$p \in [P]$}{ \textbf{in parallel}}
    \STATE Find $z^{t+1}_p$ by solving \eqref{DPADMM-2-Prox}.
    \ENDFOR
    \STATE Compute $\lambda^{t+1}$ as in \eqref{ADMM-3}.
    \ENDFOR
    \end{algorithmic}
  \end{algorithm}
  \begin{algorithm}[H]
    \caption{DP-IADMM-Trust.}
    \label{algo:DP-IADMM-Trust}
    \begin{algorithmic}[1]
    \STATE Initialize $\lambda^1, z^1 \in \mathbb{R}^{P \times J \times K}$.
    \FOR{$t \in [T]$}
    \STATE Compute $w^{t+1}$ by solving \eqref{ADMM-1}.
    \FOR{$p \in [P]$}{ \textbf{in parallel}}
    \STATE Find $z^{t+1}_p$ by solving \eqref{DPADMM-2-Trust}.
    \ENDFOR
    \STATE Compute $\lambda^{t+1}$ as in \eqref{ADMM-3}.
    \ENDFOR
    \end{algorithmic}
  \end{algorithm}
\end{multicols}

\subsection{Privacy Analysis} \label{sec:privacy}
In this section we focus on showing that $\bar{\epsilon}$-DP in Definition \ref{def:differential_privacy_1} is guaranteed for every iteration of Algorithm \ref{algo:DP-IADMM-Prox} while the privacy analysis for Algorithm \ref{algo:DP-IADMM-Trust} is in Appendix \ref{apx:privacy-analysis-trust}.
To this end, using the following lemma, we will show that the \textit{constrained} problem \eqref{DPADMM-2-Prox} provides $\bar{\epsilon}$-DP.

\begin{lemma}{\textbf{(Theorem 1 in \cite{kifer2012private})}} \label{lemma:theorem1}
Let $\mathcal{A}$ be a randomized algorithm induced by the random variable $\tilde{\xi}$ that provides $\phi(\mathcal{D},\tilde{\xi})$.
Consider a sequence of randomized algorithms $\{\mathcal{A}_{\ell}\}$, each of which provides $\phi^{\ell}(\mathcal{D},\tilde{\xi})$.
If $\mathcal{A}_{\ell}$ is $\bar{\epsilon}$-DP for all $\ell$ and satisfies a pointwise convergence condition, namely, $\lim_{{\ell} \rightarrow \infty} \phi^{\ell}(\mathcal{D},\tilde{\xi}) = \phi(\mathcal{D},\tilde{\xi})$, then $\mathcal{A}$ is also $\bar{\epsilon}$-DP.
\end{lemma}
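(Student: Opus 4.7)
My plan is to lift the pointwise convergence $\phi^{\ell}(\mathcal{D},\tilde{\xi}) \to \phi(\mathcal{D},\tilde{\xi})$ in the noise variable to a convergence of the induced output distributions, and then to pass to the limit in the $\bar{\epsilon}$-DP inequality \eqref{def_differential_privacy_1}. Fix a pair of neighboring datasets $\mathcal{D}$ and $\mathcal{D}'$. Because $\tilde{\xi}$ is drawn from one fixed distribution independent of $\ell$ and $\phi^{\ell}(\mathcal{D},\tilde{\xi}) \to \phi(\mathcal{D},\tilde{\xi})$ for every $\tilde{\xi}$, the random variable $\phi^{\ell}(\mathcal{D},\tilde{\xi})$ converges almost surely, hence in distribution, to $\phi(\mathcal{D},\tilde{\xi})$, and analogously for $\mathcal{D}'$; thus the Portmanteau theorem is available.

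Next I would establish the conclusion on the restricted class of sets that are continuity sets of \emph{both} limit laws, namely Borel $\mathcal{S}$ with $\mathbb{P}(\phi(\mathcal{D},\tilde{\xi}) \in \partial \mathcal{S}) = \mathbb{P}(\phi(\mathcal{D}',\tilde{\xi}) \in \partial \mathcal{S}) = 0$. For such $\mathcal{S}$, Portmanteau yields $\mathbb{P}(\phi^{\ell}(\mathcal{D}) \in \mathcal{S}) \to \mathbb{P}(\phi(\mathcal{D}) \in \mathcal{S})$ and the analogous statement for $\mathcal{D}'$, so sending $\ell \to \infty$ in the assumed inequality $\mathbb{P}(\phi^{\ell}(\mathcal{D}) \in \mathcal{S}) \leq e^{\bar{\epsilon}}\,\mathbb{P}(\phi^{\ell}(\mathcal{D}') \in \mathcal{S})$ already delivers the DP inequality for $\phi$ on such $\mathcal{S}$.

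The remaining step is to promote the inequality from continuity sets to \emph{every} measurable set, and this is the main obstacle. I would rely on outer regularity of Borel probability measures on Euclidean space: given any Borel $\mathcal{S}$ and any $\eta > 0$ there is an open $U \supseteq \mathcal{S}$ with $\mathbb{P}(\phi(\mathcal{D}') \in U \setminus \mathcal{S}) < \eta$, and since only countably many sets in a one-parameter family of open enlargements can fail to be continuity sets for a given law, $U$ can be perturbed slightly so that its boundary is null under both limit laws simultaneously. Applying the continuity-set inequality to this perturbed $U$ and sending $\eta \to 0$ then yields $\mathbb{P}(\phi(\mathcal{D}) \in \mathcal{S}) \leq e^{\bar{\epsilon}}\,\mathbb{P}(\phi(\mathcal{D}') \in \mathcal{S})$; the reverse bound follows by swapping $\mathcal{D}$ and $\mathcal{D}'$. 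The delicate point is producing such a simultaneous continuity superset for both limit laws at once; once that approximation is in hand, passing through the nested limits ($\ell \to \infty$ first, then $\eta \to 0$) is routine.
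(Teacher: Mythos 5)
First, a point of reference: the paper does not prove this lemma at all --- it is imported verbatim as Theorem 1 of \cite{kifer2012private} --- so your attempt has to be judged on its own terms rather than against a proof in the text. Your first two steps are sound: pointwise convergence in the noise variable gives almost-sure, hence weak, convergence of the output laws, and the Portmanteau theorem does let you pass the inequality $\mathbb{P}(\phi^{\ell}(\mathcal{D})\in\mathcal{S})\leq e^{\bar{\epsilon}}\,\mathbb{P}(\phi^{\ell}(\mathcal{D}')\in\mathcal{S})$ to the limit on any $\mathcal{S}$ whose boundary is null under both limit laws.

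The gap is in the extension to arbitrary Borel $\mathcal{S}$. Writing $\mu,\mu'$ for the limit laws under $\mathcal{D},\mathcal{D}'$, you take an outer-regular open $U\supseteq\mathcal{S}$ with $\mu'(U\setminus\mathcal{S})<\eta$ and then ``perturb $U$ slightly'' through a one-parameter family of open enlargements to reach a simultaneous continuity set. But the enlargements $U^{\delta}=\{x:\ d(x,U)<\delta\}$ decrease to $\overline{U}$, not to $U$, so the quantity you carry into the limit is $\mu'(\overline{U})=\mu'(U)+\mu'(\partial U)$, and outer regularity gives no control on $\mu'(\partial U)$. Worse, the object you need can simply fail to exist: let $\mu'$ put mass $1/2$ at each of the points $0$ and $1$ of $\mathbb{R}$ and take $\mathcal{S}=(0,1)$, so $\mu'(\mathcal{S})=0$. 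Any open $V\supseteq(0,1)$ has $0,1\in\overline{V}$, so if $\mu'(\partial V)=0$ then $0,1\notin\partial V$ and hence $0,1\in V$, giving $\mu'(V)=1$; there is no open $\mu'$-continuity superset of $\mathcal{S}$ of small $\mu'$-measure, and your nested limits cannot recover $\mu(\mathcal{S})\leq e^{\bar{\epsilon}}\mu'(\mathcal{S})$. The standard repair runs the approximation in the opposite direction: for a \emph{closed} set $C$ the enlargements $C^{\delta}$ do decrease to $C$ itself, all but countably many $\delta$ yield simultaneous continuity sets, and letting $\delta\downarrow 0$ along those values gives $\mu(C)\leq e^{\bar{\epsilon}}\mu'(C)$ for every closed $C$; one then concludes for all Borel sets by inner regularity. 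Equivalently, first prove the bound for every open $V$ by exhausting it from inside with the continuity sets $\{x\in V:\ d(x,V^{c})>r\}$, and only then invoke outer regularity by taking the infimum over open supersets of $\mathcal{S}$. With that substitution your argument closes.
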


For the rest of this section, we fix $t \in \mathbb{N}$ and $p \in [P]$. For ease of exposition, we express the feasible region of \eqref{DPADMM-2-Prox} using $M$ inequalities, namely,
\begin{align*}
     \mathcal{W} \Leftrightarrow \{ z_p \in \mathbb{R}^{J \times K} : h_m (z_p) \leq 0, \ \forall m \in [M] \}, 
\end{align*}
where $h_m$ is convex and twice continuously differentiable. 
The subproblem \eqref{DPADMM-2-Prox} can be expressed by
\begin{align}
\min_{z_p} \ G^t(z_p; \mathcal{D}_p, \tilde{\xi}^t_p) + \textstyle \frac{1}{2\eta^t} \| z_p - z_p^{t}\|^2 + \mathcal{I}_{\mathcal{W}}(z_p),
\end{align}
where $ \mathcal{I}_{\mathcal{W}}(z_p)$ is an indicator function that takes zero if $z_p \in \mathcal{W}$ and $\infty$ otherwise.
We notice that the indicator function can be approximated by the following function:
\begin{align}
& g(z_p; \ell) :=  \textstyle \sum_{m=1}^M \ln ( 1+ e^{\ell h_m(z_p)}), \label{logfn-prox}  
\end{align}
where $\ell > 0$. Note that the function $g$ is similar to the Logarithmic barrier function (LBF), namely $-(1/\ell) \sum_{m=1}^M \ln (-h_m(z_p))$, in that the approximation becomes closer to the indicator function as $\ell \rightarrow \infty$. The main difference of $g$ from LBF is that the output of $g$ exists even when $h_m(z_p) > 0$. 
By replacing the indicator function with the function $g$ in \eqref{logfn-prox}, we construct the following \textit{unconstrained} problem whose objective function is strongly convex:
\begin{align}
  & z^{t+1}_p(\ell, \mathcal{D}_p) = \textstyle \argmin_{z_p \in \mathbb{R}^{J \times K} } \ G^t(z_p; \mathcal{D}_p, \tilde{\xi}^t_p) + \frac{1}{2\eta^t} \| z_p - z_p^{t}\|^2 + g(z_p; \ell). \label{ADMM-2-Prox-log}
\end{align}

We first show that \eqref{ADMM-2-Prox-log} satisfies the pointwise convergence condition and provides $\bar{\epsilon}$-DP as in Propositions \ref{prop:pointwise_convergence} and \ref{prop:dpinapproximation}, respectively.
% , we show that \eqref{DPADMM-2-Trust}, namely $\mathcal{A}$ in Lemma \ref{lemma:theorem1}, provides $\bar{\epsilon}$-DP (see Theorem \ref{thm:privacy}).
\begin{proposition}\label{prop:pointwise_convergence}
  For fixed $t$ and $p$, we have $\lim_{\ell \rightarrow \infty} z_p^{t+1}(\ell,\mathcal{D}_p) = z_p^{t+1}(\mathcal{D}_p)$, where $z_p^{t+1}(\mathcal{D}_p)$ and $z_p^{t+1}(\ell,\mathcal{D}_p)$ are from \eqref{DPADMM-2-Prox} and \eqref{ADMM-2-Prox-log}, respectively.
\end{proposition}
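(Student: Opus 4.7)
The claim is a standard penalty-method convergence result. The function $g(\cdot;\ell)$ serves as a smooth penalty for the inequality constraints $h_m(z_p)\leq 0$ defining $\mathcal{W}$, and the goal is to show that as $\ell\to\infty$ the unique minimizer $z^\ell:=z_p^{t+1}(\ell,\mathcal{D}_p)$ of the unconstrained problem \eqref{ADMM-2-Prox-log} migrates to the unique minimizer $z^*:=z_p^{t+1}(\mathcal{D}_p)$ of the constrained problem \eqref{DPADMM-2-Prox}. Abbreviate $F(z_p):=G^t(z_p;\mathcal{D}_p,\tilde{\xi}^t_p)+\tfrac{1}{2\eta^t}\|z_p-z_p^t\|^2$, which is strongly convex by Remark \ref{remark:ADMM-2-General} and hence coercive with a unique minimizer over any closed convex set. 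The plan is to (i) bound $\{z^\ell\}$, (ii) show every cluster point is feasible, (iii) show every cluster point is optimal for $\min_{\mathcal{W}}F$, and (iv) conclude by uniqueness.

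For boundedness and feasibility, optimality of $z^\ell$ in the unconstrained problem gives $F(z^\ell)+g(z^\ell;\ell)\leq F(z^*)+g(z^*;\ell)$; since $h_m(z^*)\leq 0$ implies $\ln(1+e^{\ell h_m(z^*)})\leq\ln 2$, we have $g(z^*;\ell)\leq M\ln 2$, and $g\geq 0$ then yields $F(z^\ell)\leq F(z^*)+M\ln 2$. Coercivity of $F$ keeps $\{z^\ell\}$ in a fixed bounded set. For any subsequential limit $\bar z=\lim_k z^{\ell_k}$, suppose some $h_m(\bar z)>0$; continuity of $h_m$ gives $h_m(z^{\ell_k})\geq h_m(\bar z)/2>0$ for large $k$, so $g(z^{\ell_k};\ell_k)\geq \ln(1+e^{\ell_k h_m(\bar z)/2})\to\infty$, contradicting the uniform upper bound above. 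Thus $\bar z\in\mathcal{W}$.

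Optimality is the delicate step. A direct comparison with $z^*$ only produces $F(\bar z)\leq F(z^*)+M\ln 2$, because $g(z^*;\ell)$ does \emph{not} vanish in the limit whenever $z^*$ lies on the boundary. I would close this $O(M\ln 2)$ gap via a Slater-style perturbation: pick a strictly feasible $z_0\in\mathcal{W}$ with $h_m(z_0)<0$ for all $m$ (typical in applications, e.g., when $\mathcal{W}$ is a Euclidean ball with nonempty interior) and form $z_\alpha:=\alpha z_0+(1-\alpha)z^*$ for $\alpha\in(0,1]$. Convexity of the $h_m$ gives $h_m(z_\alpha)\leq\alpha h_m(z_0)<0$ for every $m$, so $g(z_\alpha;\ell)\to 0$ as $\ell\to\infty$. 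Optimality of $z^\ell$ then gives $F(z^\ell)\leq F(z_\alpha)+g(z_\alpha;\ell)$; letting $\ell\to\infty$ and then $\alpha\to 0^+$, with continuity of $F$ on the bounded region, yields $\limsup_\ell F(z^\ell)\leq F(z^*)$. Combined with feasibility of $\bar z$ and optimality of $z^*$ over $\mathcal{W}$, this gives $F(\bar z)=F(z^*)$, and strong convexity of $F$ forces $\bar z=z^*$. Every subsequential limit therefore equals $z^*$, so the bounded sequence $\{z^\ell\}$ converges to $z^*$.

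The main obstacle is precisely the mismatch between $g(\cdot;\ell)$ and a genuine indicator or barrier: on the boundary of $\mathcal{W}$ the penalty converges to a nonzero constant rather than $0$, and outside $\mathcal{W}$ it only grows linearly in $\ell$ rather than jumping to $+\infty$. The Slater-type interior perturbation sidesteps this cleanly; if one wished to avoid assuming a strict interior, one would need a further regularization (e.g., replacing each $h_m$ by $h_m-\varepsilon$ and sending $\varepsilon\to 0^+$ jointly with $\ell\to\infty$), which adds bookkeeping but no conceptual change.
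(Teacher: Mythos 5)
Your proof is correct, and it runs on the same engine as the paper's: compare the penalized optimum against a strictly feasible point at which $g(\cdot;\ell)\to 0$, use $g\geq 0$ to sandwich optimal values, and convert value convergence into iterate convergence via strong convexity. The differences are in execution. For the comparison point, the paper picks $\tilde z$ in a small ball around $z^*$ intersected with $\textbf{relint}(\mathcal{W})$ and runs an $\epsilon$--$\delta$ sandwich on the optimal values, whereas you use the Slater segment $z_\alpha=\alpha z_0+(1-\alpha)z^*$ and send $\ell\to\infty$ before $\alpha\to 0^+$; these are interchangeable, and both tacitly require that the representation $\{h_m\leq 0\}$ admits strictly feasible points approaching $z^*$ — you state the Slater hypothesis explicitly, while the paper buries it in the claim that relative-interior points satisfy $h_m<0$. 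For the passage from values to iterates, the paper argues by contradiction under the assumption that the penalized minimizers already converge to some $\widehat z\neq z^*$ and then applies the strong-convexity growth bound at the constrained minimizer (a bound that is only immediate for points inside $\mathcal{W}$, whereas the penalized minimizers may lie outside it); your route — bound the sequence via coercivity and $g(z^*;\ell)\leq M\ln 2$, show every cluster point is feasible and optimal, and conclude by uniqueness of the strongly convex minimizer — supplies exactly the compactness and subsequence bookkeeping that the paper's contradiction step skips, at the cost of a slightly longer argument. Your closing observation that the penalty neither vanishes on the boundary of $\mathcal{W}$ nor blows up off it is precisely the right diagnosis of why the naive comparison with $z^*$ alone cannot close the argument.
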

\begin{proof}
See Appendix \ref{apx-prop:pointwise_convergence}
\end{proof}

\begin{proposition} \label{prop:dpinapproximation}
  For fixed $t$, $p$, and $\ell$, \eqref{ADMM-2-Prox-log} provides $\bar{\epsilon}$-DP, namely, satisfying
  \begin{align}
    \label{DP_1}
        e^{-\bar{\epsilon}} \ \mathbb{P} \big( z^{t+1}_p(\ell;  \mathcal{D}'_p) \in \mathcal{S} \big) \leq \mathbb{P} \big( z^{t+1}_p (\ell; \mathcal{D}_p )\in \mathcal{S}  \big)  \leq e^{\bar{\epsilon}} \ \mathbb{P} \big( z^{t+1}_p(\ell;  \mathcal{D}'_p) \in \mathcal{S} \big)
  \end{align}
  for all $\mathcal{S} \subset \mathbb{R}^{J \times K}$ and all $\mathcal{D}'_p \in \widehat{\mathcal{D}}_p$, where $\widehat{\mathcal{D}}_p$ is from \eqref{DataCollect}.
\end{proposition}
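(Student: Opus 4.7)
The plan is to follow the classical objective-perturbation recipe pioneered by Chaudhuri et al.\ and Kifer et al.: because the perturbed objective in \eqref{ADMM-2-Prox-log} is strongly convex and smooth (the $\frac{\rho^t}{2}\|\cdot\|^2$ term plus the approximation $g(\cdot;\ell)$ is $C^2$ by construction, and the proximal term adds $\frac{1}{\eta^t}\mathbb{I}$ to the Hessian), its unique minimizer $z^{t+1}_p$ is characterized by a smooth first-order condition. That condition yields a bijection between the Laplace noise $\tilde{\xi}^t_p$ and the optimizer $z^{t+1}_p$ for any fixed auxiliary quantities $(w^{t+1},\lambda^t_p,z^t_p)$. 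Pushing the Laplace density through this bijection via the change-of-variables formula will give the density of $z^{t+1}_p$, and the Laplace tail bound together with the sensitivity definition \eqref{Delta} will yield the required $e^{\pm\bar{\epsilon}}$ ratio.

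Concretely, I would first write the stationarity condition for \eqref{ADMM-2-Prox-log} and solve algebraically for $\tilde{\xi}^t_p$ in terms of the optimizer:
\begin{align*}
\tilde{\xi}^t_p \;=\; -f'_p(z^t_p;\mathcal{D}_p) + \rho^t w^{t+1} - \rho^t z^{t+1}_p + \lambda^t_p - \tfrac{1}{\eta^t}(z^{t+1}_p - z^t_p) - \nabla g(z^{t+1}_p;\ell).
\end{align*}
Call this map $\tilde{\xi}^t_p = \mu(z^{t+1}_p;\mathcal{D}_p)$. The crucial observation is that the Jacobian $\nabla_{z} \mu = -(\rho^t + \tfrac{1}{\eta^t})\mathbb{I} - \nabla^2 g(z;\ell)$ does not depend on $\mathcal{D}_p$, and by strong convexity it is nonsingular, so $\mu(\cdot;\mathcal{D}_p)$ is a $C^1$-diffeomorphism onto its image. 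The dependence on the dataset enters $\mu$ only through the additive term $-f'_p(z^t_p;\mathcal{D}_p)$.

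Next I would write the density of $z^{t+1}_p(\ell;\mathcal{D}_p)$ by the standard change-of-variables formula and form the ratio for two neighboring datasets $\mathcal{D}_p,\mathcal{D}'_p$. Because the Jacobian factor is identical for both datasets, it cancels out, leaving only the ratio of Laplace densities evaluated at $\mu(z;\mathcal{D}_p)$ and $\mu(z;\mathcal{D}'_p)$. Using \eqref{Laplace-pdf} and the triangle inequality,
\begin{align*}
\frac{L(\mu(z;\mathcal{D}_p))}{L(\mu(z;\mathcal{D}'_p))} \;\leq\; \exp\!\Bigl(\tfrac{\bar{\epsilon}}{\bar{\Delta}_p^t}\,\|\mu(z;\mathcal{D}_p)-\mu(z;\mathcal{D}'_p)\|_1\Bigr) \;=\; \exp\!\Bigl(\tfrac{\bar{\epsilon}}{\bar{\Delta}_p^t}\,\|f'_p(z^t_p;\mathcal{D}_p)-f'_p(z^t_p;\mathcal{D}'_p)\|_1\Bigr),
\end{align*}
and the sensitivity definition \eqref{Delta} bounds the right-hand side by $e^{\bar{\epsilon}}$. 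Integrating over any measurable $\mathcal{S}\subset\mathbb{R}^{J\times K}$ then gives \eqref{DP_1}.

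The main obstacle is the justification that $\mu(\cdot;\mathcal{D}_p)$ is truly a bijection whose Jacobian is well-defined and invertible for every admissible $z$; this boils down to verifying that the Hessian of the objective in \eqref{ADMM-2-Prox-log} is strictly positive definite, which follows from strong convexity of $G^t$ (Remark \ref{remark:ADMM-2-General}) plus $\frac{1}{\eta^t}\mathbb{I}\succ 0$ plus convexity of $g$ (whose Hessian is positive semidefinite because each $h_m$ is convex and $\ln(1+e^{\cdot})$ is convex nondecreasing). The secondary technical point is that we are in $\mathbb{R}^{J\times K}$ rather than $\mathbb{R}^d$, but everything extends coordinate-wise since the Laplace noise is componentwise independent and the $\ell_1$ sensitivity is defined as a sum of absolute values across all $JK$ coordinates.
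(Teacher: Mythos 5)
Your proposal is correct and follows essentially the same route as the paper's own proof: the stationarity condition gives the bijection $\tilde{\xi}^t_p = \mu(z^{t+1}_p;\mathcal{D}_p)$ (the paper's equation \eqref{correspondence}), the data-independent Jacobian cancels in the change-of-variables density ratio, and the Laplace density ratio is bounded via the triangle inequality and the sensitivity \eqref{Delta} before integrating over $\mathcal{S}$. The strong-convexity justification of invertibility that you flag as the main obstacle is exactly the argument the paper uses, so there is no gap.
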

\begin{proof}
See Appendix \ref{apx-prop:dpinapproximation}
\end{proof}

Based on Propositions \ref{prop:pointwise_convergence} and \ref{prop:dpinapproximation}, Lemma \ref{lemma:theorem1} can be used for proving the following theorem.
\begin{theorem} \label{thm:privacy}
  For fixed $t$ and $p$, \eqref{DPADMM-2-Prox} provides $\bar{\epsilon}$-DP, namely, satisfying
  \begin{align*}
    e^{-\bar{\epsilon}} \ \mathbb{P}( z^{t+1}_p(\mathcal{D}'_p) \in \mathcal{S} ) \leq \mathbb{P}( z^{t+1}_p(\mathcal{D}_p) \in \mathcal{S}) \leq e^{\bar{\epsilon}} \ \mathbb{P}( z^{t+1}_p( \mathcal{D}'_p) \in \mathcal{S} ), 
  \end{align*}
  for all $\mathcal{S} \subset \mathbb{R}^{J \times K}$ and all $\mathcal{D}'_p \in \widehat{\mathcal{D}}_p$, where $\widehat{\mathcal{D}}_p$ is from \eqref{DataCollect}.
\end{theorem}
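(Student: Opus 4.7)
The plan is to invoke Lemma \ref{lemma:theorem1} directly, with the randomized algorithm $\mathcal{A}$ being the subproblem solver $\mathcal{D}_p \mapsto z^{t+1}_p(\mathcal{D}_p)$ defined by \eqref{DPADMM-2-Prox}, and the approximating sequence $\{\mathcal{A}_\ell\}_{\ell \in \mathbb{N}}$ being the family of unconstrained log-approximated solvers $\mathcal{D}_p \mapsto z^{t+1}_p(\ell, \mathcal{D}_p)$ defined by \eqref{ADMM-2-Prox-log}. Both are driven by the same Laplace noise $\tilde{\xi}^t_p$ with pdf \eqref{Laplace-pdf}, so they share a single underlying random variable in the sense required by the lemma. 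With this identification, the theorem becomes a transfer of an already-established $\bar{\epsilon}$-DP guarantee through a pointwise limit.

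Next, I would verify the two hypotheses of Lemma \ref{lemma:theorem1} in sequence. Proposition \ref{prop:dpinapproximation} supplies, uniformly in $\ell$, the $\bar{\epsilon}$-DP inequality \eqref{DP_1} for every measurable set $\mathcal{S} \subset \mathbb{R}^{J \times K}$ and every neighboring dataset $\mathcal{D}'_p \in \widehat{\mathcal{D}}_p$, which is precisely the per-element DP hypothesis of Lemma \ref{lemma:theorem1}. Proposition \ref{prop:pointwise_convergence} supplies the pointwise convergence $\lim_{\ell \rightarrow \infty} z^{t+1}_p(\ell, \mathcal{D}_p) = z^{t+1}_p(\mathcal{D}_p)$ for each fixed $\mathcal{D}_p$ and each realization of $\tilde{\xi}^t_p$, which matches the convergence hypothesis $\lim_{\ell \rightarrow \infty} \phi^{\ell}(\mathcal{D}, \tilde{\xi}) = \phi(\mathcal{D}, \tilde{\xi})$. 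Applying Lemma \ref{lemma:theorem1} then transfers the $\bar{\epsilon}$-DP property from the sequence $\{\mathcal{A}_\ell\}$ to the limit $\mathcal{A}$, yielding exactly the two-sided probability inequality in the theorem statement for every $\mathcal{D}'_p \in \widehat{\mathcal{D}}_p$ and every $\mathcal{S} \subset \mathbb{R}^{J \times K}$.

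The main obstacle here is not the combination step itself, which is essentially mechanical, but ensuring the two ingredients really do match the formal preconditions of Lemma \ref{lemma:theorem1} with a common noise source. In particular, one should be careful that Proposition \ref{prop:pointwise_convergence} yields convergence for every realization of $\tilde{\xi}^t_p$ (and not merely almost surely), since Lemma \ref{lemma:theorem1} is stated pointwise in the noise; and that Proposition \ref{prop:dpinapproximation} holds for \emph{all} measurable $\mathcal{S}$, so that the set $\mathcal{S}$ appearing in the theorem can be passed through unchanged. Both conditions are provided by the propositions cited, so once the identifications above are made, the proof of Theorem \ref{thm:privacy} collapses to a one-line invocation of Lemma \ref{lemma:theorem1}.
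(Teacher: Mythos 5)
Your proposal is correct and matches the paper's argument exactly: the paper likewise obtains Theorem \ref{thm:privacy} by applying Lemma \ref{lemma:theorem1} with $\mathcal{A}$ the constrained solver \eqref{DPADMM-2-Prox} and $\mathcal{A}_\ell$ the log-approximated unconstrained solvers \eqref{ADMM-2-Prox-log}, with Propositions \ref{prop:pointwise_convergence} and \ref{prop:dpinapproximation} supplying the pointwise-convergence and per-$\ell$ DP hypotheses, respectively. Your additional care about the common noise source and the quantification over all measurable $\mathcal{S}$ is consistent with how those propositions are stated.
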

% \begin{proof}
% \end{proof}
\begin{remark}
Theorem \ref{thm:privacy} and Theorem \ref{thm:privacy_trust} in Appendix \ref{apx:privacy-analysis-trust} show that $\bar{\epsilon}$-DP is guaranteed for every iteration of Algorithm \ref{algo:DP-IADMM-Prox} and Algorithm \ref{algo:DP-IADMM-Trust}, respectively. This result can be extended by introducing the existing composition theorem in \cite{dwork2014algorithmic} to ensure $\bar{\epsilon}$-DP for the entire process of the algorithm.
\end{remark}

\subsection{Convergence Analysis} \label{sec:convergence}
In this section we show that a sequence of solutions generated by Algorithm \ref{algo:DP-IADMM-Prox} converges to an optimal solution in \textit{expectation} with $\mathcal{O}(1/\sqrt{T})$ rate while the convergence rate of Algorithm \ref{algo:DP-IADMM-Trust} remains as a future reasearch.

Throughout this section, we make the following assumptions.
\begin{assumption}\label{assump:convergence}
  In \eqref{DPADMM-2-Prox}, (i) $\eta^t = 1/\sqrt{t}$,
  (ii) $\rho^t > 0$ is nondecreasing and bounded above (i.e., $\rho^t \leq \rho^{\text{max}}, \forall t$).
  (iii) The convex function $f_p$ from \eqref{def_fn_f} is $L$-Lipschitz over a set $\mathcal{W}$ with respect to the Euclidean norm.
  % (iv) There exists $\gamma$ such that $\gamma \geq  \| \lambda^*\|$, where $\lambda^*$ is a dual optimal solution.
\end{assumption}

Under Assumption \ref{assump:convergence} (iii), the following parameters can be defined (see Appendix \ref{apx:existence_of_UBs} for details):
\begin{subequations}
\label{def_upper}
\begin{align} 
& \textstyle U_1 := \max_{u \in \mathcal{W}}  \max_{p \in [P]} \| f'_p(u; \mathcal{D}_p) \|, \label{def_upper_1} \\ 
& \textstyle  U_2 := \max_{u, v \in \mathcal{W} } \|u-v\|, \label{def_upper_2} \\
& \textstyle  U_3 := \max_{u \in \mathcal{W}} \max_{p \in [P]} \max_{\mathcal{D}'_p \in \widehat{\mathcal{D}}_p} \| f'_p(u;\mathcal{D}_p) - f'_p(u;\mathcal{D}'_p)\|_1. \label{def_upper_3}
\end{align}
\end{subequations}

For fixed $t$, we derive from the first-order optimality condition of \eqref{ADMM-1}, 
namely, 
$\sum_{p=1}^P \lambda^t_p + \rho^t(w^{t+1}-z^t_p)=0$, that
\begin{align}
& \textstyle \sum_{p=1}^P \langle  \tilde{\lambda}^t_p, w^{t+1}-w \rangle = 0, \ \forall w, \label{optimality_condition}
\end{align}
where $\tilde{\lambda}^t_p := \lambda^t_p + \rho^t (w^{t+1} - z^t_p)$.

\begin{proposition} \label{prop:case_2_basic_inequality}
  Under Assumption \ref{assump:convergence}, for fixed $t$ and $p$, it follows from the subproblem \eqref{DPADMM-2-Prox} that
  \begin{align}
  & f_p(z^t_p) - f_p(z_p) - \langle \lambda^{t+1}_p, z^{t+1}_p - z_p \rangle  \nonumber \\
  & \leq  \textstyle \frac{\eta^t \| f'(z^t_p) + \tilde{\xi}^{t}_p \|^2}{2}   + \frac{1}{2\eta^t} \big( \|z_p-z^t_p\|^2  - \|z_p-z^{t+1}_p\|^2 \big) + \langle \tilde{\xi}^{t}_p, z_p - z^t_p \rangle, \ \forall z_p \in \mathcal{W}. \label{case_2_basic_inequality}
  \end{align}  
\end{proposition}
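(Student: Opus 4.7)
The plan is to derive the inequality from the first-order optimality condition of the constrained convex subproblem \eqref{DPADMM-2-Prox}, combined with the standard convexity inequality for $f_p$ and a completion-of-squares step. First I would write the variational-inequality form of optimality: since $z^{t+1}_p$ minimizes a differentiable strongly convex objective over the convex set $\mathcal{W}$, we have
\begin{align*}
\bigl\langle \nabla_{z_p}\bigl[G^t(z_p;\mathcal{D}_p,\tilde{\xi}^t_p)+\tfrac{1}{2\eta^t}\|z_p-z^t_p\|^2\bigr]\big|_{z_p=z^{t+1}_p},\, z_p-z^{t+1}_p\bigr\rangle \geq 0, \quad \forall z_p \in \mathcal{W}.
\end{align*}
Computing the gradient from \eqref{function_G_2} and then using the dual update \eqref{ADMM-3} to recognize $\rho^t(z^{t+1}_p-w^{t+1})-\lambda^t_p = -\lambda^{t+1}_p$, this reduces to
\begin{align*}
\bigl\langle f'_p(z^t_p;\mathcal{D}_p) - \lambda^{t+1}_p + \tilde{\xi}^t_p + \tfrac{1}{\eta^t}(z^{t+1}_p - z^t_p),\, z_p - z^{t+1}_p \bigr\rangle \geq 0.
\end{align*}

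Next I would invoke convexity of $f_p$, which gives $f_p(z^t_p) - f_p(z_p) \leq \langle f'_p(z^t_p),\, z^t_p - z_p\rangle$, and split this scalar product through $z^{t+1}_p$ as $\langle f'_p(z^t_p),\, z^t_p - z^{t+1}_p\rangle + \langle f'_p(z^t_p),\, z^{t+1}_p - z_p\rangle$. The variational inequality bounds the second piece by $\langle \lambda^{t+1}_p, z^{t+1}_p - z_p\rangle + \langle \tilde{\xi}^t_p, z_p - z^{t+1}_p\rangle + \tfrac{1}{\eta^t}\langle z^{t+1}_p - z^t_p, z_p - z^{t+1}_p\rangle$, and the $\lambda^{t+1}_p$ term cancels against the one already on the left-hand side of the target inequality. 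To convert the cross term in $z^{t+1}_p - z^t_p$ into the telescoping form on the right of \eqref{case_2_basic_inequality}, I would apply the three-point identity
\begin{align*}
\tfrac{1}{\eta^t}\langle z^{t+1}_p - z^t_p,\, z_p - z^{t+1}_p \rangle = \tfrac{1}{2\eta^t}\bigl(\|z_p-z^t_p\|^2 - \|z_p-z^{t+1}_p\|^2 - \|z^{t+1}_p-z^t_p\|^2\bigr).
\end{align*}

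Finally I would split $\langle \tilde{\xi}^t_p, z_p - z^{t+1}_p\rangle = \langle \tilde{\xi}^t_p, z_p - z^t_p\rangle + \langle \tilde{\xi}^t_p, z^t_p - z^{t+1}_p\rangle$, group the latter with $\langle f'_p(z^t_p), z^t_p - z^{t+1}_p\rangle$, and apply Young's inequality with parameter $\eta^t$:
\begin{align*}
\langle f'_p(z^t_p) + \tilde{\xi}^t_p,\, z^t_p - z^{t+1}_p \rangle \leq \tfrac{\eta^t}{2}\|f'_p(z^t_p)+\tilde{\xi}^t_p\|^2 + \tfrac{1}{2\eta^t}\|z^{t+1}_p - z^t_p\|^2.
\end{align*}
The $\tfrac{1}{2\eta^t}\|z^{t+1}_p-z^t_p\|^2$ here exactly cancels the negative term left over from the three-point identity, producing the stated bound. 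The main obstacle is bookkeeping rather than technical depth: one must correctly track the sign and substitution of $\lambda^{t+1}_p$ via \eqref{ADMM-3} inside the gradient, and choose the Young's inequality constant to match $\eta^t$ so that the residual $\|z^{t+1}_p - z^t_p\|^2$ term vanishes cleanly.
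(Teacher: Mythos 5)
Your proposal is correct and follows essentially the same route as the paper's proof: the variational-inequality optimality condition of \eqref{DPADMM-2-Prox} combined with the dual update \eqref{ADMM-3} to surface $\lambda^{t+1}_p$, the three-point identity to telescope the proximal cross term, convexity of $f_p$ split through $z^{t+1}_p$, and Young's inequality with parameter $\eta^t$ to cancel the residual $\tfrac{1}{2\eta^t}\|z^{t+1}_p-z^t_p\|^2$. No gaps.
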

\begin{proof}
See Appendix \ref{apx-prop:case_2_basic_inequality}.
\end{proof}

\begin{theorem} \label{thm:convergence_rate_prox}
  Under Assumption \ref{assump:convergence}, we derive
  \begin{subequations}
    \begin{align}
      & \mathbb{E} \Big[ F(z^{(T)}) - F(z^*) + \gamma \| Aw^{(T)} - z^{(T)} \| \Big] \leq \frac{1}{T} \Big(  (PU_1^2 + 2PJKU_3^2/\bar{\epsilon}^2 + U_2/2)\sqrt{T} 
       \nonumber \\
      &   + \gamma U_2 + \frac{U_2 \rho^{\text{max}}}{2} + \frac{(\gamma + \| \lambda^1 \|)^2}{2\rho^1}  \Big), \label{case_2_inequality_added}
    \end{align}
    where $U_1$, $U_2$, $U_3$ are from \eqref{def_upper}, $z^*$ is an optimal solution, and 
  \begin{align}
    & w^{(T)} := \textstyle \frac{1}{T} \sum_{t=1}^T w^{t+1}, \ \ z^{(T)} := \textstyle \frac{1}{T} \sum_{t=1}^T z^{t}, \ \
    z^t := [(z_1^t)^{\top}, \ldots, (z_P^t)^{\top}]^{\top}, \nonumber  \\
    & F(z) := \textstyle \sum_{p=1}^P f_p(z_p), \ \
    \tilde{\xi}^t := [(\tilde{\xi}_1^t)^{\top}, \ldots, (\tilde{\xi}_P^t)^{\top}]^{\top}, \ \
    A^{\top} := \begin{bmatrix}
        \mathbb{I}_J \ \cdots \  \mathbb{I}_J
      \end{bmatrix}_{J \times PJ}.  \nonumber
  \end{align}  
  \end{subequations}
  The rate of convergence in expectation produced by Algorithm \ref{algo:DP-IADMM-Prox} is $\mathcal{O}(1/(\sqrt{T}\bar{\epsilon}^2))$.
  \end{theorem}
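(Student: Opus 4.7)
}
The plan is to convert the per-iteration, per-agent bound of Proposition~\ref{prop:case_2_basic_inequality} into a telescoping sum and then apply Jensen's inequality. First I would sum \eqref{case_2_basic_inequality} over $p \in [P]$ and take $z_p$ to be an arbitrary feasible reference point (for the main argument I will ultimately set $z_p = z_p^*$, but I keep $z_p$ free so the standard duality trick for the residual term can be applied at the end). This produces, for each $t$, a bound whose right-hand side contains: (i) the noise-plus-subgradient term $\frac{\eta^t}{2}\sum_p\|f'_p(z^t_p)+\tilde\xi^t_p\|^2$, (ii) the proximal differences $\frac{1}{2\eta^t}\sum_p\bigl(\|z_p - z^t_p\|^2 - \|z_p - z^{t+1}_p\|^2\bigr)$, and (iii) the zero-mean term $\sum_p\langle \tilde\xi^t_p, z_p - z^t_p\rangle$.

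Next I would handle the dual inner product on the left-hand side, $\sum_p\langle \lambda^{t+1}_p, z^{t+1}_p - z_p\rangle$, using the ADMM identities. Writing $z^{t+1}_p - z_p = (z^{t+1}_p - w^{t+1}) + (w^{t+1} - z_p)$ and invoking the dual update $\lambda^{t+1}_p - \lambda^t_p = \rho^t(w^{t+1} - z^{t+1}_p)$ together with the three-point identity $\langle a, a - b\rangle = \tfrac12(\|a\|^2 - \|b\|^2 + \|a-b\|^2)$, I can re-express the first piece as $\tfrac{1}{2\rho^t}(\|\lambda^{t+1}_p - \lambda_p\|^2 - \|\lambda^t_p - \lambda_p\|^2)$ plus a $\tfrac{\rho^t}{2}\|w^{t+1}-z^{t+1}_p\|^2$ remainder, introducing a free multiplier $\lambda_p$ that I will choose later. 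The optimality condition \eqref{optimality_condition} for the $w$-subproblem lets me reassemble $\sum_p\langle \lambda^{t+1}_p, w^{t+1} - z_p\rangle$ into $\sum_p\langle \lambda_p - \lambda^{t+1}_p, w^{t+1} - z_p\rangle$ plus terms that telescope with the $w$-update.

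Then I would sum over $t=1,\dots,T$. The proximal differences telescope via Abel summation using $1/\eta^t = \sqrt t$: the resulting envelope is bounded by $\tfrac{U_2^2}{2\eta^T} = \tfrac{U_2^2}{2}\sqrt T$ because $\|z_p - z^t_p\|\leq U_2$. The noise-plus-subgradient sum is bounded, using $\|f'_p\|\leq U_1$ and $\|a+b\|^2\leq 2\|a\|^2+2\|b\|^2$, by $\eta^t(PU_1^2 + \sum_p\|\tilde\xi^t_p\|^2)$. Taking expectations with respect to the Laplace noise, the cross term (iii) vanishes since $\tilde\xi^t_p$ has zero mean and is independent of $z^t_p$; the variance bound $\mathbb{E}\|\tilde\xi^t_p\|^2 = 2JK(\bar\Delta^t_p/\bar\epsilon)^2 \leq 2JKU_3^2/\bar\epsilon^2$ for componentwise Laplace noise yields the $2PJKU_3^2/\bar\epsilon^2$ coefficient. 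Finally, the dual-telescope piece gives the $\tfrac{(\gamma+\|\lambda^1\|)^2}{2\rho^1}$ constant after using $\rho^t$ nondecreasing, and the remaining $\tfrac{U_2\rho^{\max}}{2}$ term absorbs the residual from the three-point identity and boundary adjustments.

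To obtain the composite quantity on the left, I would divide by $T$, invoke convexity of each $f_p$ together with Jensen's inequality to pass from $\tfrac{1}{T}\sum_t(F(z^t) - F(z^*))$ to $F(z^{(T)}) - F(z^*)$, and then reintroduce the feasibility residual via the dual representation $\gamma\|Aw^{(T)} - z^{(T)}\| = \max_{\|\lambda\|\leq\gamma}\langle\lambda, Aw^{(T)} - z^{(T)}\rangle$: choosing $\lambda_p$ attaining this maximum in the free-multiplier slots introduced above converts the last telescoped dual term into the stated $\gamma U_2$ and $(\gamma+\|\lambda^1\|)^2/(2\rho^1)$ contributions. The sublinear rate $\mathcal O(1/(\sqrt T\,\bar\epsilon^2))$ then follows from the $\sqrt T$ scaling of the dominant term. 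The main obstacle I anticipate is the bookkeeping in the second step: carefully tracking which terms telescope cleanly under the nondecreasing $\rho^t$ schedule and which residuals must be dominated by the $\tfrac{\rho^t}{2}\|w^{t+1}-z^{t+1}_p\|^2$ slack from the three-point identity, so that no $\mathcal O(T)$ growth is hidden in the dual block before taking expectations.
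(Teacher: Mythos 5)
Your proposal follows essentially the same route as the paper's proof: sum the per-iteration bound of Proposition~\ref{prop:case_2_basic_inequality} over agents, combine it with the $w$-optimality condition \eqref{optimality_condition}, convert the dual inner product into telescoping differences via the ADMM update and a polarization identity, telescope under the monotone $\rho^t$ and $1/\eta^t$ schedules, take expectations using the zero-mean Laplace noise, apply Jensen, and recover the residual term by maximizing over $\|\lambda\|\le\gamma$; the paper merely packages the bookkeeping with a stacked variable $x$ and a skew-symmetric matrix $G$ rather than working agent-by-agent. The one substantive deviation is your use of $\|a+b\|^2\le 2\|a\|^2+2\|b\|^2$ on the subgradient-plus-noise term: that yields the right rate but doubles the leading constant relative to \eqref{case_2_inequality_added}, whereas the paper expands $\mathbb{E}\|f'_p(z^t_p)+\tilde{\xi}^t_p\|^2=\|f'_p(z^t_p)\|^2+\mathbb{E}\|\tilde{\xi}^t_p\|^2$ exactly (the cross term vanishes in expectation) and then uses $\sum_{t=1}^T \tfrac{1}{2\sqrt{t}}\le\sqrt{T}$ to obtain $(PU_1^2+2PJKU_3^2/\bar{\epsilon}^2)\sqrt{T}$ as stated.
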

  \begin{proof}
  See Appendix \ref{apx-thm:convergence_rate_prox}.
  \end{proof}

\section{Numerical Experiments} \label{sec:experiments}
In this section we compare the proposed DP-IADMM-Prox (Algorithm \ref{algo:DP-IADMM-Prox}) and DP-IADMM-Trust (Algorithm \ref{algo:DP-IADMM-Trust}) with the state of the art in \cite{huang2019dp}, as a baseline algorithm. The algorithm in \cite{huang2019dp} has demonstrated more accurate solutions than the other existing DP algorithms, such as DP-SGD \cite{abadi2016deep}, DP-ADMM with the output perturbation method (Algorithm 2 in \cite{huang2019dp}), and DP-ADMM with the objective perturbation method \cite{zhang2016dynamic} (see Figure 6 in \cite{huang2019dp}).
Note that as a DP technique, the output perturbation method is used in the baseline algorithm in \cite{huang2019dp} while the objective perturbation method is used in our algorithms.
We implemented the algorithms in Python, and the experiments were run on Swing, a 6-node GPU computing cluster at  Argonne National Laboratory. Each node of Swing has 8 NVIDIA A100 40 GB GPUs, as well as 128 CPU cores.
The implementation is available at \url{https://github.com/APPFL/DPFL-IADMM-Classification.git}.

\textbf{Algorithms.}
We denote (i) our DP-IADMM-Prox with the objective perturbation (Algorithm \ref{algo:DP-IADMM-Prox}) by \texttt{ObjP}, (ii) our DP-IADMM-Trust with the objective perturbation (Algorithm \ref{algo:DP-IADMM-Trust}) by \texttt{ObjT}, and (iii) the baseline algorithm in \cite{huang2019dp} by \texttt{OutP}.
Note that \texttt{OutP} and \texttt{ObjP} are equivalent in a nonprivate setting.
In this experiment, we use the infinity norm for defining the trust-region in \texttt{ObjT}.

\textbf{FL Model.}
We consider a multiclass logistic regression model (see Appendix \ref{apx:model} for details).

\textbf{Instances.}
We consider two publicly available instances for image classification: MNIST \cite{lecun1998mnist} and FEMNIST \cite{caldas2018leaf}.
Using the MNIST dataset, we evenly distribute the training data to multiple agents to mimic a homogeneous system (i.e., each agent has the same number of data), and we use the FEMNIST dataset that describes a heterogeneous system.
In Table \ref{Table:instance}, we summarize some input parameters of the two instances.
\begin{table}[!h]
  \caption{Input parameters of MNIST and FEMNIST.}
  \centering
  \begin{tabular}{l|cccccc}
  \toprule
  & \# of data  & \# of features &  \# of classes & \# of agents & \multicolumn{2}{c}{\# of data per agent} \\
   &  ($I$)  & ($J$) &  ($K$) & ($P$) & mean & stdev \\ \midrule
  MNIST & 60000 & 784 & 10 & 10 & 6000 & 0 \\
  FEMNIST$^{\texttt{A}}$ & 36708 & 784 & 62 & 195 & 188.25 & 87.99 \\
  \bottomrule
  \end{tabular}
  \label{Table:instance}
  \begin{tablenotes}\scriptsize
    \item[] \texttt{A}. We extract 5\% of the FEMNIST training data.
  \end{tablenotes}
\end{table}

\textbf{Parameters.}
Under the multi-class logistic regression model, we compute $\bar{\Delta}_p^t$ in \eqref{Delta} as
\begin{align}
\bar{\Delta}_p^t = \textstyle \max_{i^* \in [I_p]} \sum_{j=1}^J \sum_{k=1}^K \Big| \frac{1}{I} \big\{ x_{pi^*j} \big(h_k(z^t_p;x_{pi^*}) - y_{pi^* k} \big)  \big\} \Big|. \label{sensitivity_cal}
\end{align}
Note that $\bar{\Delta}_p^t/\bar{\epsilon}$ is proportional to the standard deviation of the Laplace distribution in \eqref{Laplace-pdf}, thus controlling the noise level.
In the experiments, we consider various $\bar{\epsilon} \in \{0.01, 0.05, 0.1, 1, 3, 5 \}$, where stronger data privacy is achieved with smaller $\bar{\epsilon}$.

We emphasize that the baseline algorithm \texttt{OutP} guarantees $(\bar{\epsilon},\bar{\delta})$-DP, which provides stronger privacy as $\bar{\delta} > 0$ decreases for fixed $\bar{\epsilon}$, but still weaker than $\bar{\epsilon}$-DP.
In the experiment, we set $\bar{\delta}=10^{-6}$ for \texttt{OutP}.
In addition, we set the regularization parameter $\beta$ in \eqref{def_fn_f} by $\beta \leftarrow 10^{-6}$ as in \cite{huang2019dp}.

The parameter $\rho^t$ in Assumption \ref{assump:convergence} may affect the learning performance because it can affect the proximity of the local solution $z_p^{t+1}$ from the global solution $w^{t+1}$.
For all algorithms, we set $\rho^t \leftarrow \hat{\rho}^t$ given by
\begin{align}
  & \hat{\rho}^t := \min \{ 1e9, \ c_1 (1.2)^{\lfloor t/T_c \rfloor} + c_2/\bar{\epsilon}\}, \ \forall t \in [T], \label{dynamic_rho}
\end{align}
where (i) $c_1=2$, $c_2=5$, and $T_c=1e4$ for MNIST and (ii) $c_1=0.005$, $c_2=0.05$, and $T_c=2e3$ for FEMNIST, which are chosen based on the justifications described in Appendix \ref{apx-hyperparameter-rho}. Note that the chosen parameter $\hat{\rho}^t$ is nondecreasing and bounded above, thus satisfying Assumption \ref{assump:convergence} (ii).

\textbf{MNIST Results.}
Using MNIST described in Table \ref{Table:instance}, we compare the performances of \texttt{ObjP}, \texttt{ObjT}, and \texttt{OutP}.
For each algorithm and fixed $\bar{\epsilon}$, we generate $10$ instances, each of which has different realizations of the random noises.
The random noises to \texttt{OutP} are generated by the Gaussian mechanism with \textit{decreasing} variance as in~\cite{huang2019dp}, whereas the noises to our algorithms are generated by the Laplacian mechanism as in \eqref{DPADMM-2-Trust}.
To compare the two different mechanisms in terms of the magnitude of noises generated, we compute the following average noise magnitude:
\begin{align}
  \textstyle\frac{1}{PJK} \sum_{p=1}^P \sum_{j=1}^J \sum_{k=1}^K |\hat{\xi}^t_{pjk}|, \ \forall t \in [T],  \nonumber
\end{align}
where $\hat{\xi}^t_{pjk}$ is a realization of random noise $\tilde{\xi}^t_{pjk}$.

\begin{figure}[!h]
  \centering
  \begin{subfigure}[b]{0.32\textwidth}
      \centering
      \includegraphics[width=\textwidth]{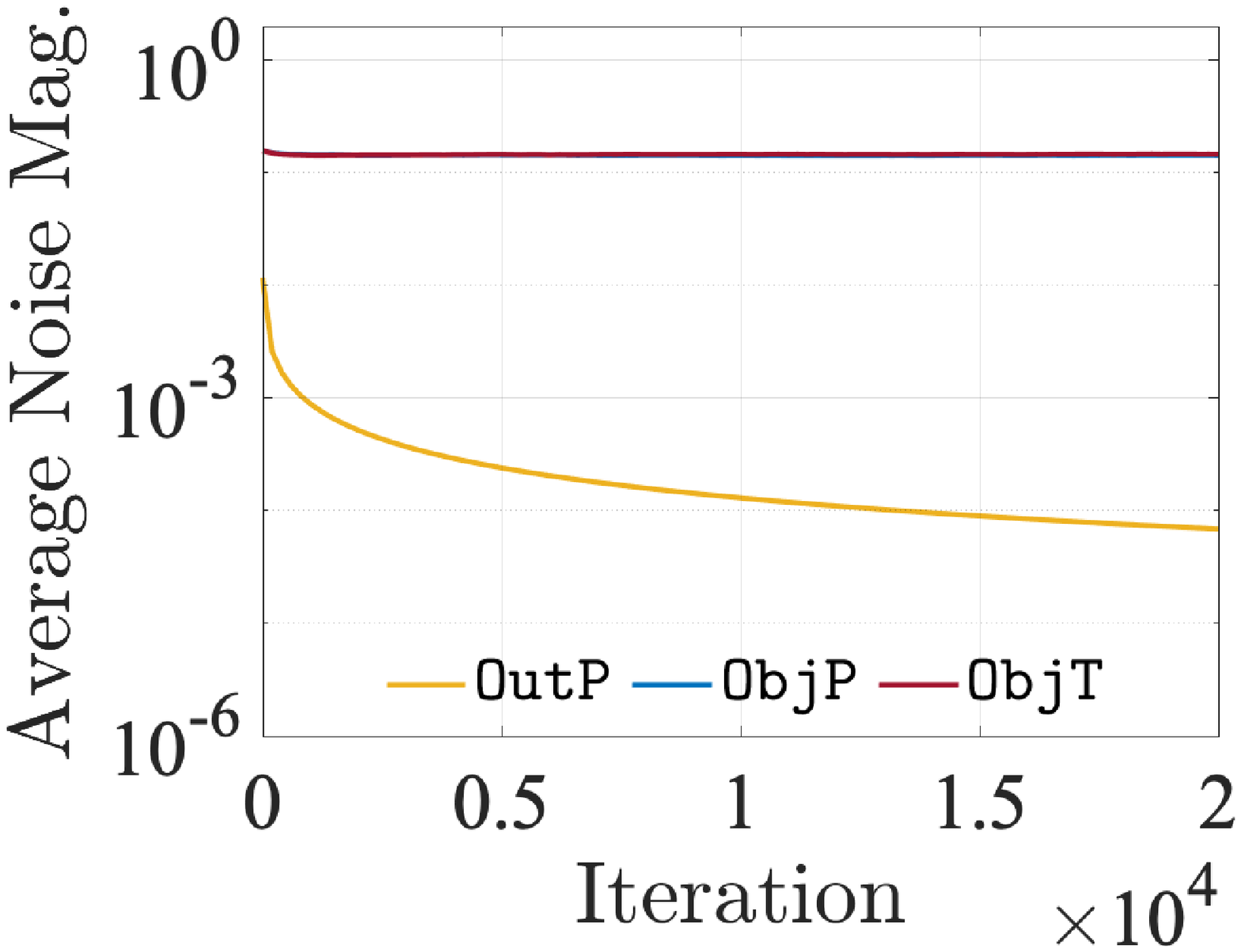}
      \includegraphics[width=\textwidth]{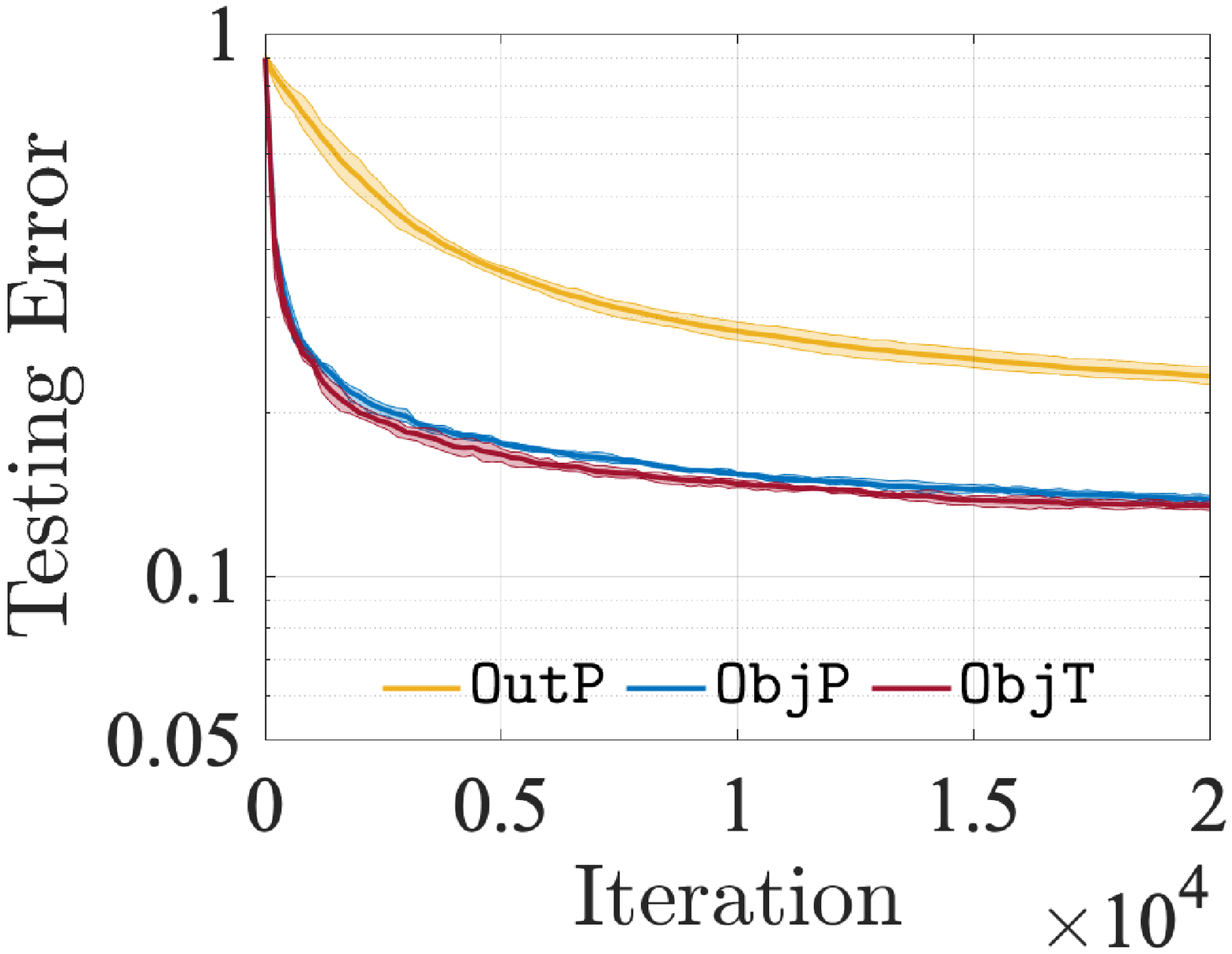}
      \caption{$\bar{\epsilon}=0.05$}
  \end{subfigure}
  \hfill
  \begin{subfigure}[b]{0.32\textwidth}
      \centering
      \includegraphics[width=\textwidth]{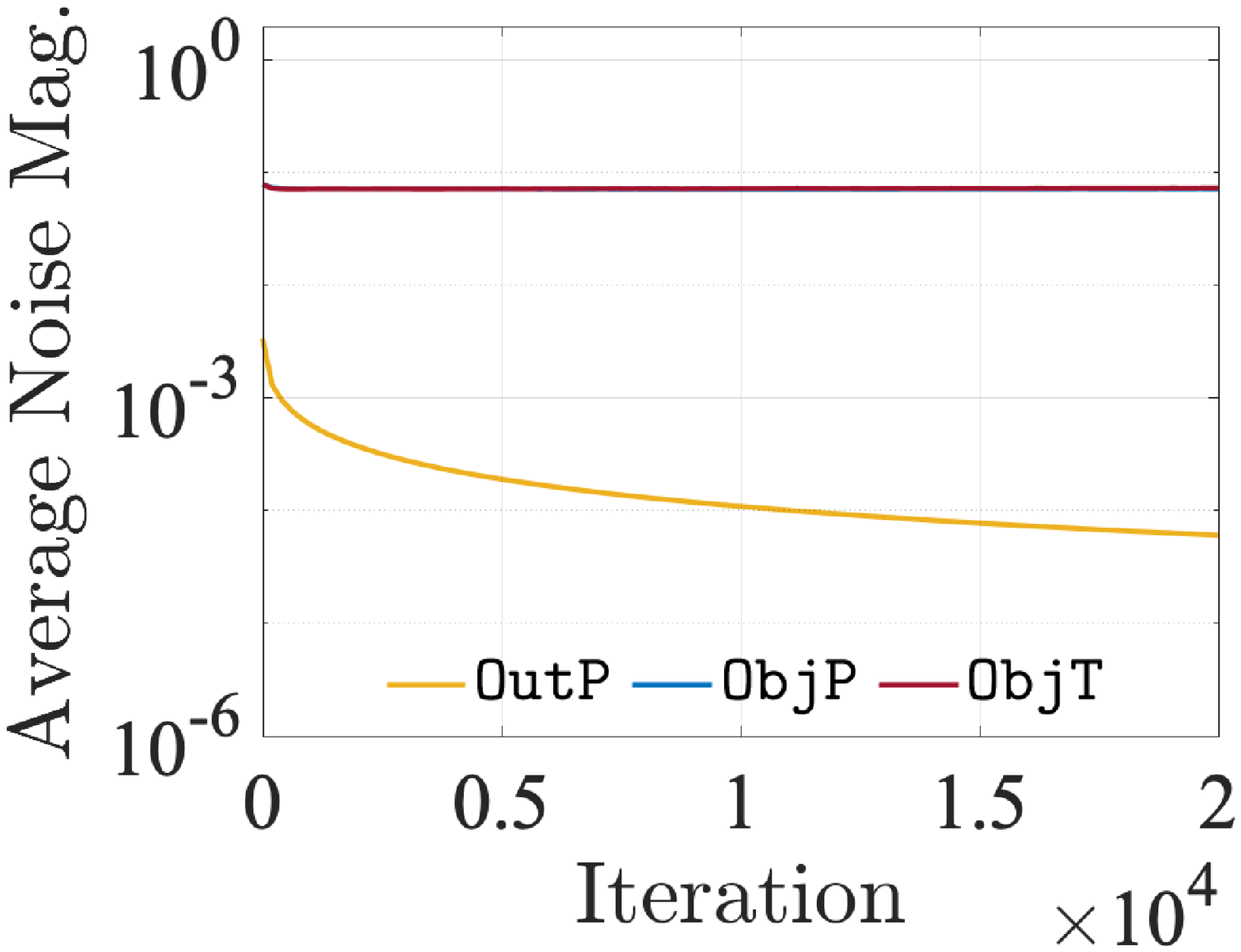}
      \includegraphics[width=\textwidth]{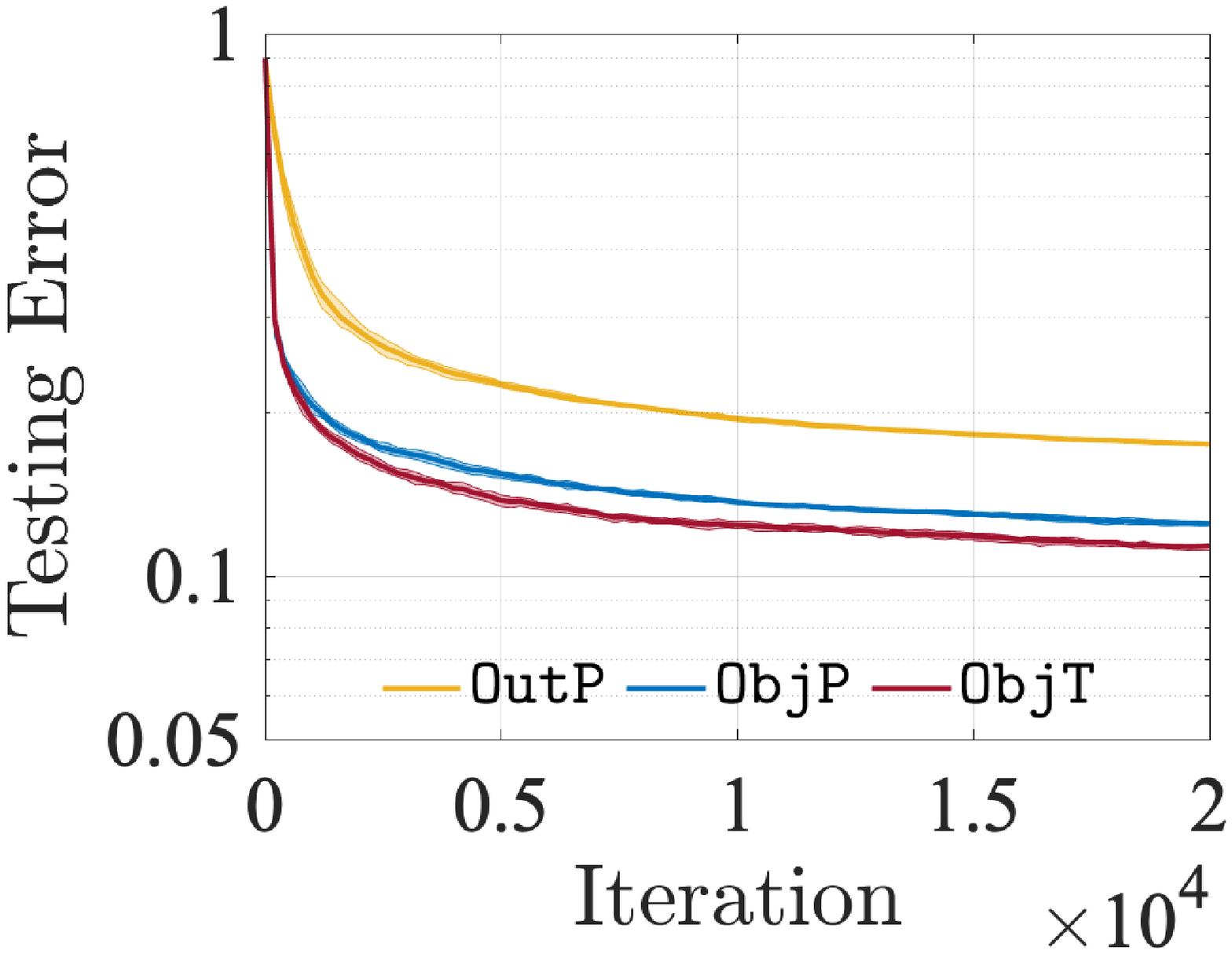}
      \caption{$\bar{\epsilon}=0.1$}
  \end{subfigure}
  \hfill
  \begin{subfigure}[b]{0.32\textwidth}
      \centering
      \includegraphics[width=\textwidth]{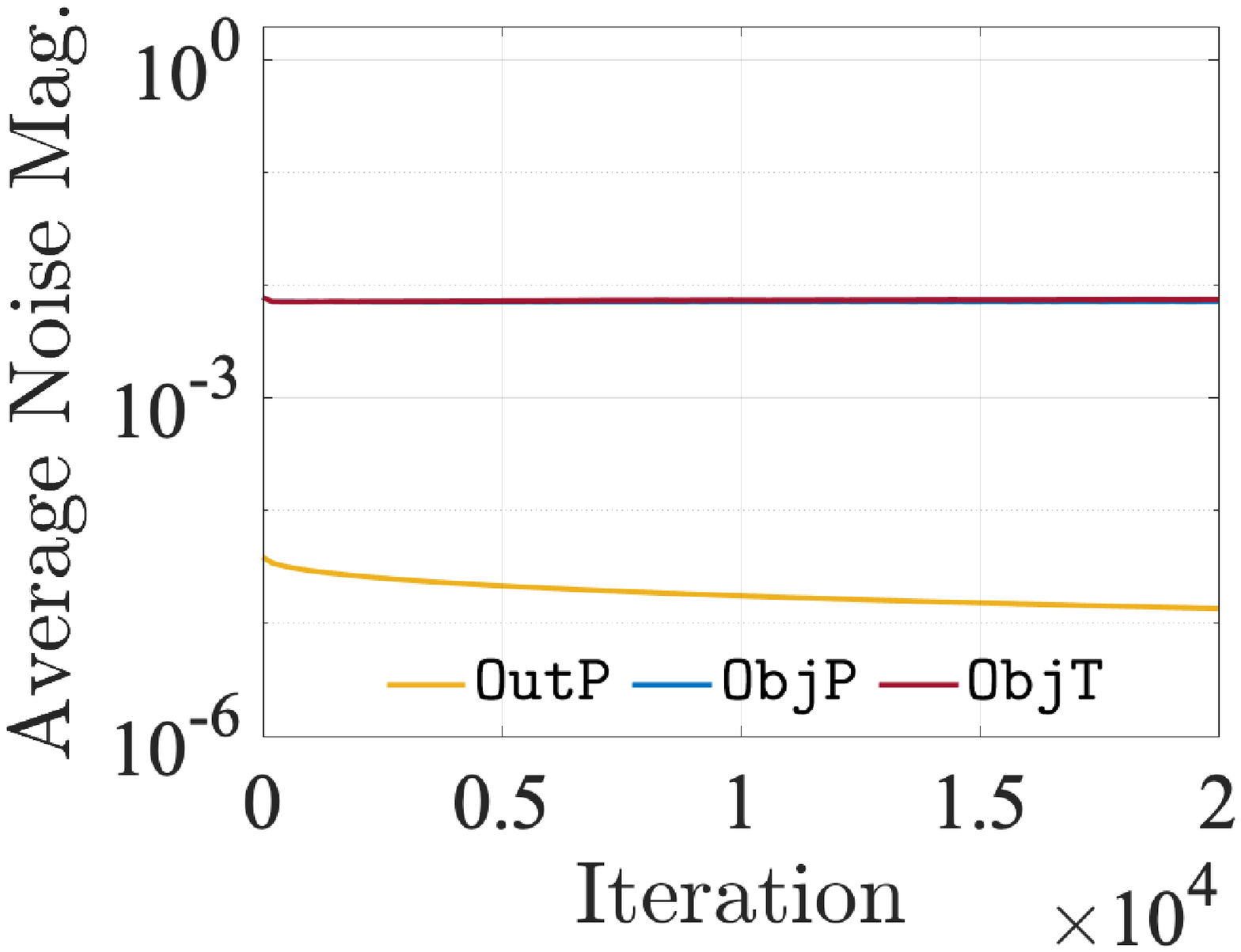}
      \includegraphics[width=\textwidth]{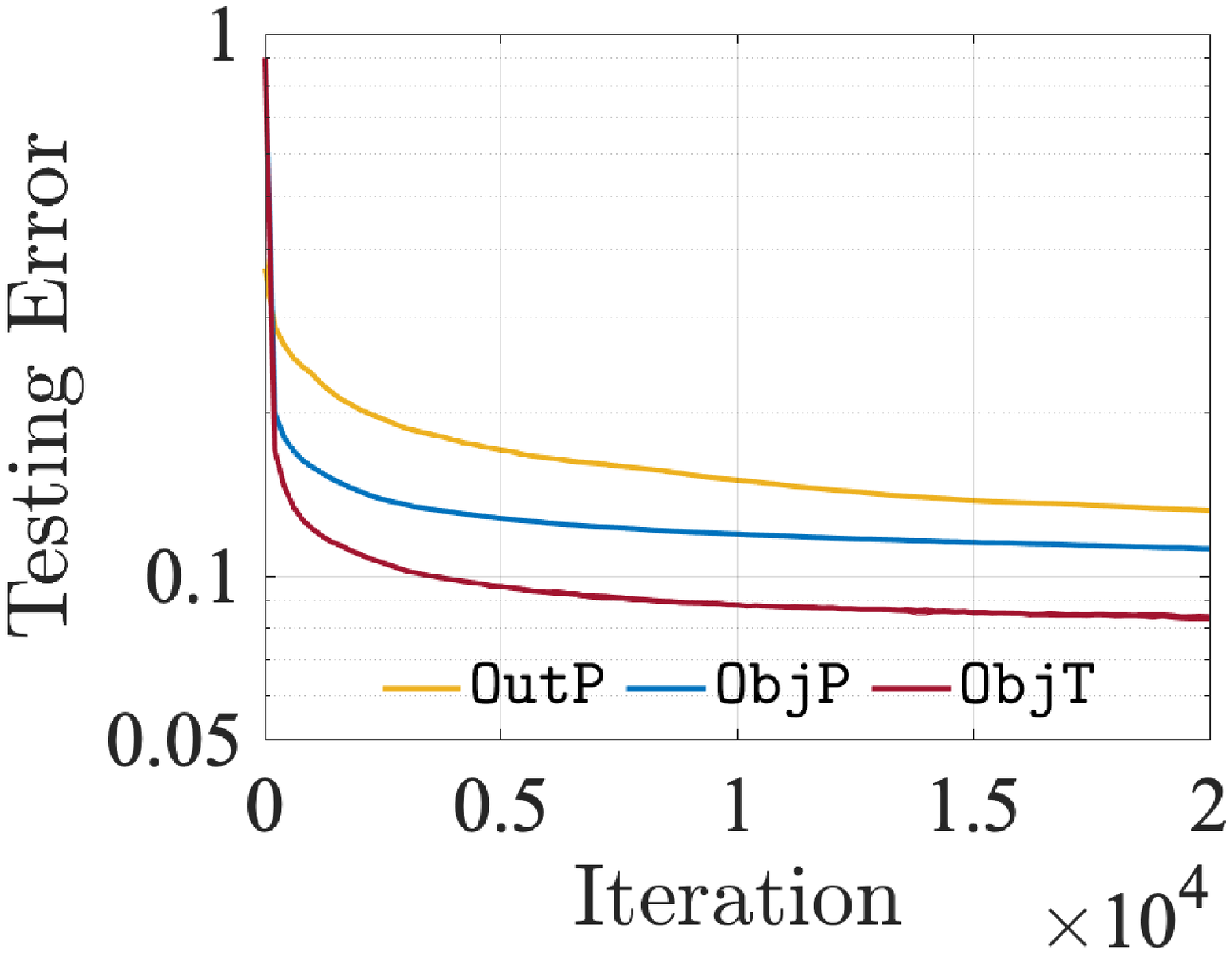}
      \caption{$\bar{\epsilon}=1$}
  \end{subfigure}
  \caption{[MNIST] Average noise magnitudes (top) and testing errors (bottom) for every iteration.}
  \label{fig:MNIST_Iteration}
\end{figure}

In Figure \ref{fig:MNIST_Iteration}, for every algorithm, $\bar{\epsilon} \in \{0.05, 0.1, 1\}$, and iteration $t \in [2e4]$, we report the average noise magnitudes and testing errors on average (solid line) with the $20$- and $80$-percentile confidence bounds (shaded), respectively in the top and bottom rows of the figure.
We exclude the cases when $\bar{\epsilon} \in \{3,5\}$ in Figure \ref{fig:MNIST_Iteration}, since \texttt{ObjT} provides an accurate solution even when $\bar{\epsilon}=1$.
In what follows, we present some observations from the figures and their implications.
The average noise magnitudes of all the algorithms increase as $\bar{\epsilon}$ decreases, achieving stronger data privacy.
% This implies that larger noises are required for stronger data privacy, which makes sense as the standard deviations of the probability distributions in \eqref{DPADMM-2-Trust} and \cite{huang2019dp} increase as $\bar{\epsilon}$ decreases.
For fixed $\bar{\epsilon}$, the average noise magnitudes of our algorithms \texttt{ObjT} and \texttt{ObjP} are greater than those of \texttt{OutP} while the testing errors of our algorithms are less than those of \texttt{OutP}.
These results imply that the performance of our algorithms is less sensitive to the random perturbation than that of \texttt{OutP}, even with a larger magnitude of noises for stronger $\bar{\epsilon}$-DP.
The greater performance of our algorithms is also consistent with the findings in \cite{chaudhuri2011differentially,zhang2016dynamic} that the better performance of the objective perturbation than the output perturbation is guaranteed with higher probability.
The sequence of solutions produced by our algorithms, especially \texttt{ObjT}, converges faster than that produced by \texttt{OutP}. 
% This is also consistent with our theoretical finding of the faster convergence rate result presented in Theorem \ref{thm:convergence_rate}.
% Also, as $\bar{\epsilon}$ decreases, the convergence of \texttt{ObjT} becomes slower because the rate of convergence in Theorem \ref{thm:convergence_rate} is affected by $\bar{\epsilon}$.

\begin{figure}[!h]
  \centering
  \begin{subfigure}[b]{0.4\textwidth}
    \centering
    \includegraphics[width=\textwidth]{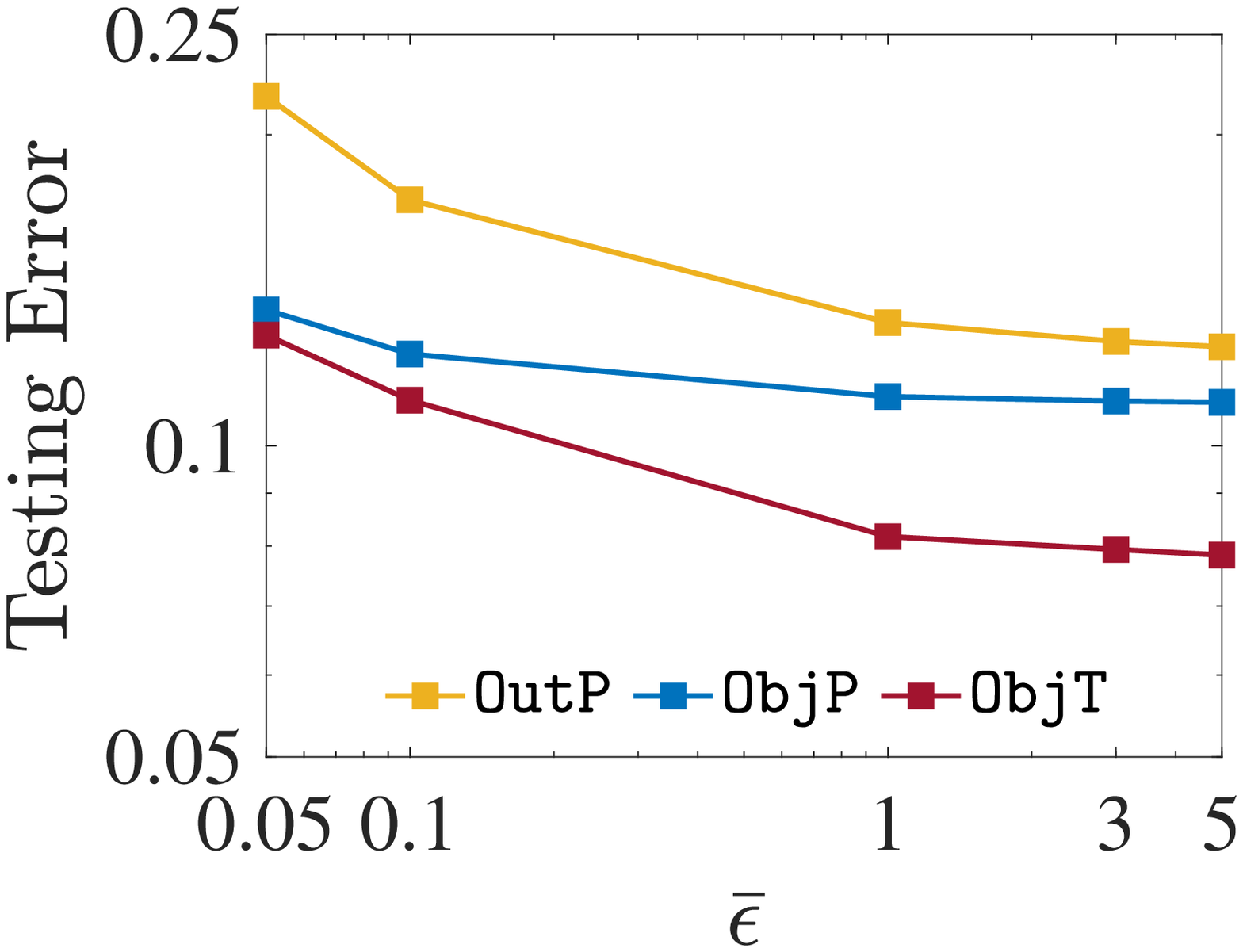}
    \caption{MNIST}
    \label{fig:MNIST_Summary}
  \end{subfigure}
  \hfill
  \begin{subfigure}[b]{0.4\textwidth}
    \centering
    \includegraphics[width=\textwidth]{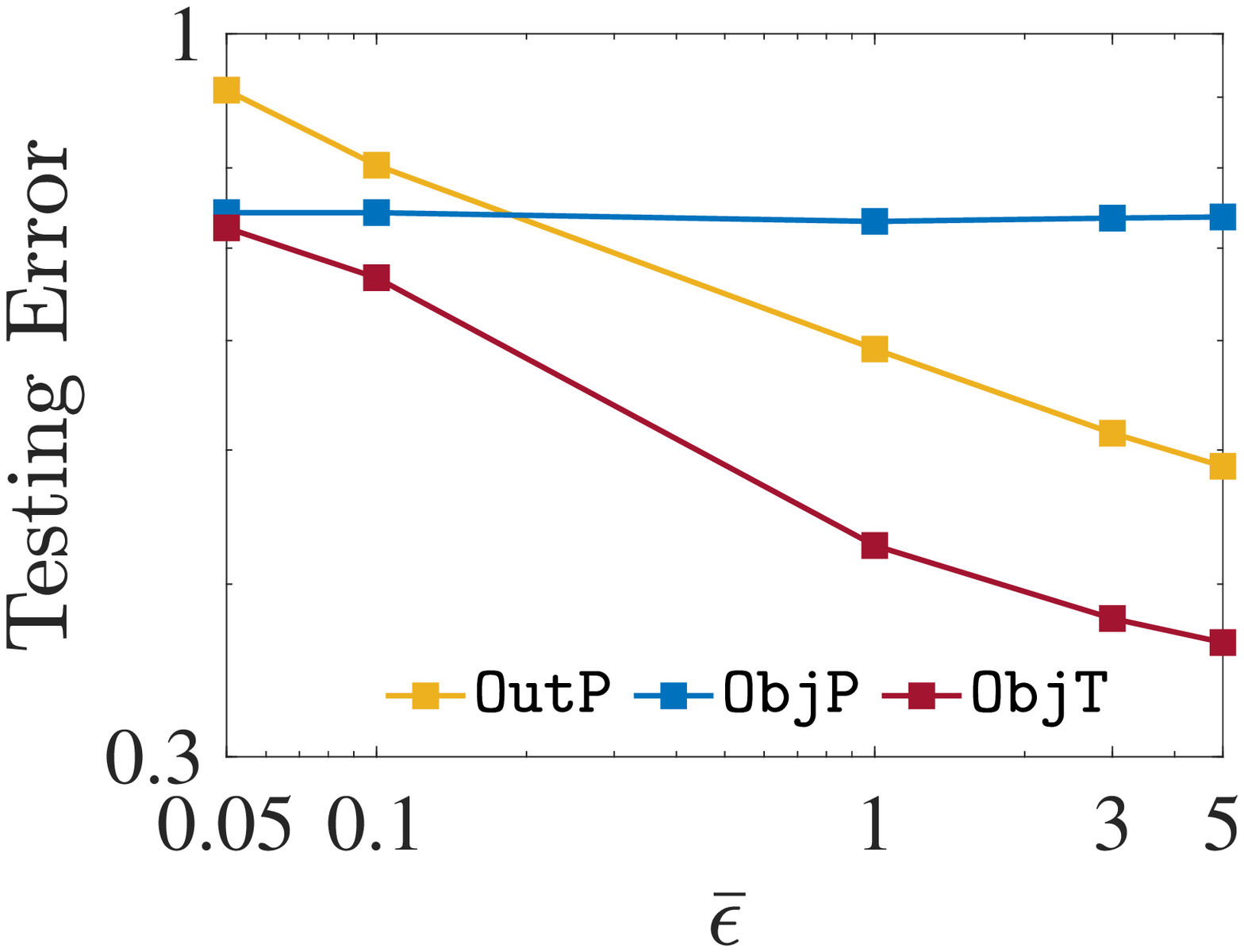}
    \caption{FEMNIST}
    \label{fig:FEMNIST_Summary}
  \end{subfigure}
  \caption{Testing errors of the three algorithms under various $\bar{\epsilon}$. }
\end{figure}

In Figure \ref{fig:MNIST_Summary} we report the testing errors of the three algorithms for every $\bar{\epsilon} \in \{0.05, 0.1, 1, 3, 5 \}$.
When $\bar{\epsilon}=5$, the testing error produced by \texttt{ObjT} is $7.84\%$, which is close to that of a nonprivate algorithm (i.e., $7.42\%$).
As $\bar{\epsilon}$ decreases (i.e., stronger data privacy), the testing errors of all algorithms increase,  implying a fundamental trade-off between solution accuracy and data privacy.
When $\bar{\epsilon}=0.05$, the testing error of \texttt{ObjT} is $12.80\%$ while that of \texttt{OutP} is $21.79\%$, an $8.99\%$ improvement.

\begin{remark}
Additionally, we increase the number of iterations to $T=1e6$ and verify that the solutions provided by the three algorithms are feasible, namely, satisfying the consensus constraints \eqref{ERM_1-1} (see Appendix \ref{apx-residual} for more details). Under this setting, we additionally consider a case when $\bar{\epsilon}=0.01$, and we demonstrate that the testing error of \texttt{OutT} is $15.64\%$ while that of \texttt{OutP} is $37.98\%$, a $22.34\%$ improvement.
In summary, the results demonstrate the outperformance of our algorithms.
\end{remark}

\textbf{FEMNIST Results.}
Using FEMNIST described in Table \ref{Table:instance}, we aim to show that our algorithms outperform \texttt{OutP} under the heterogeneous data setting (i.e., the number of data per agent varies).

\begin{figure}[!h]
  \centering
  \begin{subfigure}[b]{0.32\textwidth}
    \centering
    \includegraphics[width=\textwidth]{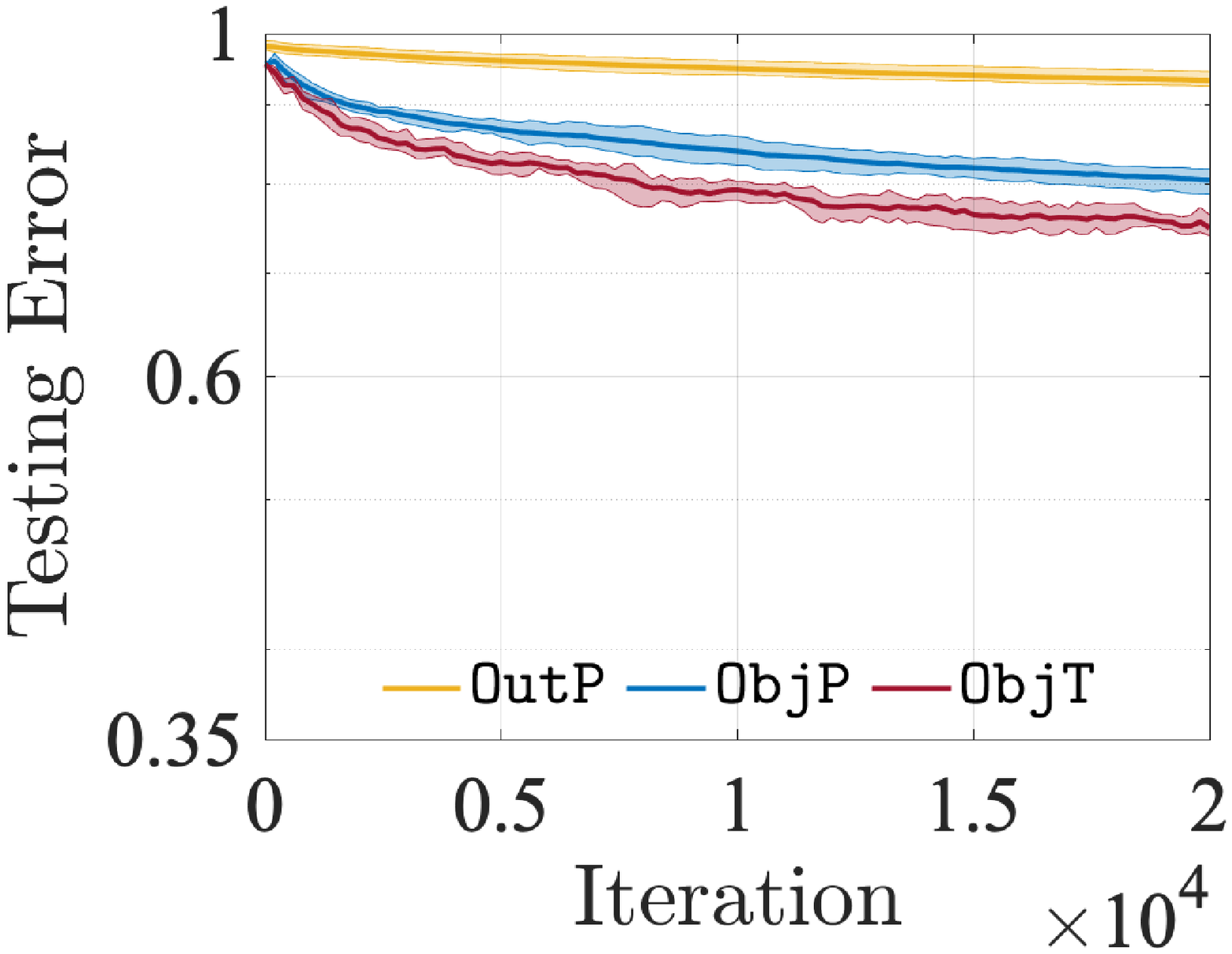}
    \caption{$\bar{\epsilon}=0.05$}
  \end{subfigure}
  \hfill
  \begin{subfigure}[b]{0.32\textwidth}
    \centering
    \includegraphics[width=\textwidth]{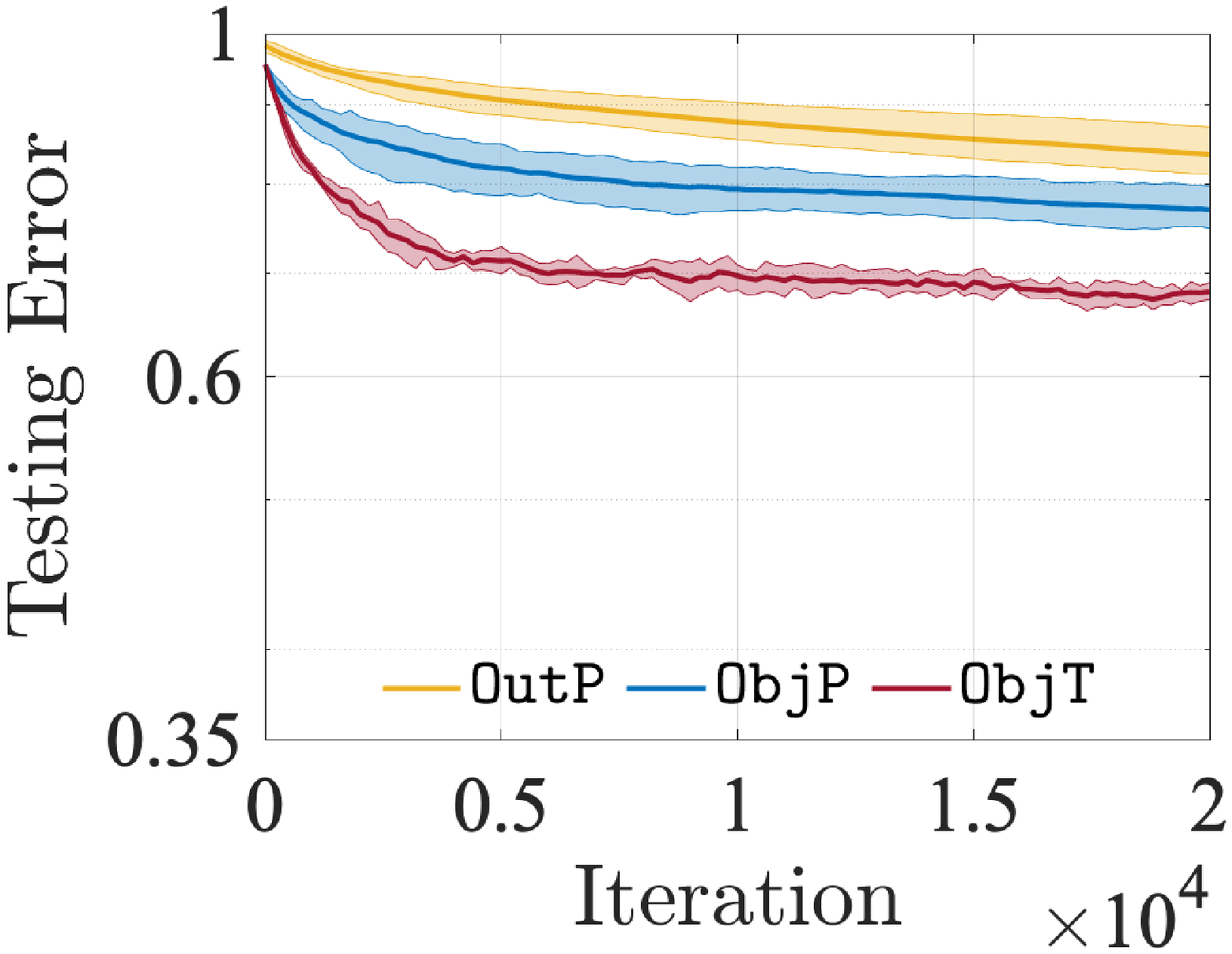}
    \caption{$\bar{\epsilon}=0.1$}
  \end{subfigure}
  \hfill
  \begin{subfigure}[b]{0.32\textwidth}
      \centering
      \includegraphics[width=\textwidth]{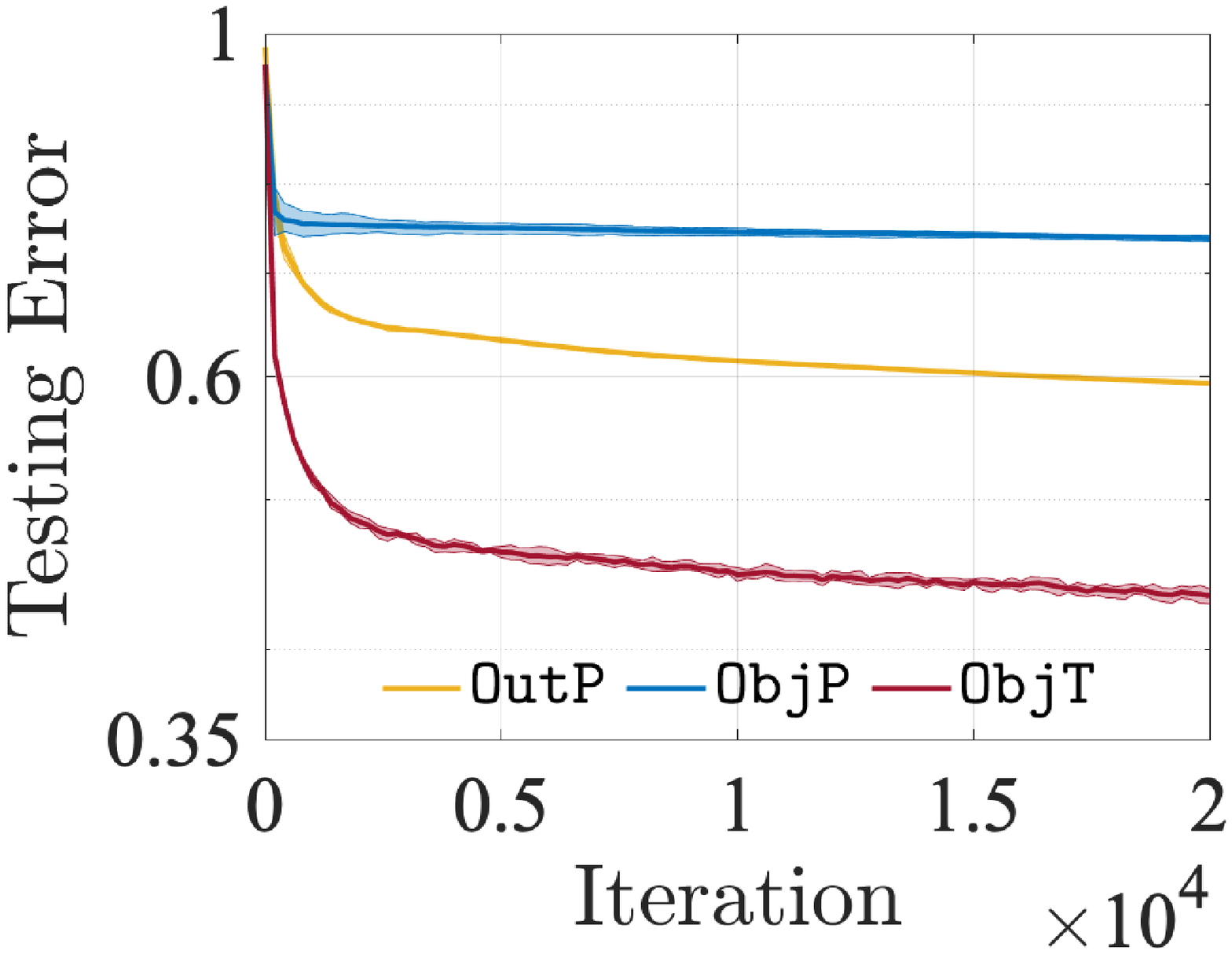}
      \caption{$\bar{\epsilon}=1$}
  \end{subfigure}
     \caption{[FEMNIST] Testing errors for every iteration.}
     \label{fig:FEMNIST_TraningError}
\end{figure}

In Figure \ref{fig:FEMNIST_TraningError}, for every algorithm, $\bar{\epsilon} \in \{0.05, 0.1, 1\}$, and iteration $t \in [2e4]$, we report the testing errors on average (solid line) with the $20$- and $80$-percentile confidence bounds (shaded).
In what follows, we present some observations from the figures and their implications.
\texttt{ObjT} produces the least testing error with the fastest convergence, which is similar to the result from Figure \ref{fig:MNIST_Iteration}.
When $\bar{\epsilon}=1$, the testing error of \texttt{ObjP} is greater than that of \texttt{OutP}.
To see this in more detail, we also note that the effect of $\bar{\epsilon}$ on the testing error of \texttt{ObjP} is not significant (see also the blue line in Figure \ref{fig:FEMNIST_Summary}).
In Appendix \ref{apx-hyperparameter-proximity} we verify that \texttt{ObjP} requires an additional hyperparameter tuning process since the proximity is controlled in the objective function and thus affected by the other parameters, such as $\rho^t$.
However, we highlight that this additional tuning process is not required for \texttt{ObjT} since the proximity is controlled in the constraints and thus not affected by the other parameters. Taking  this viewpoint, \texttt{ObjT} has an additional advantage  over \texttt{ObjP}.

In Figure \ref{fig:FEMNIST_Summary} we report the testing errors of the three algorithms for every $\bar{\epsilon} \in \{0.05, 0.1, 1, 3, 5 \}$.
When $\bar{\epsilon}=5$, the testing error of \texttt{ObjT} is $36.29\%$ which is close to that of a nonprivate algorithm (i.e., $35.25\%$).
As $\bar{\epsilon}$ decreases (i.e., stronger data privacy), the testing errors of all algorithms increase, thus implying a trade-off between solution accuracy and data privacy.
When $\bar{\epsilon}=0.05$, the testing error of \texttt{ObjT} is $72.40\%$ while that of \texttt{OutP} is $91.05\%$, an $18.65\%$ improvement.

\section{Conclusion} \label{sec:conclusion}
We incorporated the objective perturbation into an IADMM algorithm for solving the FL model while ensuring data privacy during a training process.
The proposed DP-IADMM algorithm iteratively solves a sequence of subproblems whose objective functions are randomly perturbed by noises sampled from a calibrated Laplace distribution to ensure $\bar{\epsilon}$-DP.
We showed that the rate of convergence in expectation for the proposed Algorithm \ref{algo:DP-IADMM-Prox} is $\mathcal{O}(1/\sqrt{T})$ with $T$ being the number of iterations.
In the numerical experiments, we demonstrated the outperformance of the proposed algorithm by using MNIST and FEMNIST instances.

We note that the performance of the proposed DP algorithm can be further improved by lowering the magnitude of noises required for ensuring the same level of data privacy (see Figure \ref{fig:MNIST_Iteration} (top) that our algorithm requires larger noises). By improving the performance further, we expect that the proposed DP algorithm can be utilized for learning from larger decentralized datasets with more features and classes.

\textbf{\textit{Acknowledgments.}}
This material was based upon work  supported by the U.S. Department of Energy, Office of Science, Advanced Scientific Computing Research, under Contract DE-AC02-06CH11357.
We gratefully acknowledge the computing resources provided on Swing, a high-performance computing cluster operated by the Laboratory Computing Resource Center at Argonne National Laboratory.

\bibliographystyle{plain}
\bibliography{References}

\begin{thebibliography}{10}

\bibitem{abadi2016deep}
Martin Abadi, Andy Chu, Ian Goodfellow, H~Brendan McMahan, Ilya Mironov, Kunal
  Talwar, and Li~Zhang.
\newblock Deep learning with differential privacy.
\newblock In {\em Proceedings of the 2016 ACM SIGSAC conference on computer and
  communications security}, pages 308--318, 2016.

\bibitem{agarwal2018cpsgd}
Naman Agarwal, Ananda~Theertha Suresh, Felix Yu, Sanjiv Kumar, and H~Brendan
  Mcmahan.
\newblock cpsgd: Communication-efficient and differentially-private distributed
  sgd.
\newblock {\em arXiv preprint arXiv:1805.10559}, 2018.

\bibitem{billingsley}
Patrick Billingsley.
\newblock {\em Probability and measure}.
\newblock John Wiley \& Sons, 1995.

\bibitem{caldas2018leaf}
Sebastian Caldas, Sai Meher~Karthik Duddu, Peter Wu, Tian Li, Jakub
  Kone{\v{c}}n{\`y}, H~Brendan McMahan, Virginia Smith, and Ameet Talwalkar.
\newblock Leaf: A benchmark for federated settings.
\newblock {\em arXiv preprint arXiv:1812.01097}, 2018.

\bibitem{chaudhuri2011differentially}
Kamalika Chaudhuri, Claire Monteleoni, and Anand~D Sarwate.
\newblock Differentially private empirical risk minimization.
\newblock {\em Journal of Machine Learning Research}, 12(3), 2011.

\bibitem{dwork2006calibrating}
Cynthia Dwork, Frank McSherry, Kobbi Nissim, and Adam Smith.
\newblock Calibrating noise to sensitivity in private data analysis.
\newblock In {\em Theory of cryptography conference}, pages 265--284. Springer,
  2006.

\bibitem{dwork2014algorithmic}
Cynthia Dwork, Aaron Roth, et~al.
\newblock The algorithmic foundations of differential privacy.
\newblock {\em Foundations and Trends in Theoretical Computer Science},
  9(3-4):211--407, 2014.

\bibitem{elgabli2020gadmm}
Anis Elgabli, Jihong Park, Amrit~S Bedi, Mehdi Bennis, and Vaneet Aggarwal.
\newblock {GADMM}: Fast and communication efficient framework for distributed
  machine learning.
\newblock {\em Journal of Machine Learning Research}, 21(76):1--39, 2020.

\bibitem{fukuchi2017differentially}
Kazuto Fukuchi, Quang~Khai Tran, and Jun Sakuma.
\newblock Differentially private empirical risk minimization with input
  perturbation.
\newblock In {\em International Conference on Discovery Science}, pages 82--90.
  Springer, 2017.

\bibitem{huang2019dp}
Zonghao Huang, Rui Hu, Yuanxiong Guo, Eric Chan-Tin, and Yanmin Gong.
\newblock {DP-ADMM: ADMM}-based distributed learning with differential privacy.
\newblock {\em IEEE Transactions on Information Forensics and Security},
  15:1002--1012, 2019.

\bibitem{iyengar2019towards}
Roger Iyengar, Joseph~P Near, Dawn Song, Om~Thakkar, Abhradeep Thakurta, and
  Lun Wang.
\newblock Towards practical differentially private convex optimization.
\newblock In {\em 2019 IEEE Symposium on Security and Privacy (SP)}, pages
  299--316. IEEE, 2019.

\bibitem{kairouz2019advances}
Peter Kairouz, H~Brendan McMahan, Brendan Avent, Aur{\'e}lien Bellet, Mehdi
  Bennis, Arjun~Nitin Bhagoji, Keith Bonawitz, Zachary Charles, Graham Cormode,
  Rachel Cummings, et~al.
\newblock Advances and open problems in federated learning.
\newblock {\em arXiv preprint arXiv:1912.04977}, 2019.

\bibitem{kaissis2020secure}
Georgios~A Kaissis, Marcus~R Makowski, Daniel R{\"u}ckert, and Rickmer~F
  Braren.
\newblock Secure, privacy-preserving and federated machine learning in medical
  imaging.
\newblock {\em Nature Machine Intelligence}, 2(6):305--311, 2020.

\bibitem{kang2020input}
Yilin Kang, Yong Liu, Ben Niu, Xinyi Tong, Likun Zhang, and Weiping Wang.
\newblock Input perturbation: A new paradigm between central and local
  differential privacy.
\newblock {\em arXiv preprint arXiv:2002.08570}, 2020.

\bibitem{kifer2012private}
Daniel Kifer, Adam Smith, and Abhradeep Thakurta.
\newblock Private convex empirical risk minimization and high-dimensional
  regression.
\newblock In {\em Conference on Learning Theory}, pages 25--1. JMLR Workshop
  and Conference Proceedings, 2012.

\bibitem{konevcny2015federated}
Jakub Kone{\v{c}}n{\`y}, Brendan McMahan, and Daniel Ramage.
\newblock Federated optimization: Distributed optimization beyond the
  datacenter.
\newblock {\em arXiv preprint arXiv:1511.03575}, 2015.

\bibitem{lecun1998mnist}
Yann LeCun.
\newblock The {MNIST} database of handwritten digits.
\newblock {\em http://yann. lecun. com/exdb/mnist/}, 1998.

\bibitem{li2019survey}
Qinbin Li, Zeyi Wen, Zhaomin Wu, Sixu Hu, Naibo Wang, Yuan Li, Xu~Liu, and
  Bingsheng He.
\newblock A survey on federated learning systems: vision, hype and reality for
  data privacy and protection.
\newblock {\em arXiv preprint arXiv:1907.09693}, 2019.

\bibitem{li2017robust}
Qunwei Li, Bhavya Kailkhura, Ryan Goldhahn, Priyadip Ray, and Pramod~K
  Varshney.
\newblock Robust decentralized learning using {ADMM} with unreliable agents.
\newblock {\em arXiv preprint arXiv:1710.05241}, 2017.

\bibitem{li2020federated}
Tian Li, Anit~Kumar Sahu, Ameet Talwalkar, and Virginia Smith.
\newblock Federated learning: Challenges, methods, and future directions.
\newblock {\em IEEE Signal Processing Magazine}, 37(3):50--60, 2020.

\bibitem{li2018federated}
Tian Li, Anit~Kumar Sahu, Manzil Zaheer, Maziar Sanjabi, Ameet Talwalkar, and
  Virginia Smith.
\newblock Federated optimization in heterogeneous networks.
\newblock {\em arXiv preprint arXiv:1812.06127}, 2018.

\bibitem{madry2017towards}
Aleksander Madry, Aleksandar Makelov, Ludwig Schmidt, Dimitris Tsipras, and
  Adrian Vladu.
\newblock Towards deep learning models resistant to adversarial attacks.
\newblock {\em arXiv preprint arXiv:1706.06083}, 2017.

\bibitem{mcmahan2017communication}
Brendan McMahan, Eider Moore, Daniel Ramage, Seth Hampson, and Blaise~Aguera
  y~Arcas.
\newblock Communication-efficient learning of deep networks from decentralized
  data.
\newblock In {\em Artificial Intelligence and Statistics}, pages 1273--1282.
  PMLR, 2017.

\bibitem{naseri2020toward}
Mohammad Naseri, Jamie Hayes, and Emiliano De~Cristofaro.
\newblock Toward robustness and privacy in federated learning: Experimenting
  with local and central differential privacy.
\newblock {\em arXiv preprint arXiv:2009.03561}, 2020.

\bibitem{sarwate2013signal}
Anand~D Sarwate and Kamalika Chaudhuri.
\newblock Signal processing and machine learning with differential privacy:
  Algorithms and challenges for continuous data.
\newblock {\em IEEE signal processing magazine}, 30(5):86--94, 2013.

\bibitem{shickel2017deep}
Benjamin Shickel, Patrick~James Tighe, Azra Bihorac, and Parisa Rashidi.
\newblock {Deep EHR}: a survey of recent advances in deep learning techniques
  for electronic health record {(EHR)} analysis.
\newblock {\em IEEE journal of biomedical and health informatics},
  22(5):1589--1604, 2017.

\bibitem{shokri2017membership}
Reza Shokri, Marco Stronati, Congzheng Song, and Vitaly Shmatikov.
\newblock Membership inference attacks against machine learning models.
\newblock In {\em 2017 IEEE Symposium on Security and Privacy (SP)}, pages
  3--18. IEEE, 2017.

\bibitem{teo2010bundle}
Choon~Hui Teo, SVN Vishwanathan, Alex Smola, and Quoc~V Le.
\newblock Bundle methods for regularized risk minimization.
\newblock {\em Journal of Machine Learning Research}, 11(1), 2010.

\bibitem{wei2020federated}
Kang Wei, Jun Li, Ming Ding, Chuan Ma, Howard~H Yang, Farhad Farokhi, Shi Jin,
  Tony~QS Quek, and H~Vincent Poor.
\newblock Federated learning with differential privacy: Algorithms and
  performance analysis.
\newblock {\em IEEE Transactions on Information Forensics and Security},
  15:3454--3469, 2020.

\bibitem{zhang2016dynamic}
Tao Zhang and Quanyan Zhu.
\newblock Dynamic differential privacy for {ADMM}-based distributed
  classification learning.
\newblock {\em IEEE Transactions on Information Forensics and Security},
  12(1):172--187, 2016.

\end{thebibliography}

\vspace{0.5in}
\noindent\fbox{\parbox{\textwidth}{
The submitted manuscript has been created by UChicago Argonne, LLC, Operator of Argonne National Laboratory (``Argonne''). Argonne, a U.S. Department of Energy Office of Science laboratory, is operated under Contract No. DE-AC02-06CH11357. The U.S. Government retains for itself, and others acting on its behalf, a paid-up nonexclusive, irrevocable worldwide license in said article to reproduce, prepare derivative works, distribute copies to the public, and perform publicly and display publicly, by or on behalf of the Government. The Department of Energy will provide public access to these results of federally sponsored research in accordance with the DOE Public Access Plan (http://energy.gov/downloads/doe-public-access-plan).}
}

\newpage
\appendix
\section{Appendix} \label{sec:appendix}
\subsection{Derivation of \eqref{function_G_2}} \label{apx-derivation}
By adding $\langle \tilde{\xi}^t_p, z_p \rangle$ to the function \eqref{function_G_1}, we have
\begin{align}
&  \langle f'_p(z^t_p;\mathcal{D}_p), \ z_p  \rangle + \textstyle \frac{\rho^t}{2}\|w^{t+1}-z_p + \frac{1}{\rho^t} \lambda^t_p \|^2 + \langle \tilde{\xi}^t_p, z_p \rangle. \label{subproblem_1}
\end{align}
Now we add a constant $\frac{1}{2\rho^t} \| \tilde{\xi}^t_p \|^2 - \langle w^{t+1} + \frac{1}{\rho^t} \lambda^t_p, \tilde{\xi}^t_p \rangle$ to \eqref{subproblem_1}, yielding
\begin{align}
& \langle f'_p(z^t_p;\mathcal{D}_p), \ z_p  \rangle + \textstyle \frac{\rho^t}{2} \|w^{t+1}-z_p + \frac{1}{\rho^t} \lambda^t_p \|^2  + \frac{1}{2\rho^t} \| \tilde{\xi}^t_p \|^2 - \langle w^{t+1} - z_p + \frac{1}{\rho^t} \lambda^t_p, \tilde{\xi}^t_p \rangle \nonumber \\
= &  \langle f'_p(z^t_p;\mathcal{D}_p), \ z_p  \rangle + \textstyle \frac{\rho^t}{2} \Big\{ \|w^{t+1}-z_p + \frac{1}{\rho^t} \lambda^t_p \|^2 + \frac{1}{(\rho^t)^2} \| \tilde{\xi}^t_p \|^2 - \frac{2}{\rho^t} \langle w^{t+1} - z_p + \frac{1}{\rho^t} \lambda^t_p, \tilde{\xi}^t_p \rangle \Big\} \nonumber  \\
= &  \langle f'_p(z^t_p;\mathcal{D}_p), \ z_p  \rangle + \textstyle \frac{\rho^t}{2} \|w^{t+1}-z_p + \frac{1}{\rho^t} \lambda^t_p - \frac{1}{\rho^t}  \tilde{\xi}^t_p \|^2,  \nonumber
\end{align}
which is equivalent to \eqref{function_G_2}.

\subsection{Proof of Proposition \ref{prop:pointwise_convergence} } \label{apx-prop:pointwise_convergence}
Fix $t$ and $p$.
We denote by $\hat{z}^{t+1}_p$ (resp., $\hat{z}^{t+1}_{p \ell}$) the unique optimal solution of an optimization problem in \eqref{DPADMM-2-Prox} (resp., \eqref{ADMM-2-Prox-log}), where the uniqueness is due to the strong convexity of the objective functions.
For ease of exposition, we define $G^t_p(z_p) := G^t(z_p;\mathcal{D}_p,\tilde{\xi}^t_p) + (1/2\eta^t)\|z_p-z_p^t\|^2$ and $g_{p\ell}(z_p) := g(z_p;\ell)$.

In \eqref{DPADMM-2-Prox}, the continuity of $G^t_p: \mathcal{W} \mapsto \mathbb{R}$ at $\hat{z}^{t+1}_p$ implies that, for every $\epsilon > 0$, there exists a $\delta > 0$ such that for all $z \in \mathcal{W}$:
\begin{subequations}
\begin{align}
z \in \mathcal{B}_{\delta}(\hat{z}^{t+1}_p) := \{ z \in \mathbb{R}^{J \times K} : \| z - \hat{z}^{t+1}_p \| < \delta \} \ \Rightarrow \  G^t_p(z) - G^t_p(\hat{z}^{t+1}_p)  < \epsilon. \label{continuity}
\end{align}
Consider $\tilde{z} \in \mathcal{B}_{\delta}(\hat{z}^{t+1}_p) \cap \textbf{relint}(\mathcal{W})$, where \textbf{relint} indicates the relative interior.
As $h_m(\tilde{z}) < 0$ for all $m \in [M]$, $g_{p\ell}(\tilde{z})$ goes to zero as $\ell$ increases.
Hence, there exists $\ell' > 0$ such that
\begin{align}
  g_{p\ell}(\tilde{z}) = \textstyle \sum_{m=1}^M \ln (1 + e^{\ell h_m(\tilde{z})}) < \epsilon, \ \forall \ell \geq \ell'. \label{ineq_z_tilde}
\end{align}
For all $\ell \geq \ell'$, we derive the following inequalities:
\begin{align}
  G^t_p(\hat{z}^{t+1}_{p\ell}) + g_{p\ell}(\hat{z}^{t+1}_{p\ell}) \leq G^t_p(\tilde{z}) + g_{p\ell}(\tilde{z}) < G^t_p(\tilde{z}) + \epsilon < G^t_p(\hat{z}^{t+1}_p) + 2\epsilon \label{sandwich_ineq}
\end{align}
\end{subequations}
where
the first inequality holds because $\hat{z}^{t+1}_{p\ell}$ is the optimal solution of \eqref{ADMM-2-Prox-log},
the second inequality holds by \eqref{ineq_z_tilde}, and
the last inequality holds by \eqref{continuity}.
The inequalities \eqref{sandwich_ineq} imply that, for very small $\epsilon \approx 0$, the optimal value $G^t_p(\hat{z}^{t+1}_{p\ell}) + g_{p\ell}(\hat{z}^{t+1}_{p\ell})$ of \eqref{ADMM-2-Prox-log} converges to the optimal value $G^t_p(\hat{z}^{t+1}_p)$ of \eqref{DPADMM-2-Prox} as $\ell$ increases.

It remains to show that $\hat{z}^{t+1}_{p\ell}$ converges to $\hat{z}^{t+1}_p$ as $\ell$ increases.
Suppose that $\hat{z}^{t+1}_{p\ell}$ converges to $\widehat{z} \neq \hat{z}^{t+1}_p$ as $\ell$ increases.
Consider $\zeta := \| \widehat{z} - \hat{z}^{t+1}_p \| / 2$.
Since $\hat{z}^{t+1}_{p\ell}$  converges to $\widehat{z}$, there exists $\ell'' > 0$ such that $\| \widehat{z} - \hat{z}^{t+1}_{p\ell} \| < \zeta$ for all $\ell \geq \ell''$.
By the triangle inequality, we have
\begin{subequations}
\begin{align}
\| \hat{z}^{t+1}_{p\ell} - \hat{z}^{t+1}_p \| \geq \| \widehat{z} - \hat{z}^{t+1}_p \| - \| \widehat{z} - \hat{z}^{t+1}_{p\ell} \| > 2\zeta - \zeta = \zeta, \ \forall \ell \geq \ell''. \label{triangle}
\end{align}
As $G^t_p$ is strongly convex with a constant $\rho^{\text{\tiny min}} > 0$, we have
\begin{align}
G^t_p(\hat{z}^{t+1}_{p\ell}) - G^t_p(\hat{z}^{t+1}_p) \geq \textstyle \frac{\rho^{\text{\tiny min}}}{2} \| \hat{z}^{t+1}_{p\ell} - \hat{z}^{t+1}_p \|^2 >  \frac{\rho^{\text{\tiny min}} \zeta^2}{2}, \ \forall \ell \geq \ell'', \label{strong_triangle}
\end{align}
where the last inequality holds by \eqref{triangle}.
By adding $g_{p \ell} (\hat{z}^{t+1}_{p\ell}) \geq 0$ to both sides of \eqref{strong_triangle}, we derive the following inequality:
\begin{align}
\big\{ G^t_p(\hat{z}^{t+1}_{p\ell}) + g_{p \ell} (\hat{z}^{t+1}_{p\ell})  \big\} - G^t_p(\hat{z}^{t+1}_p)  > \textstyle \frac{\rho^{\text{\tiny min}} \zeta^2}{2} + g_{p \ell} (\hat{z}^{t+1}_{p\ell}), \ \forall \ell \geq \ell'',
\end{align}
which contradicts that the optimal value $G^t_p(\hat{z}^{t+1}_{p\ell}) + g_{p\ell}(\hat{z}^{t+1}_{p\ell})$ of \eqref{ADMM-2-Prox-log} converges to the optimal value $G^t_p(\hat{z}^{t+1}_p)$ of \eqref{DPADMM-2-Prox} as $\ell$ increases.
This completes the proof.
\end{subequations}

\subsection{Proof of Proposition \ref{prop:dpinapproximation}} \label{apx-prop:dpinapproximation}
Fix $t$, $p$, and $\ell$. It suffices to show that the following is true:
\begin{subequations}
\begin{align}
  e^{-\bar{\epsilon}} \ \textbf{pdf} \big( z^{t+1}_p (\ell; \mathcal{D}'_p) = \alpha \big)
  \leq \textbf{pdf} \big( z^{t+1}_p (\ell; \mathcal{D}_p) = \alpha \big)
  \leq e^{\bar{\epsilon}} \ \textbf{pdf} \big( z^{t+1}_p (\ell; \mathcal{D}'_p) = \alpha \big), \ \forall \alpha \in \mathbb{R}^{J \times K}, \label{DP_PDF}
\end{align}
where $\textbf{pdf}$ represents a probability density function.

Consider an $\alpha \in \mathbb{R}^{J \times K}$.
If we have $z^{t+1}_p(\ell;\mathcal{D}_p)=\alpha$, then $\alpha$ is the unique minimizer of \eqref{ADMM-2-Prox-log} because the objective function in \eqref{ADMM-2-Prox-log} is strongly convex.
Setting the gradient of the objective function in \eqref{ADMM-2-Prox-log} to zero yields
\begin{align}
  \tilde{\xi}^{t}_p (\alpha; \mathcal{D}_p)= & - f'_p(z^t_p;\mathcal{D}_p) + \rho^t(w^{t+1}-\alpha) + \lambda^t_p - \nabla g(\alpha; \ell)  - \textstyle \frac{1}{\eta^t} \big( \alpha - z_p^t \big), \label{correspondence}
\end{align}
where $\nabla g(\alpha; \ell) = \sum_{m=1}^M \frac{\ell e ^{\ell h_m(\alpha)}}{1+e^{\ell h_m(\alpha)}} \nabla h_m(\alpha)$.
Therefore, the relation between $\alpha$ and $\tilde{\xi}^t_p$ is bijective, which enables us to utilize the inverse function theorem (Theorem 17.2 in \cite{billingsley}), namely
  \begin{align}
  \textbf{pdf} \big( z^{t+1}_p (\ell; \mathcal{D}_p) = \alpha \big) \cdot \big|\textbf{det}[\nabla \tilde{\xi}^{t}_p (\alpha;\mathcal{D}_p) ] \big| = L \big( \tilde{\xi}^{t}_p (\alpha;\mathcal{D}_p); \bar{\epsilon}, \bar{\Delta}_p^t \big), \label{inverse}
  \end{align}
  where \textbf{det} represents a determinant of a matrix, $L$ is from \eqref{Laplace-pdf}, and
  $\nabla \tilde{\xi}^{t}_p (\alpha;\mathcal{D}_p)$ represents a Jacobian matrix of the mapping from $\alpha$ to $\tilde{\xi}^{t}_p$, namely
  \begin{align}
  \nabla \tilde{\xi}^{t}_p (\alpha;\mathcal{D}_p) =  (-\rho^t - 1/\eta^t) \mathbb{I}_{JK} - \nabla \Big( \sum_{m=1}^M \frac{\ell e ^{\ell h_m(\alpha)}}{1+e^{\ell h_m(\alpha)}} \nabla h_m(\alpha) \Big) , \label{Jacobian}
  \end{align}
  where $\mathbb{I}_{JK}$ is an identity matrix of $JK \times JK$ dimensions.
  As the Jacobian matrix is not affected by the dataset, we have
  \begin{align}
  \nabla \tilde{\xi}^{t}_p (\alpha;\mathcal{D}_p) = \nabla \tilde{\xi}^{t}_p (\alpha;\mathcal{D}'_p).  \label{jacobian}
  \end{align}
  Based on \eqref{inverse} and \eqref{jacobian}, we derive the following inequalities:
  \begin{align}
    & \frac{\textbf{pdf} \big( z^{t+1}_p(\ell; \mathcal{D}_p) = \alpha \big)}{\textbf{pdf} \big( z^{t+1}_p(\ell; \mathcal{D}'_p ) = \alpha \big)}  = \frac{L \big( \tilde{\xi}^{t}_p (\alpha; \mathcal{D}_p); \bar{\epsilon}, \bar{\Delta}_p^t \big) }{L \big( \tilde{\xi}^{t}_p (\alpha; \mathcal{D}'_p); \bar{\epsilon}, \bar{\Delta}_p^t \big) } \cdot \frac{\big|\textbf{det}[\nabla \tilde{\xi}^{t}_p (\alpha;\mathcal{D}'_p) ] \big|}{\big|\textbf{det}[\nabla \tilde{\xi}^{t}_p (\alpha;\mathcal{D}_p) ] \big|} = \frac{L \big( \tilde{\xi}^{t}_p (\alpha; \mathcal{D}_p); \bar{\epsilon}, \bar{\Delta}_p^t \big) }{L \big( \tilde{\xi}^{t}_p (\alpha; \mathcal{D}'_p); \bar{\epsilon}, \bar{\Delta}_p^t \big) } \nonumber \\
    & = \textbf{exp} \Big( (\bar{\epsilon}/\bar{\Delta}_p^t)(\| \tilde{\xi}^{t}_p (\alpha;\mathcal{D}'_p) \|_1 - \| \tilde{\xi}^{t}_p (\alpha;\mathcal{D}_p) \|_1) \Big) \leq \textbf{exp} \Big( (\bar{\epsilon}/\bar{\Delta}_p^t)(\| \tilde{\xi}^{t}_p (\alpha;\mathcal{D}'_p) - \tilde{\xi}^{t}_p (\alpha;\mathcal{D}_p) \|_1) \Big) \nonumber \\
    & = \textbf{exp} \Big( (\bar{\epsilon}/\bar{\Delta}_p^t)(\| f'_p(z^t_p;\mathcal{D}_p) - f'_p(z^t_p;\mathcal{D}'_p)\|_1) \Big) \leq \textbf{exp} (\bar{\epsilon}), \nonumber
  \end{align}
  \end{subequations}
  where \textbf{exp} represents the exponential function,
  the first equality is from \eqref{inverse},
  the second equality holds because of \eqref{jacobian},
  the first inequality holds because of the reverse triangle inequality, the last equality holds because of \eqref{correspondence}, and the last inequality holds because of \eqref{Delta}.
  Similarly, one can derive a lower bound in \eqref{DP_PDF}.
  Integrating $\alpha$ in \eqref{DP_PDF} over $\mathcal{S}$ yields \eqref{DP_1}.
  This completes the proof.

\subsection{Privacy Analsis for Algorithm \ref{algo:DP-IADMM-Trust} }  \label{apx:privacy-analysis-trust}
Using Lemma \ref{lemma:theorem1}, we will show that the \textit{constrained} problem \eqref{DPADMM-2-Trust} provides $\bar{\epsilon}$-DP.
For the rest of this section, we fix $t \in \mathbb{N}$ and $p \in [P]$.
For ease of exposition, we express the feasible region of \eqref{DPADMM-2-Trust} using $M$ inequalities, namely,
\begin{align*}
    \{ \mathcal{W} \cap \widehat{\mathcal{W}}^t_p \} \Leftrightarrow \{ z_p \in \mathbb{R}^{J \times K} : h_m^t (z_p) \leq 0, \ \forall m \in [M] \}, 
\end{align*}
where $h_m^t$ is convex and twice continuously differentiable. 

Similar to the unconstrained problem \eqref{ADMM-2-Prox-log}, we construct the following \textit{unconstrained} problem:
\begin{subequations}
  \label{ADMM-2-log}
\begin{align}
  & z^{t+1}_p(\ell, \mathcal{D}_p) = \textstyle \argmin_{z_p \in \mathbb{R}^{J \times K} } \ G^t(z_p; \mathcal{D}_p, \tilde{\xi}^t_p) + g^t(z_p; \ell), \\
  & g^t(z_p; \ell) :=  \textstyle \sum_{m=1}^M \ln ( 1+ e^{\ell h_m^t(z_p)}), \label{logfn}
\end{align}
\end{subequations}
where $\ell > 0$.

We will show that \eqref{ADMM-2-log} satisfies the pointwise convergence condition and provides $\bar{\epsilon}$-DP as in Propositions \ref{prop:pointwise_convergence_trust} and \ref{prop:dpinapproximation_trust}, respectively.
\begin{proposition}  \label{prop:pointwise_convergence_trust}
  For fixed $t$ and $p$, we have $\lim_{\ell \rightarrow \infty} z_p^{t+1}(\ell,\mathcal{D}_p) = z_p^{t+1}(\mathcal{D}_p)$, where $z_p^{t+1}(\mathcal{D}_p)$ and $z_p^{t+1}(\ell,\mathcal{D}_p)$ are from \eqref{DPADMM-2-Trust} and \eqref{ADMM-2-log}, respectively.
\end{proposition}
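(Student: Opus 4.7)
The plan is to mirror the argument used for Proposition \ref{prop:pointwise_convergence} almost verbatim, since the only structural change in \eqref{DPADMM-2-Trust} versus \eqref{DPADMM-2-Prox} is that the proximal term in the objective is replaced by the trust-region constraint $\|z_p - z_p^t\| \le \delta^t$. In particular, the objective function in \eqref{ADMM-2-log} is still strongly convex in $z_p$ with constant $\rho^{\text{min}} > 0$ (from Remark \ref{remark:ADMM-2-General}), so both \eqref{DPADMM-2-Trust} and \eqref{ADMM-2-log} admit a unique minimizer, which I will denote by $\hat{z}^{t+1}_p$ and $\hat{z}^{t+1}_{p\ell}$, respectively. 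Write $\widetilde{G}^t_p(z_p) := G^t(z_p; \mathcal{D}_p, \tilde{\xi}^t_p)$ for brevity.

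The first step is to establish optimal-value convergence via a sandwich. Since $\widetilde{G}^t_p$ is continuous at $\hat{z}^{t+1}_p$, for any $\epsilon > 0$ there exists $\delta > 0$ with $|\widetilde{G}^t_p(z) - \widetilde{G}^t_p(\hat{z}^{t+1}_p)| < \epsilon$ whenever $\|z-\hat{z}^{t+1}_p\| < \delta$. I then pick a point $\tilde{z} \in \mathcal{B}_{\delta}(\hat{z}^{t+1}_p) \cap \textbf{relint}(\mathcal{W} \cap \widehat{\mathcal{W}}^t_p)$; such a point exists because the feasible set is a nonempty convex subset of finite-dimensional Euclidean space and thus has nonempty relative interior, and we can take $\tilde{z}$ close to $\hat{z}^{t+1}_p$. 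At $\tilde{z}$ we have $h_m^t(\tilde{z}) < 0$ for all $m$, so $g^t(\tilde{z};\ell) \to 0$ as $\ell \to \infty$, and the optimality of $\hat{z}^{t+1}_{p\ell}$ in \eqref{ADMM-2-log} gives the chain
\begin{equation*}
\widetilde{G}^t_p(\hat{z}^{t+1}_{p\ell}) + g^t(\hat{z}^{t+1}_{p\ell};\ell) \le \widetilde{G}^t_p(\tilde{z}) + g^t(\tilde{z};\ell) < \widetilde{G}^t_p(\hat{z}^{t+1}_p) + 2\epsilon
\end{equation*}
for all sufficiently large $\ell$, yielding optimal-value convergence.

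The second step is to upgrade this to convergence of minimizers. Suppose for contradiction that $\hat{z}^{t+1}_{p\ell} \to \widehat{z} \ne \hat{z}^{t+1}_p$. Using strong convexity of $\widetilde{G}^t_p$ with constant $\rho^{\text{min}}$, together with the triangle inequality applied to $\|\hat{z}^{t+1}_{p\ell} - \hat{z}^{t+1}_p\|$ as in the original argument, I obtain a uniform lower bound $\widetilde{G}^t_p(\hat{z}^{t+1}_{p\ell}) - \widetilde{G}^t_p(\hat{z}^{t+1}_p) > \rho^{\text{min}} \zeta^2 / 2$ for all large $\ell$, where $\zeta := \|\widehat{z}-\hat{z}^{t+1}_p\|/2$. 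Since $g^t(\hat{z}^{t+1}_{p\ell};\ell) \ge 0$, this contradicts the optimal-value convergence established in the first step, completing the proof.

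The only mildly delicate point, which is also the main obstacle, is ensuring that the relative interior of $\mathcal{W} \cap \widehat{\mathcal{W}}^t_p$ contains a point near $\hat{z}^{t+1}_p$; this is immediate whenever the feasible set is nonempty (as otherwise \eqref{DPADMM-2-Trust} is ill-posed), but one should note it explicitly. Otherwise the argument is essentially a transcription of the proof in Appendix \ref{apx-prop:pointwise_convergence} with $h_m$ replaced by $h_m^t$ and the proximal term dropped.
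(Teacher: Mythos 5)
Your proposal is correct and is essentially the paper's own proof: the paper handles Proposition \ref{prop:pointwise_convergence_trust} by invoking the argument of Appendix \ref{apx-prop:pointwise_convergence} verbatim with $G^t_p(z_p) := G^t(z_p;\mathcal{D}_p,\tilde{\xi}^t_p)$ and $g_{p\ell}(z_p) := g^t(z_p;\ell)$, which is exactly the transcription you carry out. Your explicit remark about choosing $\tilde{z}$ in the relative interior of $\mathcal{W} \cap \widehat{\mathcal{W}}^t_p$ near $\hat{z}^{t+1}_p$ is a reasonable extra care that the paper's proof leaves implicit.
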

\begin{proof}
Fix $t$ and $p$.
Suppose that $\hat{z}^{t+1}_p$ (resp., $\hat{z}^{t+1}_{p\ell}$) is the unique optimal solution of an optimization problem in \eqref{DPADMM-2-Trust} (resp., \eqref{ADMM-2-log}). One can follow the proof in Appendix \ref{apx-prop:pointwise_convergence} by setting $G^t_p(z_p) := G^t(z_p;\mathcal{D}_p,\tilde{\xi}^t_p)$ and $g_{p\ell}(z_p) := g^t(z_p;\ell)$.
\end{proof}

\begin{proposition}   \label{prop:dpinapproximation_trust}
  For fixed $t$, $p$, and $\ell$, \eqref{ADMM-2-log} provides $\bar{\epsilon}$-DP, namely, satisfying
  \begin{align*}    
        e^{-\bar{\epsilon}} \ \mathbb{P} \big( z^{t+1}_p(\ell;  \mathcal{D}'_p) \in \mathcal{S} \big) \leq \mathbb{P} \big( z^{t+1}_p (\ell; \mathcal{D}_p )\in \mathcal{S}  \big)  \leq e^{\bar{\epsilon}} \ \mathbb{P} \big( z^{t+1}_p(\ell;  \mathcal{D}'_p) \in \mathcal{S} \big)
  \end{align*}
  for all $\mathcal{S} \subset \mathbb{R}^{J \times K}$ and all $\mathcal{D}'_p \in \widehat{\mathcal{D}}_p$, where $\widehat{\mathcal{D}}_p$ is from \eqref{DataCollect}.
\end{proposition}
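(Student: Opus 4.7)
The plan is to mirror the proof of Proposition \ref{prop:dpinapproximation} (Appendix \ref{apx-prop:dpinapproximation}), adapting each step to the unconstrained smoothed reformulation \eqref{ADMM-2-log} of the trust-region subproblem. As in that proof, it suffices to establish the pointwise density bound
\[
e^{-\bar{\epsilon}} \ \textbf{pdf}\bigl(z^{t+1}_p(\ell;\mathcal{D}'_p)=\alpha\bigr) \leq \textbf{pdf}\bigl(z^{t+1}_p(\ell;\mathcal{D}_p)=\alpha\bigr) \leq e^{\bar{\epsilon}} \ \textbf{pdf}\bigl(z^{t+1}_p(\ell;\mathcal{D}'_p)=\alpha\bigr)
\]
for every $\alpha \in \mathbb{R}^{J \times K}$ and then integrate over $\mathcal{S}$.

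First, I would set up the bijection between the output $\alpha$ and the noise vector $\tilde{\xi}^t_p$. The objective of \eqref{ADMM-2-log} is strongly convex (strong convexity of $G^t$ with constant $\rho^{\text{min}}>0$ inherited from the quadratic penalty, plus convexity of $g^t$), so the minimizer is unique. Writing the first-order optimality condition yields
\[
\tilde{\xi}^t_p(\alpha;\mathcal{D}_p) \;=\; -f'_p(z^t_p;\mathcal{D}_p) + \rho^t(w^{t+1}-\alpha) + \lambda^t_p - \nabla g^t(\alpha;\ell),
\]
which is the analog of \eqref{correspondence} \emph{without} the proximal term (since, in the trust-region formulation, proximity is enforced through the constraint rather than the objective). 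This defines a bijection between $\alpha$ and $\tilde{\xi}^t_p$ for fixed data.

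Second, I would invoke the inverse function theorem, exactly as in \eqref{inverse}, to write $\textbf{pdf}(z^{t+1}_p(\ell;\mathcal{D}_p)=\alpha) \cdot |\textbf{det}[\nabla \tilde{\xi}^t_p(\alpha;\mathcal{D}_p)]| = L(\tilde{\xi}^t_p(\alpha;\mathcal{D}_p);\bar{\epsilon},\bar{\Delta}_p^t)$. The crucial observation is that the Jacobian
\[
\nabla \tilde{\xi}^t_p(\alpha;\mathcal{D}_p) \;=\; -\rho^t \mathbb{I}_{JK} \;-\; \nabla\!\Bigl( \textstyle\sum_{m=1}^M \tfrac{\ell e^{\ell h_m^t(\alpha)}}{1+e^{\ell h_m^t(\alpha)}} \nabla h_m^t(\alpha) \Bigr)
\]
does not depend on $\mathcal{D}_p$, because the constraint functions $h_m^t$ describing $\mathcal{W}\cap\widehat{\mathcal{W}}^t_p$ depend only on $z^t_p$ (which is conditioned upon in a per-iteration DP analysis), and $\rho^t$, $w^{t+1}$, $\lambda^t_p$ are likewise fixed. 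Hence $\nabla \tilde{\xi}^t_p(\alpha;\mathcal{D}_p) = \nabla \tilde{\xi}^t_p(\alpha;\mathcal{D}'_p)$ and the Jacobian determinants cancel in the density ratio, reducing the problem to bounding the ratio of Laplace densities at two different noise vectors.

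Finally, I would apply the reverse triangle inequality to $\|\tilde{\xi}^t_p(\alpha;\mathcal{D}'_p)\|_1 - \|\tilde{\xi}^t_p(\alpha;\mathcal{D}_p)\|_1 \leq \|\tilde{\xi}^t_p(\alpha;\mathcal{D}'_p) - \tilde{\xi}^t_p(\alpha;\mathcal{D}_p)\|_1$, observe from the optimality formula that this difference equals $f'_p(z^t_p;\mathcal{D}_p) - f'_p(z^t_p;\mathcal{D}'_p)$ (the $\nabla g^t$, $\rho^t\alpha$, $\lambda^t_p$, and $w^{t+1}$ terms all cancel), and invoke the sensitivity bound \eqref{Delta} to obtain the factor $\exp(\bar{\epsilon})$. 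The symmetric argument yields the lower bound, which combined with Proposition \ref{prop:pointwise_convergence_trust} and Lemma \ref{lemma:theorem1} gives $\bar{\epsilon}$-DP for \eqref{DPADMM-2-Trust} (the latter extension being the content of Theorem \ref{thm:privacy_trust}).

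The main obstacle I anticipate is justifying that the Jacobian is genuinely data-independent: one must be careful that the trust-region center $z^t_p$ and the global iterate $w^{t+1}$ are treated as fixed conditioning events at iteration $t$, so that the $h_m^t$ describing the feasible set do not implicitly carry dataset dependence through the current iteration. Aside from this bookkeeping, the derivation parallels Appendix \ref{apx-prop:dpinapproximation} with only the proximal term removed from the gradient and Jacobian expressions.
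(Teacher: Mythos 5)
Your proposal is correct and follows essentially the same route as the paper, which simply notes that the argument of Appendix \ref{apx-prop:dpinapproximation} carries over by setting $\eta^t=\infty$ (i.e., dropping the proximal term from the optimality condition and the $1/\eta^t$ term from the Jacobian, and replacing $h_m$ by $h_m^t$), exactly as you spell out.
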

\begin{proof}
One can follow the proof in Appendix \ref{apx-prop:dpinapproximation} by setting $\eta^t = \infty$.  
\end{proof}
Based on Propositions \ref{prop:pointwise_convergence_trust} and \ref{prop:dpinapproximation_trust}, Lemma \ref{lemma:theorem1} can be used for proving the following theorem.
\begin{theorem} \label{thm:privacy_trust} 
  For fixed $t$ and $p$, \eqref{DPADMM-2-Trust} provides $\bar{\epsilon}$-DP, namely, satisfying
  \begin{align*}
    e^{-\bar{\epsilon}} \ \mathbb{P}( z^{t+1}_p(\mathcal{D}'_p) \in \mathcal{S} ) \leq \mathbb{P}( z^{t+1}_p(\mathcal{D}_p) \in \mathcal{S}) \leq e^{\bar{\epsilon}} \ \mathbb{P}( z^{t+1}_p( \mathcal{D}'_p) \in \mathcal{S} ), 
  \end{align*}
  for all $\mathcal{S} \subset \mathbb{R}^{J \times K}$ and all $\mathcal{D}'_p \in \widehat{\mathcal{D}}_p$, where $\widehat{\mathcal{D}}_p$ is from \eqref{DataCollect}.
\end{theorem}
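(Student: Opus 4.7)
The plan is to mirror the strategy used for Theorem \ref{thm:privacy}, exploiting the fact that the two key ingredients needed to invoke Lemma \ref{lemma:theorem1} have already been established for the trust-region variant in Propositions \ref{prop:pointwise_convergence_trust} and \ref{prop:dpinapproximation_trust}. Concretely, I would view the randomized algorithm $\mathcal{A}$ induced by the Laplace noise $\tilde{\xi}^t_p$ that outputs the unique minimizer $z^{t+1}_p(\mathcal{D}_p)$ of the constrained subproblem \eqref{DPADMM-2-Trust}, and the sequence of randomized algorithms $\{\mathcal{A}_\ell\}$ that output the unique minimizers $z^{t+1}_p(\ell, \mathcal{D}_p)$ of the smoothed unconstrained subproblem \eqref{ADMM-2-log}. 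The uniqueness of both minimizers is guaranteed by strong convexity of $G^t(\cdot;\mathcal{D}_p,\tilde{\xi}^t_p)$ (with constant $\rho^{\min}$, as noted in Remark \ref{remark:ADMM-2-General}) together with convexity of the logarithmic penalty $g^t(\cdot;\ell)$.

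Next, I would quote Proposition \ref{prop:dpinapproximation_trust} to assert that each $\mathcal{A}_\ell$ satisfies $\bar{\epsilon}$-DP, and Proposition \ref{prop:pointwise_convergence_trust} to assert the required pointwise convergence
\[
\lim_{\ell \to \infty} z^{t+1}_p(\ell,\mathcal{D}_p) \;=\; z^{t+1}_p(\mathcal{D}_p),
\]
holding for every realization of $\tilde{\xi}^t_p$ and every dataset $\mathcal{D}_p$. With both hypotheses in hand, Lemma \ref{lemma:theorem1} then applies verbatim, yielding that $\mathcal{A}$ itself provides $\bar{\epsilon}$-DP. Translating this statement into the inequality form of Definition \ref{def:differential_privacy_1} gives exactly the conclusion of Theorem \ref{thm:privacy_trust}.

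Since the two enabling propositions are already proved in the appendix, the theorem should reduce to a short three-line argument: set up the sequence $\{\mathcal{A}_\ell\}$, cite Propositions \ref{prop:pointwise_convergence_trust} and \ref{prop:dpinapproximation_trust}, and invoke Lemma \ref{lemma:theorem1}. The only subtlety worth flagging is ensuring that the trust-region constraint $\|z_p - z^t_p\| \le \delta^t$ in \eqref{trust-region} can indeed be represented by finitely many convex, twice continuously differentiable inequalities $h_m^t(z_p)\le 0$ needed for the logarithmic smoothing in \eqref{logfn}; the Euclidean ball and any polyhedral $\mathcal{W}$ plainly satisfy this, and in the paper's experiments the infinity-norm trust region likewise fits after replacing it with finitely many linear inequalities. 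Beyond that technicality, no real obstacle is expected and no new estimates have to be derived.
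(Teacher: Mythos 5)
Your proposal matches the paper's argument exactly: the paper proves Theorem~\ref{thm:privacy_trust} by combining Propositions~\ref{prop:pointwise_convergence_trust} and~\ref{prop:dpinapproximation_trust} (per-$\ell$ $\bar{\epsilon}$-DP and pointwise convergence of the smoothed subproblems \eqref{ADMM-2-log} to \eqref{DPADMM-2-Trust}) and then invoking Lemma~\ref{lemma:theorem1}, just as you describe. Your remark about representing the trust region by finitely many convex, twice continuously differentiable inequalities $h^t_m(z_p)\le 0$ is consistent with the paper's own setup in Appendix~\ref{apx:privacy-analysis-trust}.
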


\subsection{Existence of $U_1$, $U_2$, and $U_3$ in \eqref{def_upper}} \label{apx:existence_of_UBs}

Fix $p \in [P]$.\\
(Existence of $U_2$) 
$U_2$ is well-defined because the objective function $\|u-v\|$ is continuous and the feasible region $\mathcal{W}$ is compact. \\
(Existence of $U_1$)
The necessary and sufficient condition of Assumption \ref{assump:convergence} (iii) is that, for all $u \in \mathcal{W}$  and $v \in \partial f_p(u)$, $\| v \|_{\star} \leq L$, where $\|\cdot\|_{\star}$ is the dual norm.
As the dual norm of the Euclidean norm is the Euclidean norm, we have $\| f'_p(u) \| \leq L$.
As the objective function, which is a maximum of finite continuous functions, is continuous and $\mathcal{W}$ is compact, $U_1$ is well-defined. \\
(Existence of $U_3$)
From the norm inequality, we have
\begin{align*}
& \| f'_p(u;\mathcal{D}_p) - f'_p(u;\mathcal{D}'_p)\|_1 \leq \sqrt{JK} \| f'_p(u;\mathcal{D}_p) - f'_p(u;\mathcal{D}'_p)\|_2 \\
& \leq \sqrt{JK} \{ \| f'_p(u;\mathcal{D}_p) \|_2  + \| f'_p(u;\mathcal{D}'_p)\|_2 \} \leq 2L \sqrt{JK}, \ \forall u \in \mathcal{W},
\end{align*}
where the last inequality holds by Assumption \ref{assump:convergence} (iii). Therefore, $U_3$ is well-defined.

\subsection{Proof of Proposition \ref{prop:case_2_basic_inequality}} \label{apx-prop:case_2_basic_inequality} 
Before getting into details, we note that
\begin{align}
(a-b)^{\top} P (c-d) = \frac{1}{2} \{ \| a-d \|^2_P  - \| a-c \|^2_P + \| c-b \|^2_P - \| d - b \|^2_P \}  \label{rule}
\end{align}
for any symmetric matrix $P$. We fix $t$ and $p$ for the rest of this proof.

First, the optimality condition of \eqref{DPADMM-2-Prox} is given by 
\begin{align*}
\langle f'_p(z^t_p) - \rho^t(w^{t+1}-z^{t+1}_p + \frac{1}{\rho^t}(\lambda^t_p - \tilde{\xi}^{t}_p)) + \frac{1}{\eta^t}(z^{t+1}_p - z_p^t), z^{t+1}_p - z_p \rangle \leq 0, \ \forall z_p \in \mathcal{W}.
\end{align*}
By utilzing \eqref{ADMM-3} and \eqref{rule}, for all $z_p \in \mathcal{W}$, we have
\begin{align}
\langle f'_p(z^t_p) - \lambda^{t+1}_p + \tilde{\xi}^{t}_p, z^{t+1}_p - z_p \rangle \leq \frac{1}{2\eta^t} \{ \| z_p - z_p^t\|^2 - \| z_p - z_p^{t+1}\|^2 - \| z_p^{t+1} - z_p^t\|^2 \}.
\end{align}

Second, it follows from the convexity of $f_p$ that, for all $z_p \in \mathcal{W}$, 
\begin{align}
& f_p(z^t_p) - f_p(z_p)  \leq \langle f'_p(z^t_p), z^t_p - z_p \rangle  \nonumber \\
\Leftrightarrow \ & f_p(z^t_p) - f_p(z_p) - \langle  \lambda^{t+1}_p, z^{t+1}_p-z_p \rangle \leq \langle f'_p(z^t_p), z^t_p-z^{t+1}_p \rangle + \langle f'_p(z^t_p) - \lambda^{t+1}_p, z^{t+1}_p - z_p \rangle \nonumber \\
& = \langle  f'_p(z^t_p) + \tilde{\xi}^t_p, z^t_p-z^{t+1}_p \rangle + \langle f'_p(z^t_p) - \lambda^{t+1}_p + \tilde{\xi}^{t}_p, z^{t+1}_p-z_p \rangle + \langle  \tilde{\xi}^{t}_p, z_p - z^t_p \rangle \nonumber \\
& \leq \frac{\eta^t}{2} \| f'(z^t_p) + \tilde{\xi}^{t}_p\|^2  + \frac{1}{2\eta^t} \Big\{ \|z_p-z^t_p\|^2- \|z_p-z^{t+1}_p\|^2   \Big\}  + \langle  \tilde{\xi}^{t}_p, z_p - z^t_p \rangle,  \nonumber 
\end{align}
where the last inequality holds because of Young's inequality (i.e., $ab \leq \frac{a^2}{2\eta} + \frac{\eta b^2}{2}$) and \eqref{rule}.
This completes the proof.

\subsection{Proof of Theorem \ref{thm:convergence_rate_prox}} \label{apx-thm:convergence_rate_prox}

For ease of exposition, we introduce the following notations:
\begin{align}
  & z := [z_1^{\top}, \ldots, z_P^{\top}]^{\top},  \ \ \lambda := [\lambda_1^{\top}, \ldots, \lambda_P^{\top}]^{\top}, \ \ \tilde{\lambda} := [\tilde{\lambda}_1^{\top}, \ldots, \tilde{\lambda}_P^{\top}]^{\top}, \label{Notation}   \\
  & \nabla f(z):=[\nabla f_1(z_1)^{\top}, \ldots, \nabla f_P(z_P)^{\top} ]^{\top}, \ \ \ F(z) := \textstyle \sum_{p=1}^P f_p(z_p), \nonumber\\
  & x := \begin{bmatrix}
    w \\ z \\ \lambda
    \end{bmatrix}, \
    x^t := \begin{bmatrix}
    w^t \\ z^t \\ \lambda^t
    \end{bmatrix}, \
    \tilde{x}^t := \begin{bmatrix}
    w^{t+1} \\ z^{t} \\ \tilde{\lambda}^t
    \end{bmatrix}, \
    A := \begin{bmatrix}
      \mathbb{I}_J \\ \vdots \\ \mathbb{I}_J
    \end{bmatrix}_{PJ \times J}, \
    G := \begin{bmatrix}
        0 & 0 & A^{\top}     \\
        0 & 0 & -\mathbb{I}_{PJ}     \\
        -A & \mathbb{I}_{PJ} & 0     \\
    \end{bmatrix}, \nonumber  \\
    & x^{(T)} := \textstyle \frac{1}{T} \sum_{t=1}^T \tilde{x}^{t}, \ \
w^{(T)} := \textstyle \frac{1}{T} \sum_{t=1}^T w^{t+1}, \ \
z^{(T)} := \textstyle \frac{1}{T} \sum_{t=1}^T z^{t}, \ \
\lambda^{(T)} := \textstyle \frac{1}{T} \sum_{t=1}^T \tilde{\lambda}^{t}. \nonumber
\end{align}

For fixed $t$, we add \eqref{optimality_condition} and \eqref{case_2_basic_inequality} to obtain the inequalities $\text{LHS}^t(w,z) \leq \text{RHS}^t(z)$ for all $w$ and $z_p \in \mathcal{W}$, where
\begin{subequations}
\begin{align}
  & \text{LHS}^t(w,z) := \textstyle\sum_{p=1}^P \big\{ f_p(z^t_p) - f_p(z_p) - \langle \lambda^{t+1}_p, z^{t+1}_p - z_p \rangle + \langle  \tilde{\lambda}^t_p, w^{t+1}-w \rangle \big\}, \label{case_1_LHS_0}  \\
  & \text{RHS}^t(z) := \textstyle\sum_{p=1}^P  \big\{ \frac{\eta^t \| f'(z^t_p) + \tilde{\xi}^{t}_p \|^2}{2}   + \frac{1}{2\eta^t} \big( \|z_p-z^t_p\|^2  - \|z_p-z^{t+1}_p\|^2 \big) + \langle \tilde{\xi}^{t}_p, z_p - z^t_p \rangle \big\}. \label{case_1_RHS_0}
\end{align}
\end{subequations}
In the following Lemma, we first simplify the left-hand side \eqref{case_1_LHS_0}.
\begin{lemma} \label{lemma:simplify_LHS}
Based on the notations in \eqref{Notation}, for any $\lambda$, we have
\begin{align}
  & \text{LHS}^t(w,z)  = F(z^t) - F(z) + \langle \tilde{x}^{t} - x, G x  \rangle - \langle \lambda, z^{t+1} - z^t \rangle  \nonumber \\
  & \hspace{12mm} + \big(\rho^t/2\big) \big( \|z - z^{t+1}\|^2  - \|z-z^t\|^2 \big) + \big( 1/(2\rho^t) \big) \big( \| \lambda - \lambda^{t+1} \|^2  - \| \lambda - \lambda^t \|^2 \big). \label{case_1_LHS}
\end{align}
\end{lemma}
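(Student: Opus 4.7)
My plan is to treat Lemma \ref{lemma:simplify_LHS} as a purely algebraic identity: the function part $\sum_p[f_p(z^t_p)-f_p(z_p)]$ is immediately $F(z^t)-F(z)$ by definition of $F$, so the work reduces to rewriting the bilinear/quadratic expression
\begin{align*}
  B := \textstyle\sum_{p=1}^P\bigl[\langle\tilde{\lambda}^t_p,w^{t+1}-w\rangle - \langle\lambda^{t+1}_p,z^{t+1}_p - z_p\rangle\bigr]
\end{align*}
in the form required by the right-hand side. Throughout, I would repeatedly exploit the two defining dual updates $\tilde{\lambda}^t_p=\lambda^t_p+\rho^t(w^{t+1}-z^t_p)$ and $\lambda^{t+1}_p=\lambda^t_p+\rho^t(w^{t+1}-z^{t+1}_p)$, which in particular give $\rho^t(w^{t+1}-z^{t+1}_p)=\lambda^{t+1}_p-\lambda^t_p$ and $\tilde{\lambda}^t_p-\lambda^{t+1}_p=\rho^t(z^{t+1}_p-z^t_p)$.

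First I would inject the dummy dual $\lambda$ into $B$ via the trivial splittings $\tilde{\lambda}^t_p=(\tilde{\lambda}^t_p-\lambda_p)+\lambda_p$ and $\lambda^{t+1}_p=(\lambda^{t+1}_p-\lambda_p)+\lambda_p$. The part of $B$ that is linear in $\lambda_p$ then rearranges, after writing $-\langle\lambda_p,z^{t+1}_p-z_p\rangle = \langle\lambda_p,z_p-z^t_p\rangle - \langle\lambda_p,z^{t+1}_p-z^t_p\rangle$, into a sum that can be recognized term-by-term as $\langle\tilde{x}^t-x,Gx\rangle-\langle\lambda,z^{t+1}-z^t\rangle$: expanding $Gx=[A^{\top}\lambda;\,-\lambda;\,-Aw+z]^{\top}$ and using $A^{\top}\lambda=\sum_p\lambda_p$ produces exactly $\sum_p[\langle\lambda_p,w^{t+1}-w\rangle+\langle\lambda_p,z_p-z^t_p\rangle+\langle\tilde{\lambda}^t_p-\lambda_p,z_p-w\rangle]$, which absorbs both the linear-in-$\lambda_p$ contribution and, combined with the residual, cleanly pulls off the $-\langle\lambda,z^{t+1}-z^t\rangle$ term.

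The remaining residual is $R:=\sum_p[\langle\tilde{\lambda}^t_p-\lambda_p,w^{t+1}-z_p\rangle-\langle\lambda^{t+1}_p-\lambda_p,z^{t+1}_p-z_p\rangle]$, which is where I would invoke the polarization identity \eqref{rule}. I would decompose $\tilde{\lambda}^t_p-\lambda_p$ and $\lambda^{t+1}_p-\lambda_p$ through $\lambda^t_p$, splitting $R$ into a $\rho^t$-weighted primal piece $\rho^t\sum_p[\langle w^{t+1}-z^t_p,w^{t+1}-z_p\rangle-\langle w^{t+1}-z^{t+1}_p,z^{t+1}_p-z_p\rangle]$ and a $1/\rho^t$-weighted dual piece $(1/\rho^t)\sum_p\langle\lambda^t_p-\lambda_p,\lambda^{t+1}_p-\lambda^t_p\rangle$. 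Applying \eqref{rule} to each inner product telescopes the primal piece into $\tfrac{\rho^t}{2}(\|z-z^{t+1}\|^2-\|z-z^t\|^2)$ up to $\rho^t$-squared-norm terms in $w^{t+1}-z^t_p$ and $w^{t+1}-z^{t+1}_p$, which the dual update converts to $\tfrac{1}{2\rho^t}\|\tilde{\lambda}^t_p-\lambda^t_p\|^2$ and $\tfrac{1}{2\rho^t}\|\lambda^{t+1}_p-\lambda^t_p\|^2$; similarly \eqref{rule} applied to the dual piece produces $\tfrac{1}{2\rho^t}(\|\lambda-\lambda^{t+1}\|^2-\|\lambda-\lambda^t\|^2)$ plus a $-\tfrac{1}{2\rho^t}\|\lambda^{t+1}_p-\lambda^t_p\|^2$ correction that exactly cancels one of the stray pieces from the primal side.

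The main obstacle, and the entire substance of the proof, is this bookkeeping step: several bilinear terms must be coupled across the two pieces so that the $\|\lambda^{t+1}_p-\lambda^t_p\|^2$ contributions cancel and the $\tilde{\lambda}^t_p$ contributions reorganize into the clean telescoping form stated in \eqref{case_1_LHS}. My strategy here is to systematically pick the splits $(a,b,c,d)$ in \eqref{rule} so that each $w^{t+1}-z^t_p$ (or $w^{t+1}-z^{t+1}_p$) factor is immediately converted into the corresponding dual difference through the update rule, which makes the cancellations transparent. Once this accounting closes, combining with the $F(z^t)-F(z)$ contribution from the $f_p$ terms yields the claimed identity.
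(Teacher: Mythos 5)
Your setup is essentially the paper's own argument written componentwise instead of in the stacked block notation with $G$: you inject $\lambda$, identify the linear part with $\langle \tilde{x}^t-x,Gx\rangle-\langle\lambda,z^{t+1}-z^t\rangle$ (this identification is correct), and hit the residual with the polarization identity \eqref{rule}. The decomposition of the residual into the $\rho^t$-weighted primal piece and the $(1/\rho^t)$-weighted dual piece is also correct. The problem is the final claim that the bookkeeping ``closes.'' It does not: carrying out \eqref{rule} on $\rho^t\langle w^{t+1}-z^t_p,\,w^{t+1}-z_p\rangle$ produces, besides $\tfrac{\rho^t}{2}(\|z_p-z^{t+1}_p\|^2$-type telescoping terms and the $\tfrac{\rho^t}{2}\|w^{t+1}-z^{t+1}_p\|^2=\tfrac{1}{2\rho^t}\|\lambda^{t+1}_p-\lambda^t_p\|^2$ piece that cancels against the dual correction, an additional term $\tfrac{\rho^t}{2}\|w^{t+1}-z^t_p\|^2=\tfrac{1}{2\rho^t}\|\tilde{\lambda}^t_p-\lambda^t_p\|^2$ that has no partner. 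A one-dimensional check with $P=1$, $\rho^t=1$, $\lambda^t=z^t=0$, $w^{t+1}=1$, $z^{t+1}=\tfrac12$, and $w=z=\lambda=0$ gives $\text{LHS}^t=0.75$ while the right-hand side of \eqref{case_1_LHS} evaluates to $0.25$; the discrepancy is exactly $\tfrac{\rho^t}{2}\|Aw^{t+1}-z^t\|^2=0.5$.

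So the exact statement is $\text{LHS}^t(w,z)=\text{RHS of \eqref{case_1_LHS}}+\tfrac{\rho^t}{2}\sum_p\|w^{t+1}-z^t_p\|^2$, i.e.\ only the inequality ``$\geq$'' in \eqref{case_1_LHS} holds. This mirrors an imprecision in the lemma as stated: the paper's own proof explicitly drops $\|\tilde{\lambda}^t-\lambda^t\|^2_{(1/\rho^t)\mathbb{I}}\geq 0$ in \eqref{diminish_Fourth_term} and ends with ``$\geq$,'' and only the lower bound is used in Lemma \ref{lemma:lower_bound_LHS}, so the downstream convergence result is unaffected. But your write-up asserts an exact cancellation that does not occur; you need either to retain the extra nonnegative term or to state and prove the lemma as an inequality rather than an identity.
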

\begin{proof}
  Based on the notations in \eqref{Notation}, we have
  \begin{subequations}
  \begin{align}
    \lambda^{t+1} = \lambda^t + \rho^t (A w^{t+1} - z^{t+1}), \ \  \tilde{\lambda}^{t} = \lambda^t + \rho^t (A w^{t+1} - z^t), \ \ \textstyle\sum_{p=1}^P \tilde{\lambda}^t_p = A^{\top} \tilde{\lambda}^t.
  \end{align}
  We rewrite \eqref{case_1_LHS_0} as
  \begin{align}
  & F(z^t) - F(z) - \langle \lambda^{t+1}, z^{t+1} - z \rangle + \langle A^{\top} \tilde{\lambda}^t, w^{t+1} - w \rangle  \nonumber \\
= \ &  F(z^t) - F(z) + \Biggr\langle
  \begin{bmatrix}
  w^{t+1} - w \\
  z^{t+1} - z \\
  \tilde{\lambda}^t - \lambda
  \end{bmatrix}
  , \
  \begin{bmatrix}
  A^{\top} \tilde{\lambda}^t             \\
  -\tilde{\lambda}^t \\
  - Aw^{t+1} + z^{t+1}
  \end{bmatrix}
  -
  \begin{bmatrix}
  0 \\
  \rho^t (z^t - z^{t+1})        \\
  (\lambda^t- \lambda^{t+1} ) / \rho^t
  \end{bmatrix}
  \Biggr \rangle. \label{diminish_LHS}
  \end{align}

The third term in \eqref{diminish_LHS} can be written as
\begin{align}
    &\Biggr \langle
    \begin{bmatrix}
        w^{t+1} - w \\
        z^{t+1} - z  \\
        \tilde{\lambda}^t - \lambda
    \end{bmatrix}
    ,
    \begin{bmatrix}
    A^{\top} \tilde{\lambda}^t \\
    - \tilde{\lambda}^t \\
    -Aw^{t+1} + z^{t+1}
    \end{bmatrix}
    \Biggr \rangle
    =
    \Biggr \langle
    \begin{bmatrix}
        w^{t+1} - w \\
        z^{t} - z  \\
        \tilde{\lambda}^t - \lambda
    \end{bmatrix}
    ,
    \begin{bmatrix}
    A^{\top} \tilde{\lambda}^t \\
    - \tilde{\lambda}^t \\
    -Aw^{t+1} + z^{t}
    \end{bmatrix}
    \Biggr \rangle
    \nonumber \\
    & + \langle z^t - z^{t+1}, \tilde{\lambda}^t \rangle + \langle \tilde{\lambda}^t - \lambda, z^{t+1} - z^t \rangle  = \langle \tilde{x}^{t} - x, G\tilde{x}^{t} \rangle  - \langle \lambda, z^{t+1} - z^t \rangle  \nonumber \\
    = & \langle \tilde{x}^{t} - x, G(\tilde{x}^{t} - x) \rangle + \langle \tilde{x}^{t} - x , G x \rangle  - \langle \lambda, z^{t+1} - z^t \rangle \nonumber \\
    =& \langle \tilde{x}^{t} - x, G x  \rangle - \langle \lambda, z^{t+1} - z^t \rangle, \label{diminish_Third_term}
\end{align}
where the last equality holds because $G$ is a skew-symmetric matrix and thus $\langle \tilde{x}^{t} - x, G(\tilde{x}^{t} - x ) \rangle=0$.

The last term in \eqref{diminish_LHS} can be written as
    \begin{align}
        & \Biggr\langle
        \begin{bmatrix}
        w - w^{t+1} \\
        z - z^{t+1}   \\
        \lambda - \tilde{\lambda}^t
        \end{bmatrix}
        , \
        \begin{bmatrix}
        0 \\
        \rho^t (z^t-z^{t+1})  \\
        (\lambda^t- \lambda^{t+1} ) / \rho^t
        \end{bmatrix}
        \Biggr \rangle   \nonumber      \\
       = &  \big\{ \| z - z^{t+1}\|_{\rho^t \mathbb{I}}^2 - \|z-z^t\|_{\rho^t \mathbb{I}}^2 + \|z^{t+1} - z^t\|_{\rho^t \mathbb{I}}^2 \big\} / 2
         \nonumber \\
        & + \big\{ \| \lambda - \lambda^{t+1} \|^2_{(1/\rho^t)\mathbb{I}} - \| \lambda - \lambda^t \|^2_{(1/\rho^t)\mathbb{I}} + \|\tilde{\lambda}^{t} - \lambda^t \|^2_{(1/\rho^t)\mathbb{I}} - \| \tilde{\lambda}^t - \lambda^{t+1} \|^2_{(1/\rho^t)\mathbb{I}} \big\} / 2 \nonumber \\
        \geq & \big(\rho^t/2\big) \big( \|z - z^{t+1}\|^2  - \|z-z^t\|^2 \big) + \big( 1/(2\rho^t) \big) \big( \| \lambda - \lambda^{t+1} \|^2  - \| \lambda - \lambda^t \|^2 \big), \label{diminish_Fourth_term}
    \end{align}
    \end{subequations}
    where the equality holds because $(a-b)^{\top}P(c-d)= \big\{\|a-d\|^2_P - \|a-c\|^2_P + \|b-c\|^2_P - \|b-d\|^2_P \big\}/2$ for any symmetric matrix $P$, and the ineqaulity holds because
$\|\tilde{\lambda}^{t} - \lambda^t \|^2_{(1/\rho^t)\mathbb{I}} \geq 0$
and
$\|z^{t+1} - z^t\|_{\rho^t \mathbb{I}}^2 = \| \tilde{\lambda}^t - \lambda^{t+1} \|^2_{(1/\rho^t)\mathbb{I}}$.
\end{proof}
Based on Lemma \ref{lemma:simplify_LHS} and notations in \eqref{Notation}, we derive a lower bound on $\frac{1}{T} \sum_{t=1}^T \text{LHS}^t(w,z)$ in the following lemma.
\begin{lemma} \label{lemma:lower_bound_LHS}
We define $\text{LHS}(w,z) :=\frac{1}{T} \sum_{t=1}^T \text{LHS}^t(w,z)$ and $\text{RHS}(z) := \frac{1}{T} \sum_{t=1}^T \text{RHS}^t(z)$.
For all $w$ and $z_p \in \mathcal{W}$, we have
\begin{align}
  & \text{LHS}(w,z) \geq F(z^{(T)}) - F(z) + \big\langle x^{(T)} - x, Gx \big\rangle \nonumber  \\
  & \hspace{20mm} - \frac{1}{T}  \Big( \langle \lambda, z^{T+1} - z^1\rangle + \frac{U_2 \rho^{\text{max}}}{2} + \frac{1}{2\rho^1} \| \lambda - \lambda^1 \|^2 \Big). \label{lower_bound_LHS}
\end{align}
\end{lemma}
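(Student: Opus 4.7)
The plan is to begin with the per-iteration identity \eqref{case_1_LHS} supplied by Lemma \ref{lemma:simplify_LHS}, sum it over $t=1,\ldots,T$, divide by $T$, and lower-bound the resulting expression term by term. The first two terms on the right-hand side of \eqref{case_1_LHS} are handled by simple averaging: convexity of $F$ (via Jensen's inequality) gives $\frac{1}{T}\sum_{t=1}^{T} F(z^t)\geq F(z^{(T)})$, and linearity of $\langle \cdot, Gx\rangle$ together with the definition $x^{(T)}:=\tfrac{1}{T}\sum_{t=1}^T \tilde{x}^t$ yields $\frac{1}{T}\sum_{t=1}^{T}\langle \tilde{x}^t-x,Gx\rangle = \langle x^{(T)}-x, Gx\rangle$. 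The term $-\langle \lambda, z^{t+1}-z^t\rangle$ telescopes directly, producing $-\frac{1}{T}\langle \lambda, z^{T+1}-z^1\rangle$.

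The main obstacle will be the two quadratic difference terms, since $\rho^t$ is not constant, so they do not telescope cleanly. I would use Abel's summation (summation by parts). For the primal term, writing $a_t=\rho^t/2$ and $b_t=\|z-z^t\|^2$,
\begin{equation*}
\sum_{t=1}^{T} a_t(b_{t+1}-b_t) = a_T b_{T+1} - a_1 b_1 + \sum_{t=2}^{T}(a_{t-1}-a_t)\, b_t .
\end{equation*}
Since $\rho^t$ is nondecreasing (Assumption \ref{assump:convergence}(ii)), $a_{t-1}-a_t\leq 0$, and since $z,z^t$ live in the convex compact set $\mathcal{W}$ with diameter controlled by $U_2$ from \eqref{def_upper_2}, each $b_t$ admits a uniform upper bound. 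Dropping the nonnegative $a_T b_{T+1}$ and bounding the remaining negative-coefficient sum with $U_2$ and $\rho^{\text{max}}$ then yields a lower bound of the form $-\tfrac{U_2\rho^{\text{max}}}{2}$. For the dual term, with $c_t=1/(2\rho^t)$ and $d_t=\|\lambda-\lambda^t\|^2$, the same identity gives
\begin{equation*}
\sum_{t=1}^{T} c_t(d_{t+1}-d_t) = c_T d_{T+1} - c_1 d_1 + \sum_{t=2}^{T}(c_{t-1}-c_t)\, d_t ,
\end{equation*}
and this time the monotonicity of $\rho^t$ makes $c_{t-1}-c_t\geq 0$, so the last sum is nonnegative, and dropping it along with the nonnegative $c_T d_{T+1}$ leaves only $-c_1 d_1 = -\tfrac{1}{2\rho^1}\|\lambda-\lambda^1\|^2$.

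Combining all of the above, dividing by $T$, and collecting the residual constants into the $\frac{1}{T}(\cdot)$ factor on the right of \eqref{lower_bound_LHS} delivers the claimed inequality. The only real subtlety is keeping track of the correct direction (lower bound versus upper bound) when applying Abel summation to the two quadratic telescopes; the asymmetry is driven entirely by the monotonicity of $\rho^t$, which pushes the primal residual in the unfavorable direction (requiring use of the $U_2$ diameter) but pushes the dual residual in the favorable direction (allowing the cross sum to be discarded).
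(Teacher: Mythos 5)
Your proposal is correct and follows essentially the same route as the paper: sum the identity from Lemma \ref{lemma:simplify_LHS}, apply Jensen to $F$, telescope the linear terms, and handle the two weighted quadratic differences by the very summation-by-parts regrouping the paper writes out in \eqref{LB_1}--\eqref{LB_2}, using monotonicity of $\rho^t$ and the diameter bound $U_2$ exactly as you describe. No substantive difference.
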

\begin{proof}
Based on Lemma \ref{lemma:simplify_LHS} and notations in \eqref{Notation}, we have
\begin{align}
& \text{LHS}(w,z) = \frac{1}{T} \Big[ \sum_{t=1}^T F(z^{t}) - T F(z) + \Big\langle \sum_{t=1}^T \tilde{x}^t - Tx , Gx \Big\rangle - \langle \lambda, z^{T+1} - z^1\rangle  \nonumber \\
& \hspace{20mm} + \sum_{t=1}^T  \Big\{ \frac{\rho^t}{2}  \big( \|z - z^{t+1}\|^2  - \|z-z^t\|^2 \big) + \frac{1}{2\rho^t} \big( \| \lambda - \lambda^{t+1} \|^2  - \| \lambda - \lambda^t \|^2 \big) \Big\} \Big]. \nonumber
\end{align}
To simplify further, we derive the following lower bounds.
\begin{subequations}
\begin{align}
  & \sum_{t=1}^T \frac{\rho^t}{2}  \big( \|z - z^{t+1}\|^2  - \|z-z^t\|^2 \big) = - \frac{\rho^1}{2} \|z-z^1\|^2 + \sum_{t=2}^T \Big( \frac{\rho^{t-1} - \rho^t}{2} \Big)\|z - z^t\|^2  \nonumber  \\
  & \ \  + \frac{\rho^T}{2}\|z - z^{T+1}\|^2 \geq - \frac{\rho^1}{2} U_2 + \sum_{t=2}^T \Big( \frac{\rho^{t-1} - \rho^t}{2} \Big) U_2 = \frac{ - U_2 \rho^{T}}{2} \geq \frac{ - U_2 \rho^{\text{max}}}{2}, \label{LB_1} \\
  & \sum_{t=1}^T \frac{1}{2\rho^t} \big( \| \lambda - \lambda^{t+1} \|^2  - \| \lambda - \lambda^t \|^2 \big) = - \frac{1}{2\rho^1} \|\lambda-\lambda^1\|^2 + \sum_{t=2}^T \Big( \frac{1}{2 \rho^{t-1}} - \frac{1}{2 \rho^{t}} \Big)\|\lambda - \lambda^t\|^2  \nonumber  \\
  & \ \  + \frac{1}{2\rho^T}\|\lambda - \lambda^{T+1}\|^2 \geq - \frac{1}{2\rho^1} \|\lambda-\lambda^1\|^2, \label{LB_2}
\end{align}
\end{subequations}
where the first inequalities in \eqref{LB_1} and \eqref{LB_2} hold because $\rho^t > 0$ is non-decreasing by Assumption \ref{assump:convergence}, and $U_2$ is from \eqref{def_upper}.

Based on \eqref{LB_1}, \eqref{LB_2}, and $F(z^{(T)}) \leq \frac{1}{T} \sum_{t=1}^T F(z^t)$, which is valid due to the convexity of $F$, one can derive \eqref{lower_bound_LHS}.
\end{proof}

Second, we have
\begin{align*}
\big\langle x^{(T)} - x, Gx \big\rangle & =
\langle w^{(T)} - w, A^{\top} \lambda \rangle - \langle z^{(T)}-z, \lambda \rangle - \langle \lambda^{(T)} - \lambda, Aw-z\rangle \\
& = \langle A w^{(T)} - z^{(T)} - Aw + z, \lambda \rangle - \langle \lambda^{(T)} - \lambda, Aw-z\rangle.
\end{align*}
Let $(w^*,z^*)$ be an optimal solution. As $Aw^* - z^*=0$, we have
\begin{align}
\big\langle x^{(T)} - x^*, Gx^* \big\rangle  =  \langle \lambda, Aw^{(T)} - z^{(T)} \rangle. \label{skewness}
\end{align}
Based on \eqref{skewness}, Lemma \ref{lemma:simplify_LHS}, and Lemma \ref{lemma:lower_bound_LHS}, we derive the following inequalities:
\begin{align}
& F(z^{(T)}) - F(z^*) + \langle \lambda, Aw^{(T)} - z^{(T)} \rangle
- \frac{1}{T}  \Big( \langle \lambda, z^{T+1} - z^1\rangle + \frac{U_2 \rho^{\text{max}}}{2} + \frac{1}{2\rho^1} \| \lambda - \lambda^1 \|^2 \Big)  \nonumber \\
& \leq \frac{1}{T} \sum_{t=1}^T   \big\{ \frac{\eta^t \sum_{p=1}^P \|  f'(z^t_p) + \tilde{\xi}^{t}_p \|^2}{2}   + \frac{1}{2\eta^t} \big( \|z^* - z^t \|^2  - \|z^* - z^{t+1} \|^2 \big) + \langle \tilde{\xi}^{t}, z^* - z^t \rangle \big\}  . \nonumber
\end{align}
Since the above inequality holds for any $\lambda$, we can take the maximum of both
sides over all $\lambda$ in a ball centered at zero with the radius $\gamma$ and obtain
\begin{align}
  & F(z^{(T)}) - F(z^*) + \gamma \| Aw^{(T)} - z^{(T)} \|  \leq \frac{1}{T} \Big( \sum_{t=1}^T \Big\{  \frac{\eta^t \sum_{p=1}^P \| f'(z^t_p) + \tilde{\xi}^{t}_p \|^2  }{2}  
   \nonumber \\
  &  + \frac{1}{2\eta^t} \big( \|z^*-z^t\|^2  - \|z^*-z^{t+1}\|^2 \big) + \langle \tilde{\xi}^{t}, z^* - z^t \rangle \Big\} + \gamma U_2 + \frac{U_2 \rho^{\text{max}}}{2} + \frac{(\gamma + \| \lambda^1 \|)^2}{2\rho^1}  \Big). \nonumber
\end{align}
By taking expectation, we have
\begin{align}
& \mathbb{E} \Big[ F(z^{(T)}) - F(z^*) + \gamma \| Aw^{(T)} - z^{(T)} \| \Big] \leq \frac{1}{T} \Big( \sum_{t=1}^T \Big\{  \frac{\eta^t \sum_{p=1}^P  \mathbb{E}\big[\| \nabla f(z^t_p) + \tilde{\xi}^{t}_p \|^2 \big] }{2}  
 \nonumber \\
&  + \frac{1}{2\eta^t} \big( \|z^*-z^t\|^2  - \|z^*-z^{t+1}\|^2 \big) + \mathbb{E}[ \langle \tilde{\xi}^{t}, z^* - z^t \rangle ] \Big\} + \gamma U_2 + \frac{U_2 \rho^{\text{max}}}{2} + \frac{(\gamma + \| \lambda^1 \|)^2}{2\rho^1}  \Big). \nonumber
\end{align}
Note that we have $\mathbb{E}[ \langle \tilde{\xi}^{t}, z^* - z^t \rangle ]=0$ and 
\begin{align*}
& \mathbb{E}\big[\| f'(z^t_p) + \tilde{\xi}^{t}_p \|^2 \big] = \| f'(z^t_p) \|^2 + \mathbb{E} \big[ \|\tilde{\xi}^{t}_p \|^2 \big] \leq U_1^2 + \sum_{j=1}^J \sum_{k=1}^K \mathbb{E}[ (\tilde{\xi}^t_{pjk})^2 ] \\
& = U_1^2 + \sum_{j=1}^J\sum_{k=1}^K 2 (\bar{\Delta}^t_p)^2/\bar{\epsilon}^2 \leq U_1^2 + 2 JK U_3^2/\bar{\epsilon}^2 ,  \ \forall p \in [P],
\end{align*}
where 
the first equality holds because $\mathbb{E}[2\langle \tilde{\xi}^t_p, f'(z^t_p) \rangle ] = 0$, 
the first inequality holds by the definition of $U_1$ from \eqref{def_upper}, and
the last inequality holds because $\bar{\Delta}^t_p \leq U_3$ for all $t$ and $p$, where $U_3$ is from \eqref{def_upper}.
Therefore, we have
\begin{align}
  & \mathbb{E} \Big[ F(z^{(T)}) - F(z^*) + \gamma \| Aw^{(T)} - z^{(T)} \| \Big] \leq \frac{1}{T} \Big( \sum_{t=1}^T \Big\{  \frac{\eta^t P(U_1^2 + 2JKU_3^2/\bar{\epsilon}^2) }{2}  
   \nonumber \\
  &  + \frac{1}{2\eta^t} \big( \|z^*-z^t\|^2  - \|z^*-z^{t+1}\|^2 \big)  \Big\} + \gamma U_2 + \frac{U_2 \rho^{\text{max}}}{2} + \frac{(\gamma + \| \lambda^1 \|)^2}{2\rho^1}  \Big). \nonumber
\end{align}
By setting $\eta^t = 1/\sqrt{t}$ and $R:= P(U_1^2 + 2JKU_3^2/\bar{\epsilon}^2)$, the first term of RHS can be written as
\begin{align}
R \sum_{t=1}^T \frac{1}{2\sqrt{t}} \leq R \sum_{t=1}^T (\sqrt{t} - \sqrt{t-1}) = R \sqrt{T},
\end{align}
and the second term of RHS can be written as
\begin{align}
  & \sum_{t=1}^T  \frac{1}{2\eta^t}\big(\|z^*-z^t\|^2-\|z^*-z^{t+1}\|^2 \big) \leq \frac{1}{2\eta^1} \|z^*-z^1\|^2 + \sum_{t=2}^T \Big(\frac{1}{2\eta^t}-\frac{1}{2\eta^{t-1}} \Big) \|z^*-z^t\|^2  \nonumber \\
  & - \frac{1}{2\eta^T}\|z^*-z^{T+1}\|^2 \leq \frac{1}{2\eta^1} U_2 + \sum_{t=2}^T \Big(\frac{1}{2\eta^t}-\frac{1}{2\eta^{t-1}} \Big) U_2 = \frac{U_2}{2\eta^T} = \frac{U_2 \sqrt{T}}{2}, \nonumber
\end{align}
where $U_2$ is from \eqref{def_upper}.

Therefore, we have
\begin{align}
  & \mathbb{E} \Big[ F(z^{(T)}) - F(z^*) + \gamma \| Aw^{(T)} - z^{(T)} \| \Big] \leq \frac{1}{T} \Big(  (PU_1^2 + 2PJKU_3^2/\bar{\epsilon}^2 + U_2/2)\sqrt{T} 
   \nonumber \\
  &   + \gamma U_2 + \frac{U_2 \rho^{\text{max}}}{2} + \frac{(\gamma + \| \lambda^1 \|)^2}{2\rho^1}  \Big). \nonumber
\end{align}

This completes the proof.

\subsection{Multi-Class Logistic Regression Model} \label{apx:model}
The multi-class logistic regression model considered in this paper is \eqref{ERM_0} with
\begin{align}
  & \ell(w; x_{pi},y_{pi}) := - \textstyle\sum_{k=1}^K  y_{pik} \ln \big(h_k(w;x_{pi})\big), \ \forall p \in [P], \forall i \in [I_p],  \nonumber \\
  & h_k(w;x_{pi}) :=  \frac{\exp(\textstyle\sum_{j=1}^J  x_{pij} w_{jk}) }{\sum_{k'=1}^K \exp(\textstyle\sum_{j=1}^J  x_{pij} w_{jk'}) }, \ \forall p \in [P], \forall i \in [I_p], \forall k \in [K], \nonumber  \\
  & r(w) := \textstyle\sum_{j=1}^J \sum_{k=1}^K w_{jk}^2, \nonumber  \nonumber  \\
  & f_p(w) = \textstyle - \frac{1}{I} \sum_{i=1}^{I_p} \sum_{k=1}^K \big\{ y_{pik} \ln (h_k(w;x_{pi})) \big\} + \frac{\beta}{P} \sum_{j=1}^J \sum_{k=1}^K  w_{jk}^2, \ \forall p \in [P], \nonumber  \\
  & \nabla_{w_{jk}} f_p(w) = \textstyle\frac{1}{I}  \sum_{i=1}^{I_p} x_{pij} (h_k(w;x_{pi})-y_{pik}) + \frac{2\beta}{P} w_{jk} , \ \forall p \in [P], \forall j \in [J], \forall k \in [K]. \label{gradient}
\end{align}

\subsection{Choice of the Penalty Parameter $\rho^t$} \label{apx-hyperparameter-rho}
We test various $\rho^t$ for our algorithms, and set it as $\hat{\rho}^t$ in \eqref{dynamic_rho} with (i) $c_1=2$, $c_2=5$, and $T_c=1e4$ for MNIST, and (ii) $c_1=0.005$, $c_2=0.05$, and $T_c=2e3$ for FEMNIST.

As these parameter settings may not lead \texttt{OutP} to its best performance, we test various $\rho^t$ for \texttt{OutP} using a set of static parameters, $\rho^t \in \{0.1, 1, 10\}$ for all $t \in [T]$, where $\rho^t=0.1$ is chosen in \cite{huang2019dp}, and dynamic parameters $\rho^t \in \{\hat{\rho}^t, \hat{\rho}^t/100 \}$, where $\hat{\rho}^t$ is from \eqref{dynamic_rho}.
In Figure \ref{fig:hyper_rho}, we report the testing errors of \texttt{OutP} using MNIST and FEMNIST under various $\rho^t$ and $\bar{\epsilon}$.
The results imply that the performance of \texttt{OutP} is not greatly affected by the choice of $\rho^t$, but $\bar{\epsilon}$.
Hence, for all algorithms, we use $\hat{\rho}^t$ in \eqref{dynamic_rho}.

\begin{figure}[!h]
  \centering
  \begin{subfigure}[b]{0.3\textwidth}
      \centering
      \includegraphics[width=\textwidth]{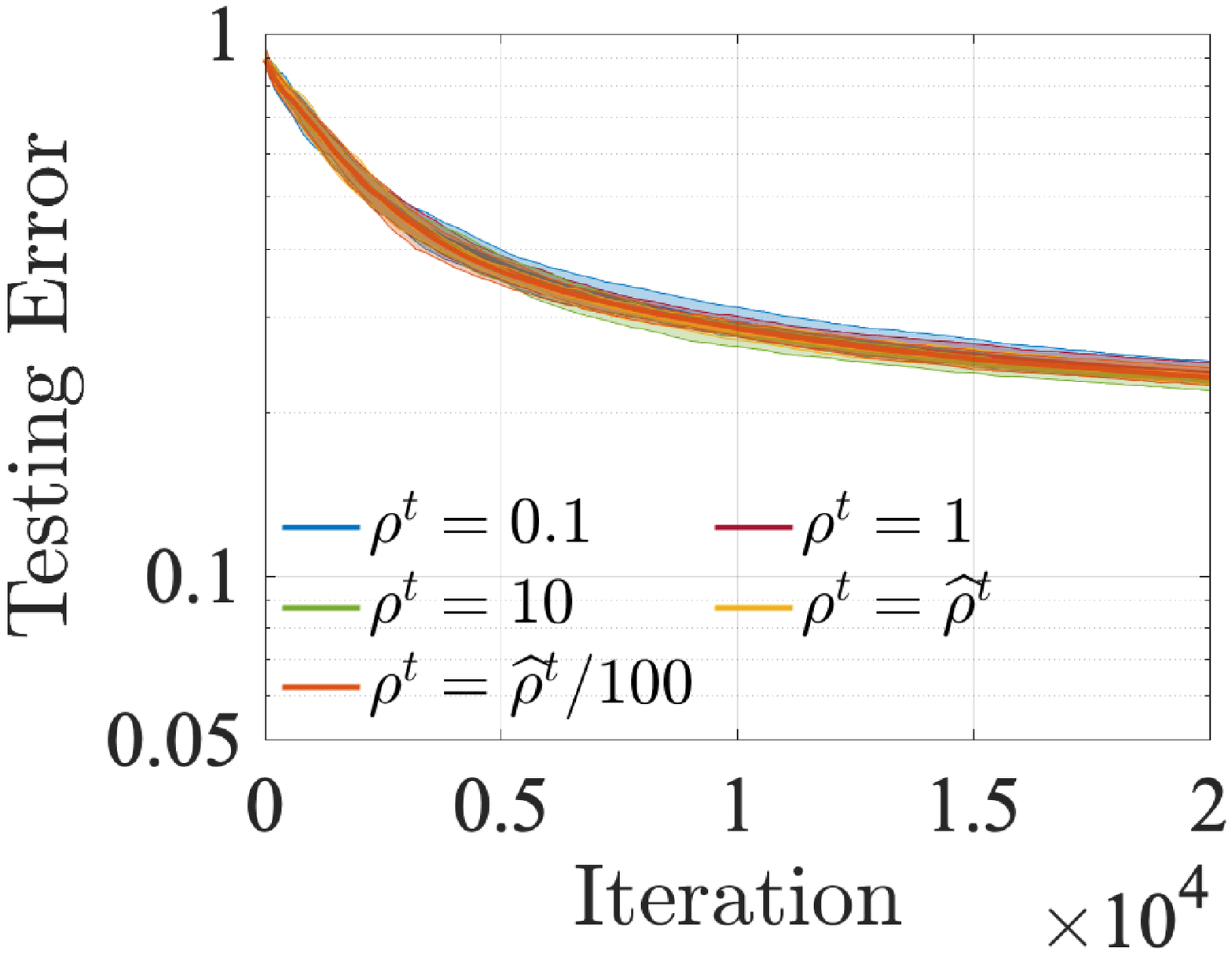}
      \includegraphics[width=\textwidth]{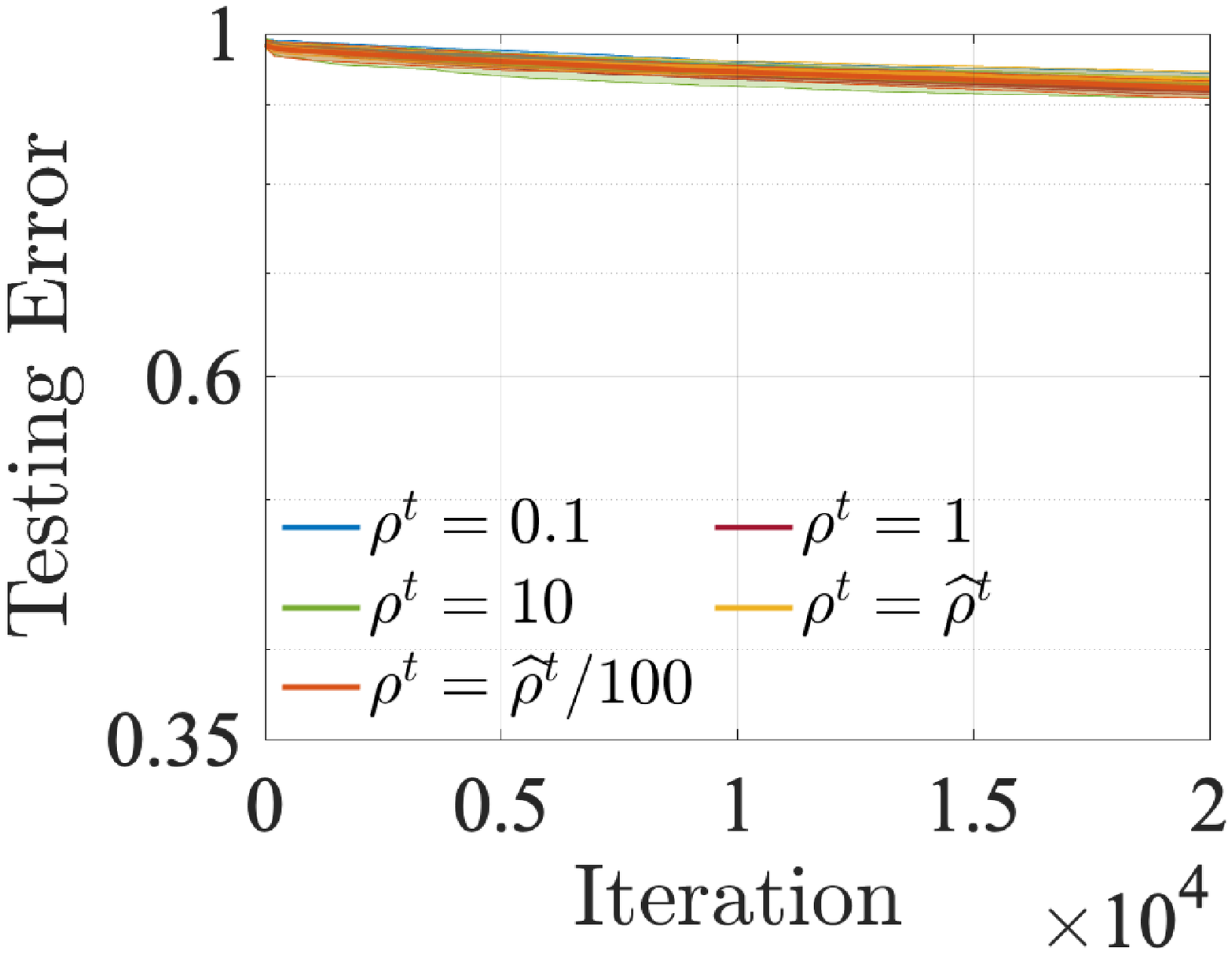}
      \caption{$\bar{\epsilon}=0.05$}
  \end{subfigure}
  \begin{subfigure}[b]{0.3\textwidth}
    \centering
    \includegraphics[width=\textwidth]{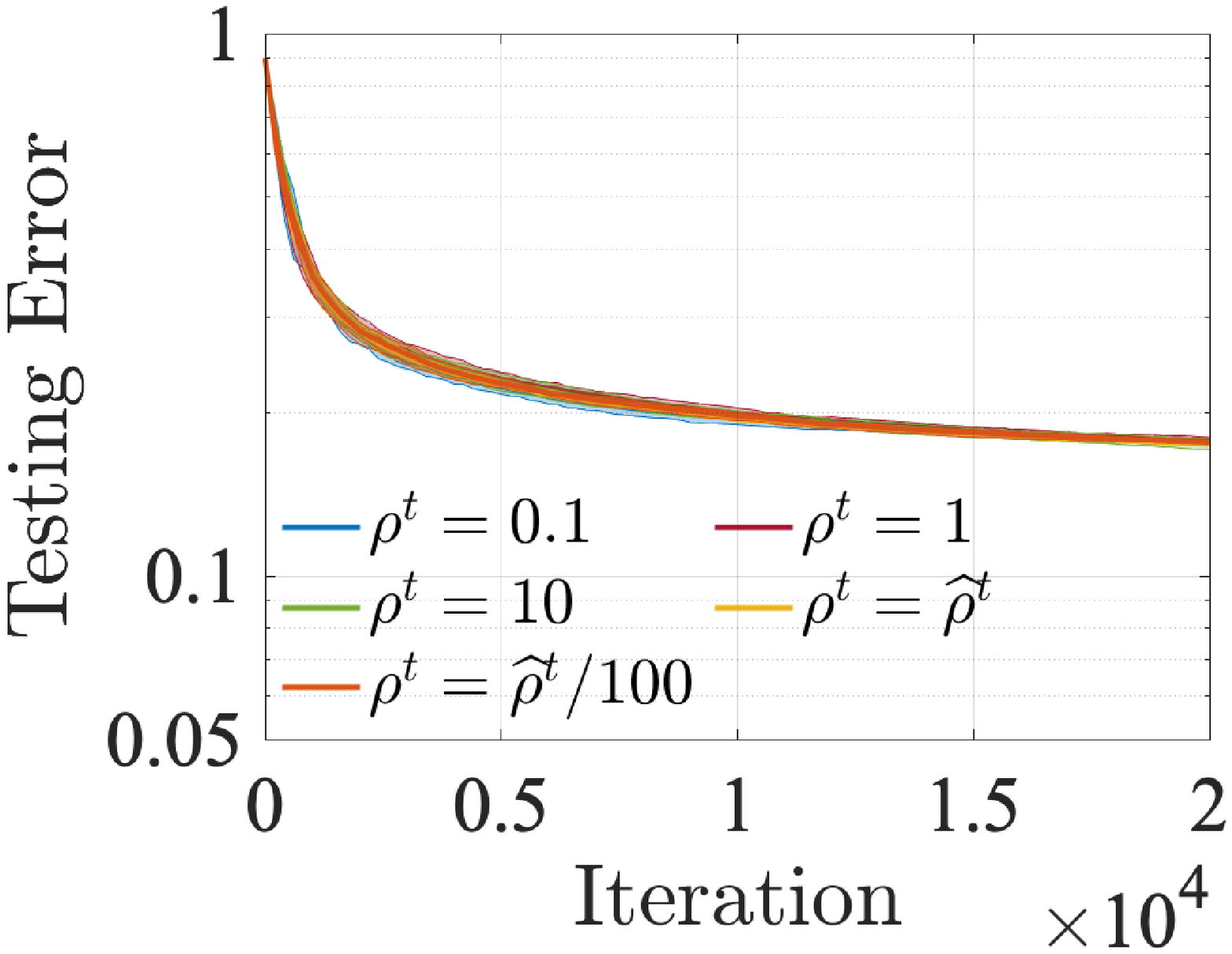}
    \includegraphics[width=\textwidth]{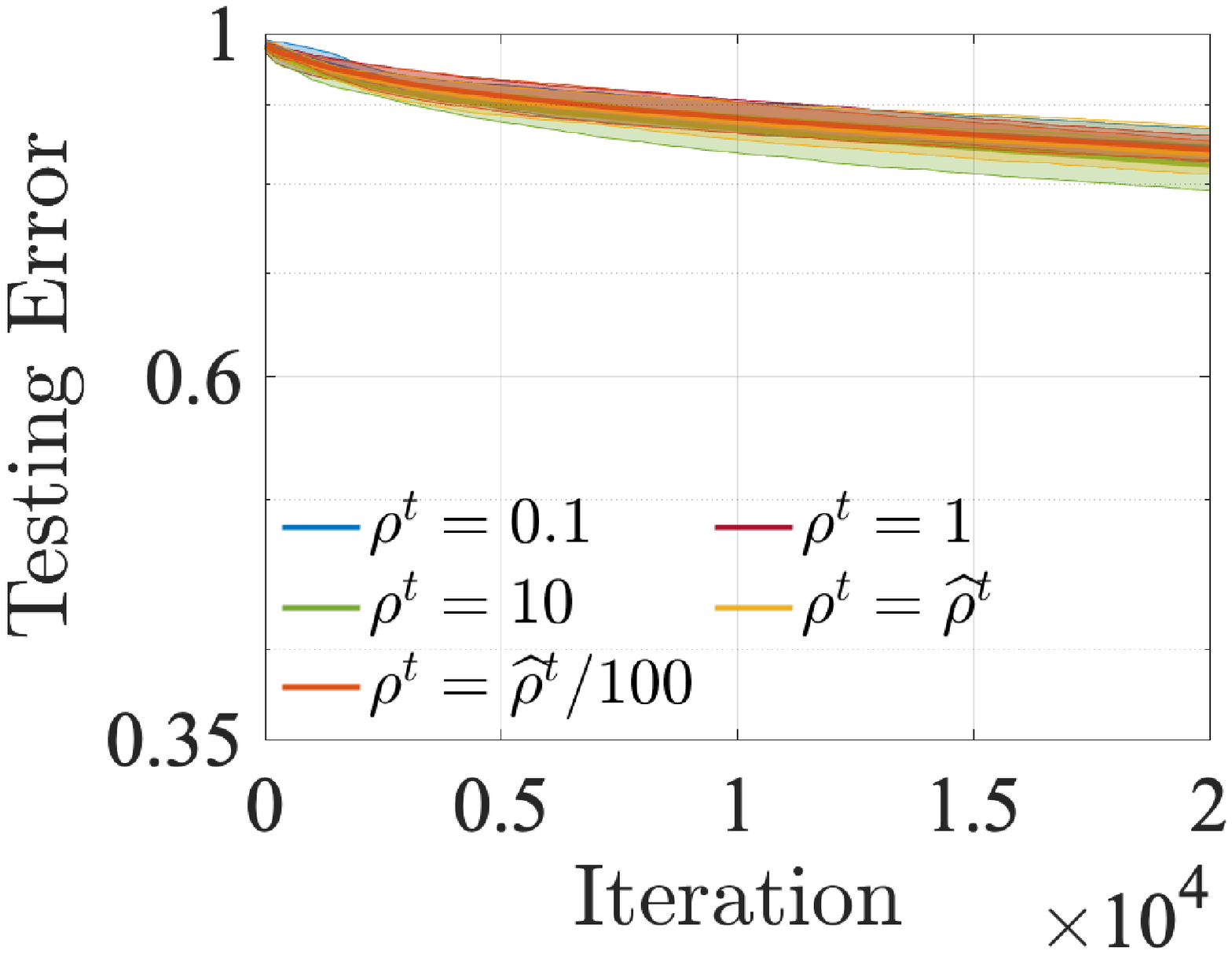}
    \caption{$\bar{\epsilon}=0.1$}
\end{subfigure}
  \begin{subfigure}[b]{0.3\textwidth}
      \centering
      \includegraphics[width=\textwidth]{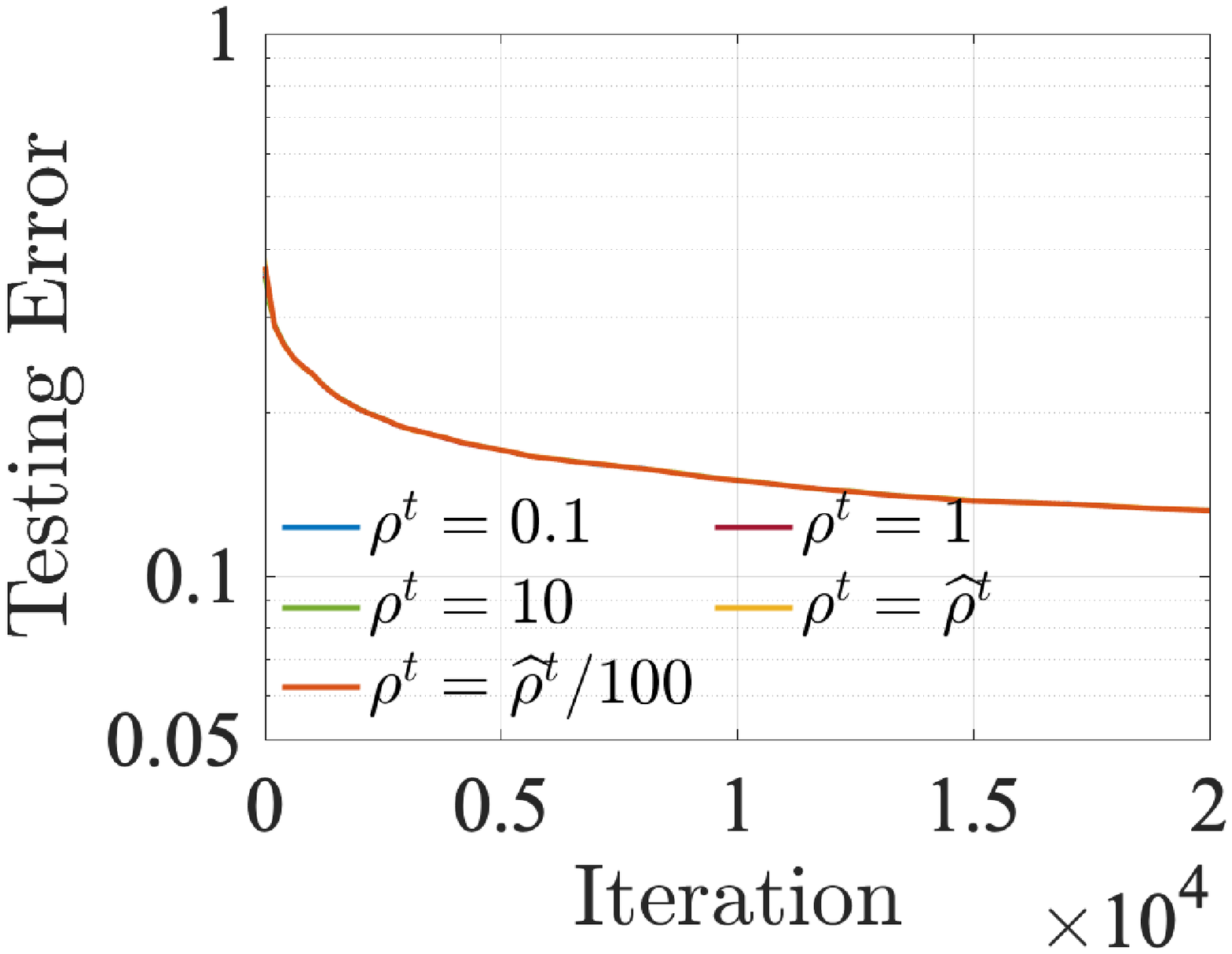}
      \includegraphics[width=\textwidth]{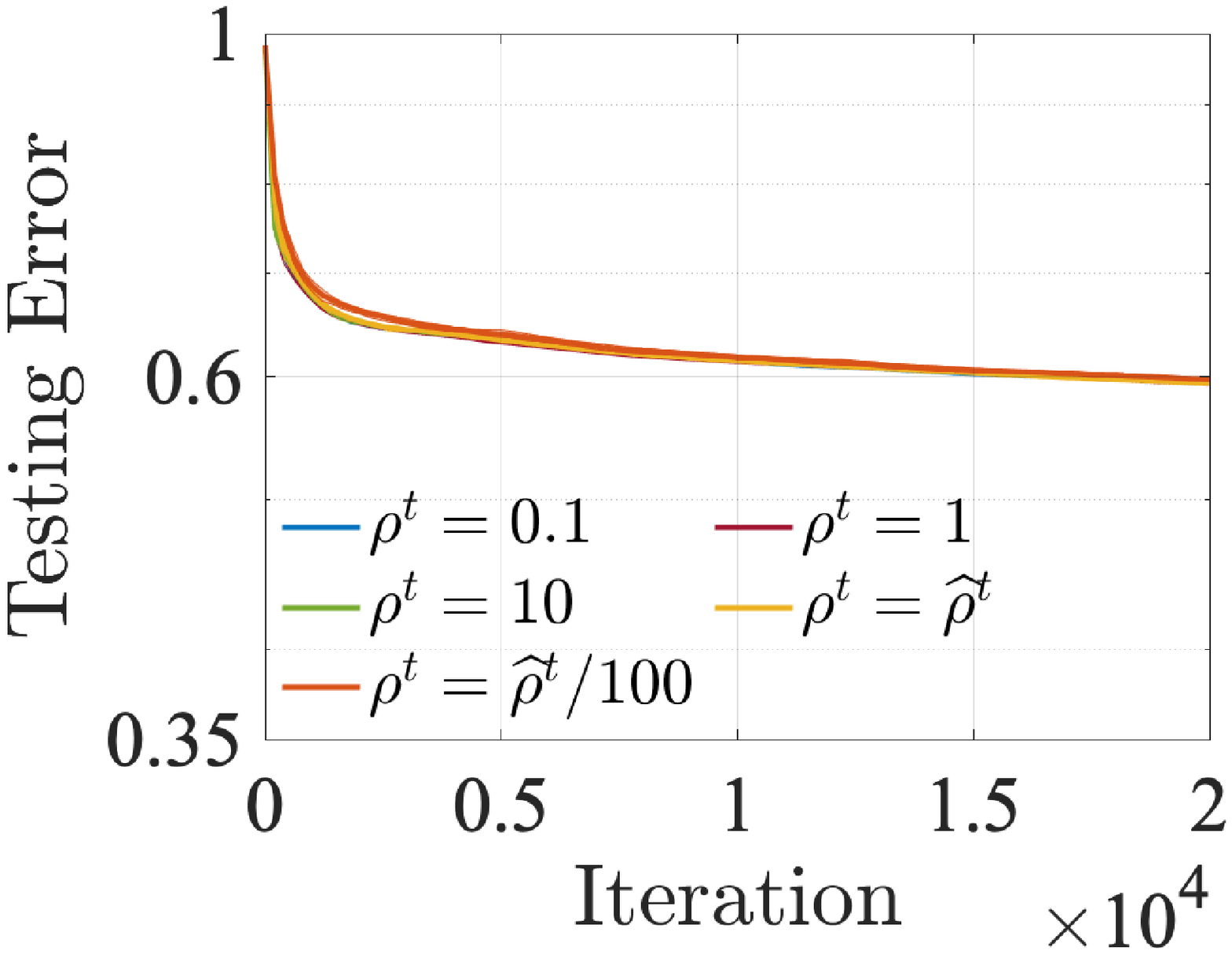}
      \caption{$\bar{\epsilon}=1$}
  \end{subfigure}
     \caption{Testing errors of \texttt{OutP} using MNIST (top) and FEMNIST (bottom).}
     \label{fig:hyper_rho}
\end{figure}

\subsection{Consensus Violation in MNIST.} \label{apx-residual}
To show that the solutions produced by \texttt{OutP}, \texttt{ObjP}, and \texttt{ObjT} are feasible, we report consensus violation (CV), namely, violation of \eqref{ERM_1-1}:
\begin{align}
  \textstyle\sum_{p=1}^P \sum_{j=1}^J \sum_{k=1}^K |w^t_{jk} - z^t_{pjk}|, \ \forall t \in [T], \nonumber
\end{align}
where $w^t_{jk}$ and $z^t_{pjk}$ are solutions at iteration $t$.
If CV is not zero at the termination, the solutions produced by the algorithms are infeasible with respect to \eqref{ERM_1-1}.

As shown in Figure \ref{fig:cv} (top), CV of all algorithms goes down to zero as $t$ increases, which implies that the solutions produced by all algorithms are feasible.
This can be explained by the nondecreasing $\hat{\rho}^t$ in \eqref{dynamic_rho} as it forces to find $z^{t+1}_p$ near $w^{t+1}$ as $t$ increases.

We also observe that CV of \texttt{OutP} quickly drop down compared with that of our algorithms, and this may be considered as a factor that prevents a greater learning performance of \texttt{OutP} by not improving the objective function value while focusing on reducing CV.
To show that this is not the case in our experiments, we construct \texttt{OutP+} which is \texttt{OutP} with $\rho^t \leftarrow 0.01 \times \hat{\rho}^t$, where $\hat{\rho}^t$ is from \eqref{dynamic_rho}.
As shown in Figure \ref{fig:cv} (top), CV of \texttt{OutP+} is larger than that of our algorithms, but our algorithms still outperform \texttt{OutP+} as shown in Figure \ref{fig:cv} (bottom).

\begin{figure}[!h]
  \centering
  \begin{subfigure}[b]{0.3\textwidth}
      \centering
      \includegraphics[width=\textwidth]{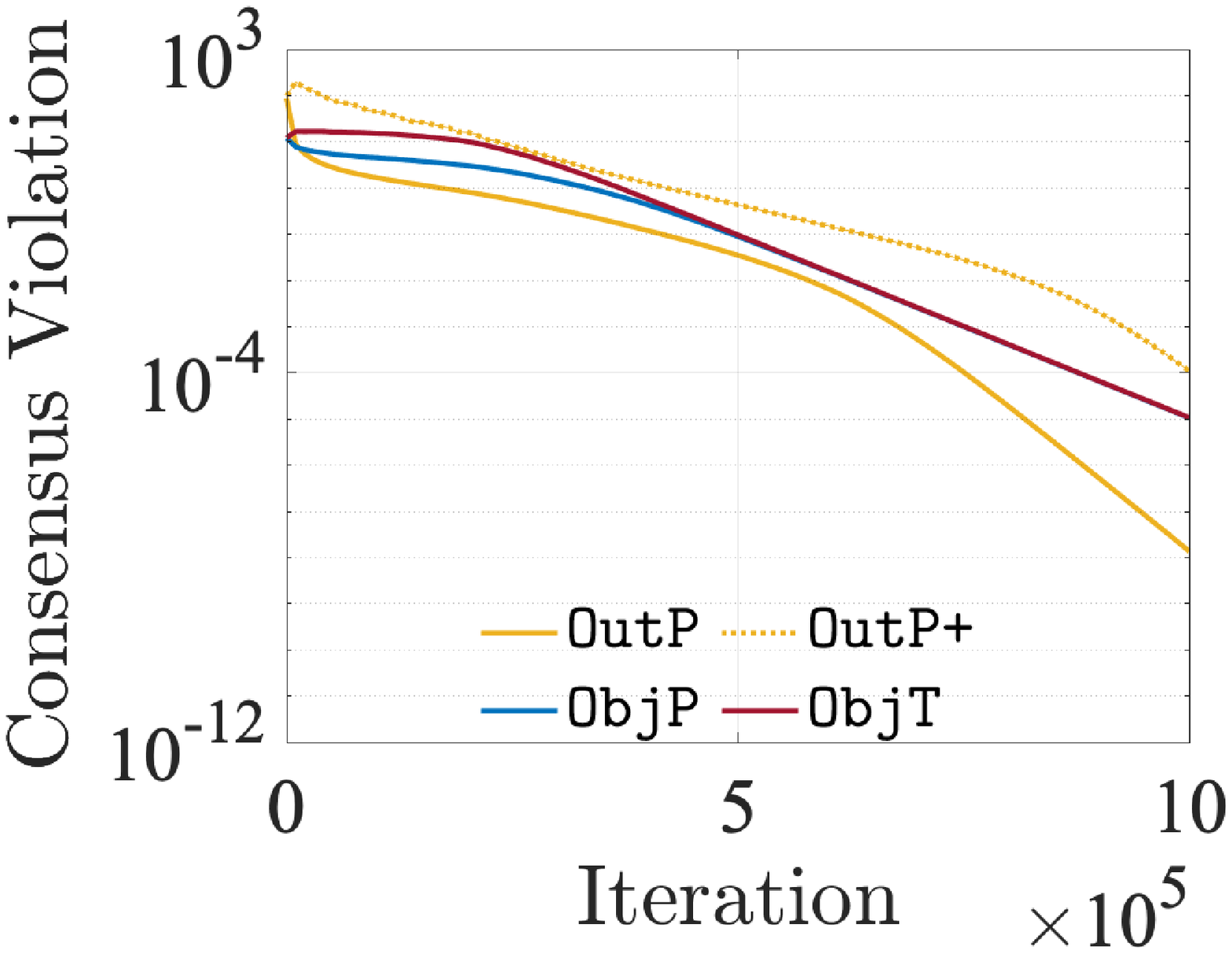}
      \includegraphics[width=\textwidth]{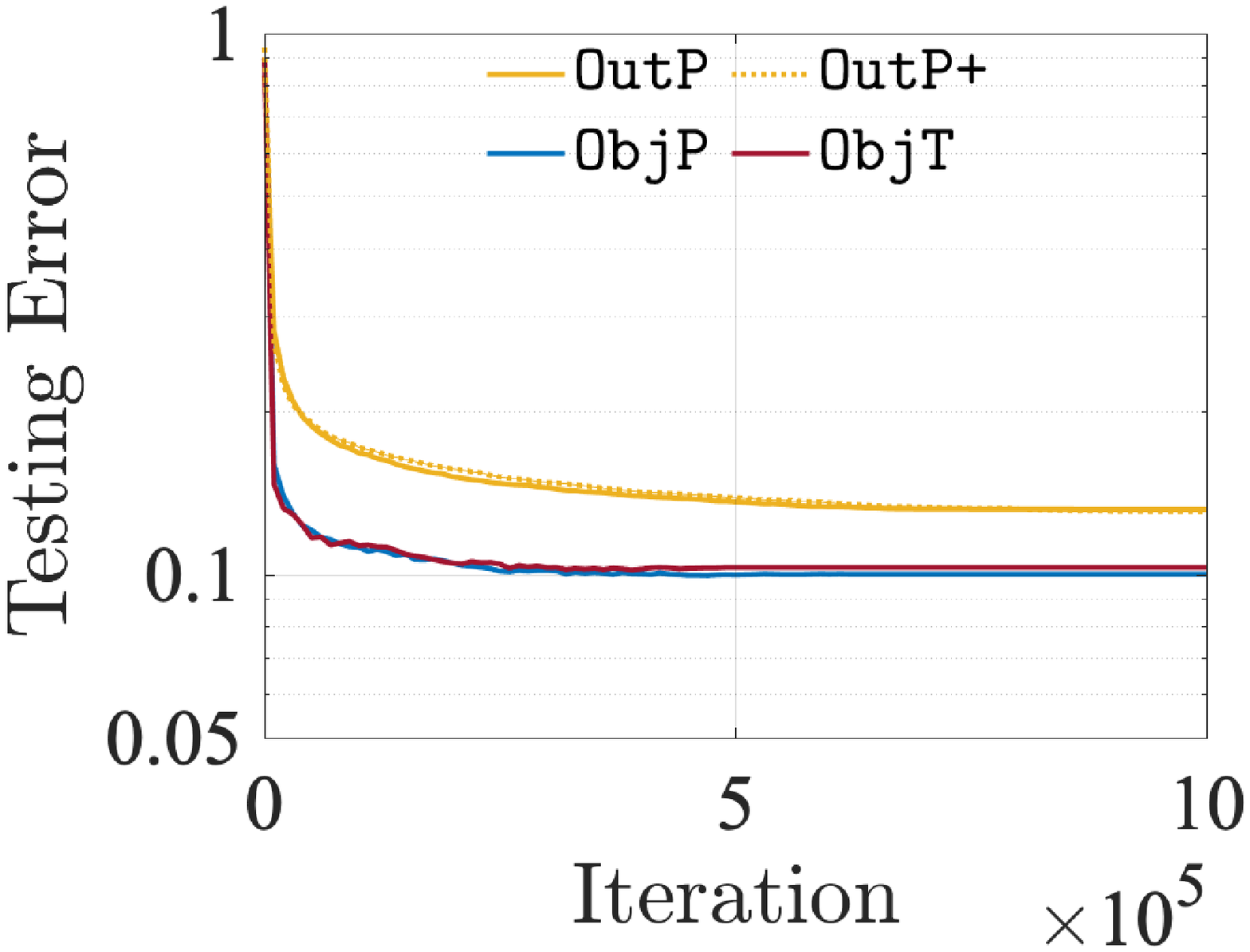}
      \caption{$\bar{\epsilon}=0.05$}
  \end{subfigure}
  \begin{subfigure}[b]{0.3\textwidth}
      \centering
      \includegraphics[width=\textwidth]{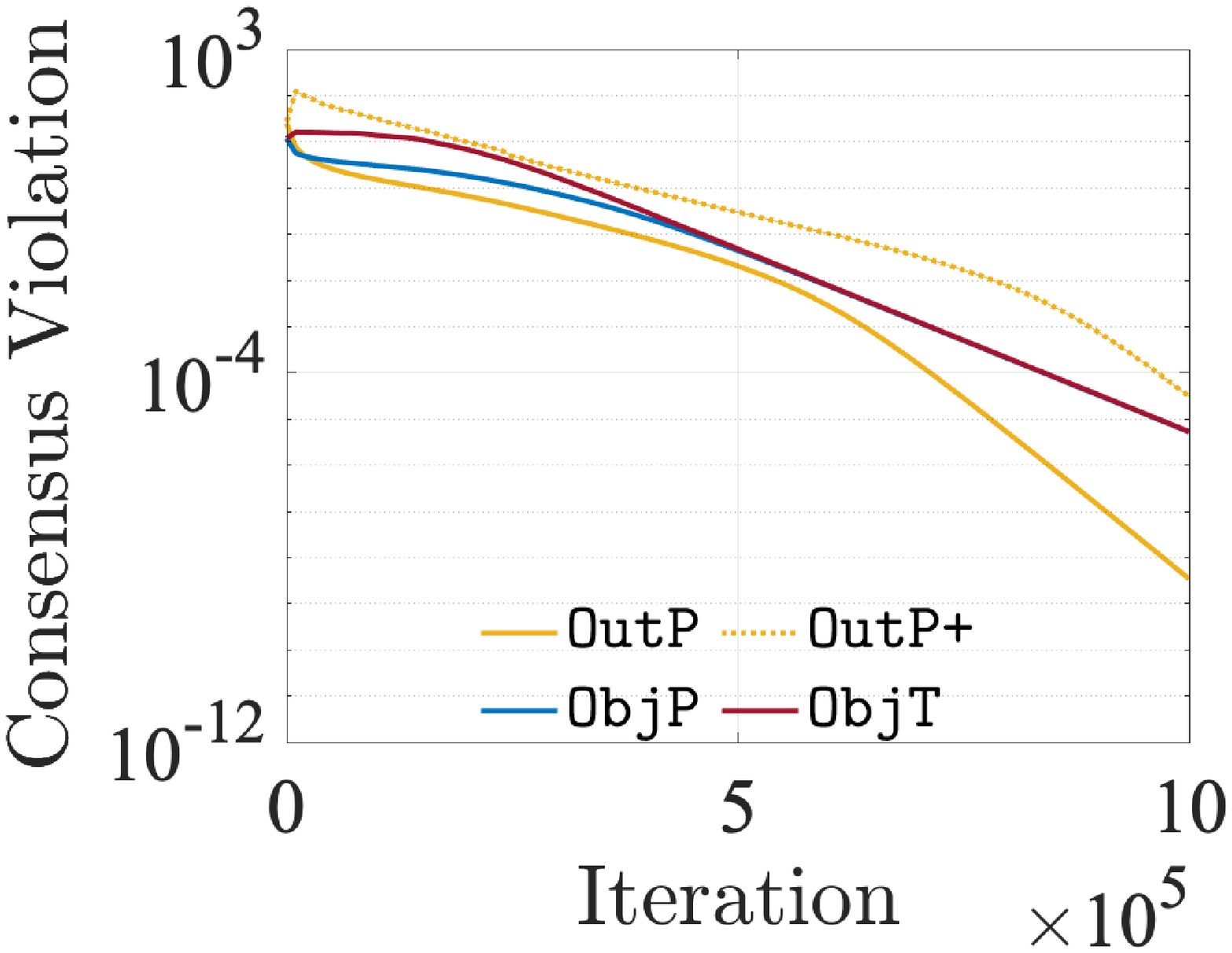}
      \includegraphics[width=\textwidth]{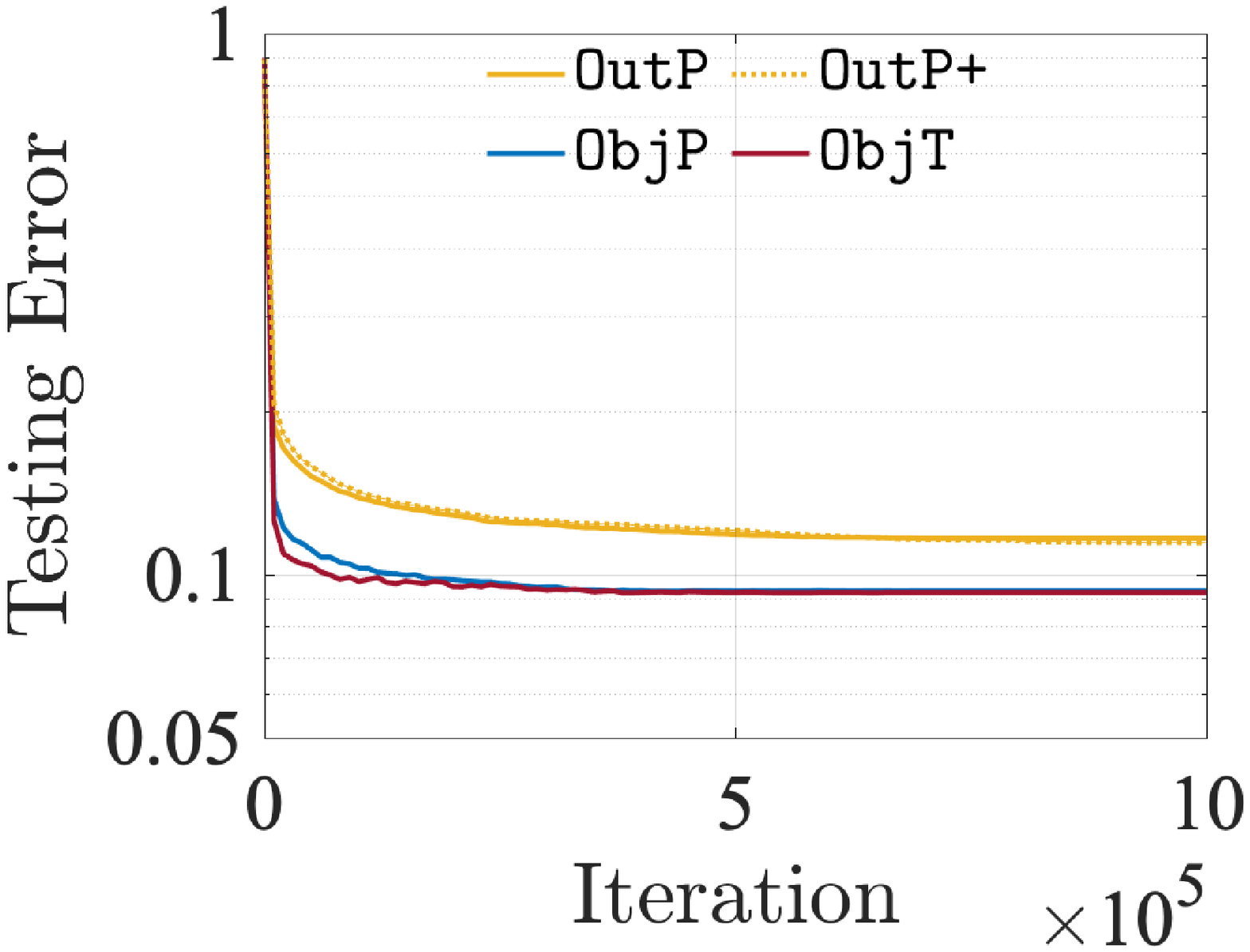}
      \caption{$\bar{\epsilon}=0.1$}
  \end{subfigure}
  \begin{subfigure}[b]{0.3\textwidth}
      \centering
      \includegraphics[width=\textwidth]{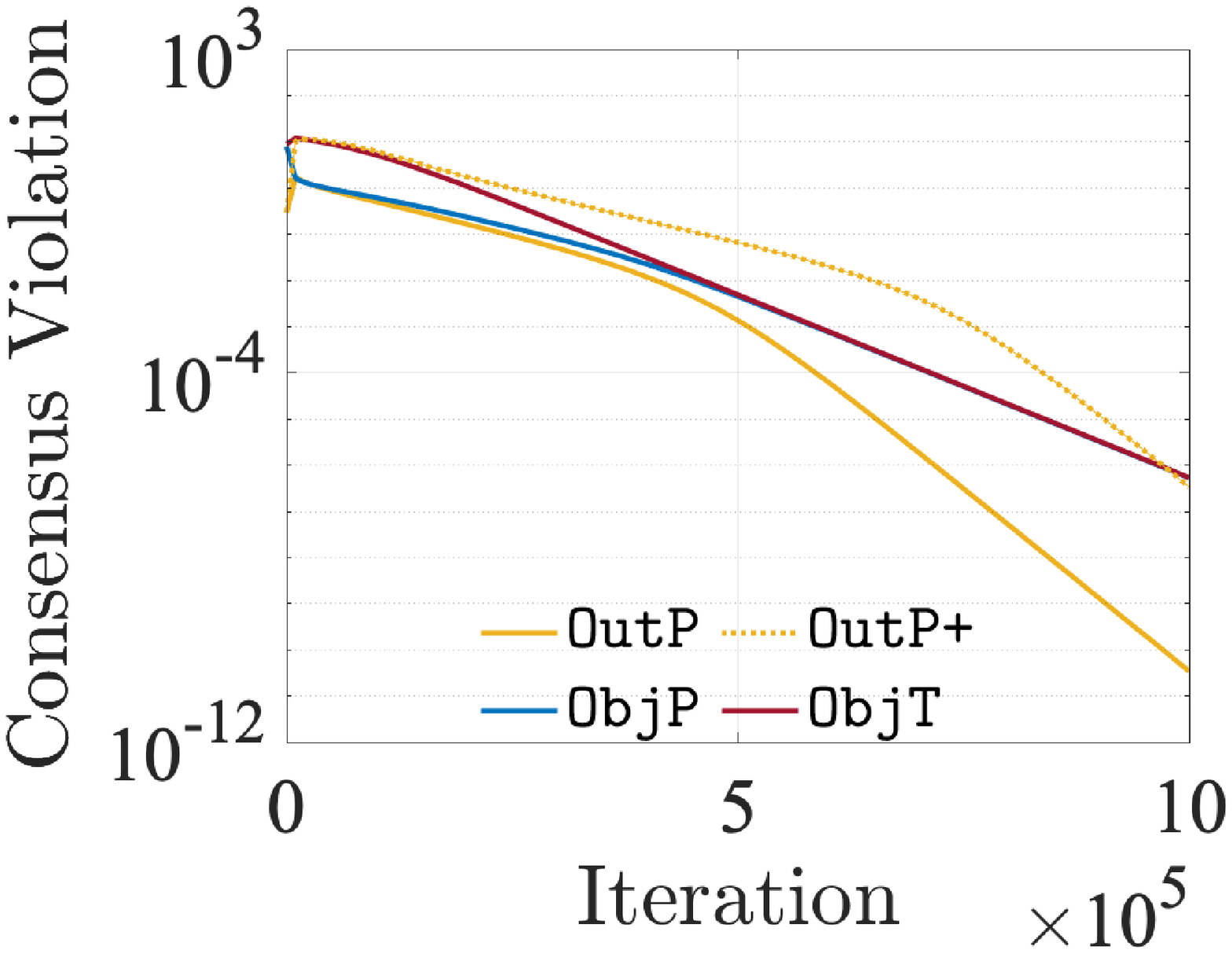}
      \includegraphics[width=\textwidth]{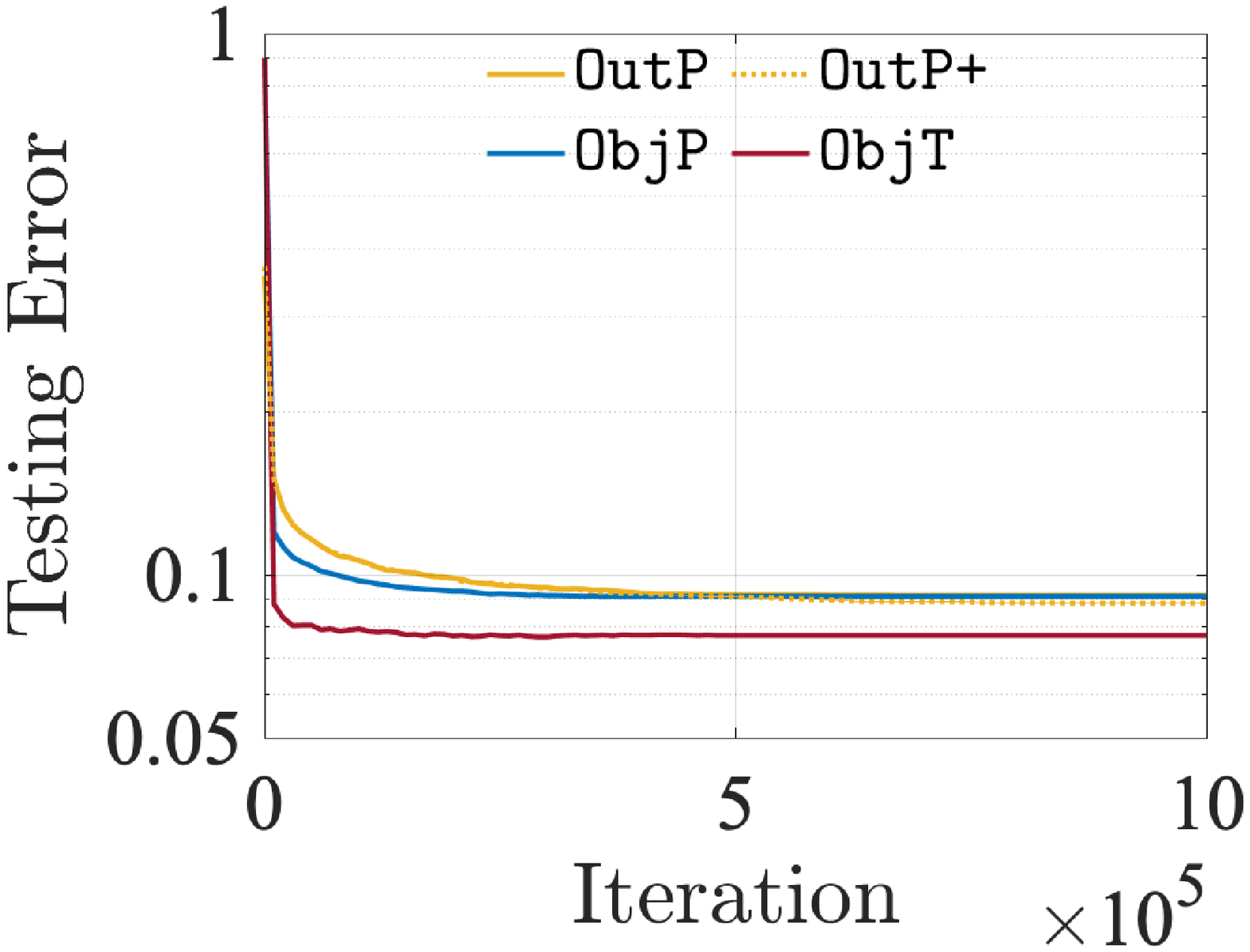}
      \caption{$\bar{\epsilon}=1$}
  \end{subfigure}
     \caption{[MNIST] Consensus violation and testing error when $\bar{\epsilon} \in \{0.05, 0.1, 1\}$.}
     \label{fig:cv}
\end{figure}

\subsection{Additional Hyperparameter Tuning for \texttt{ObjP}} \label{apx-hyperparameter-proximity}

We note that the proximity parameters $\delta^t=1/t^2$ in \eqref{DPADMM-2-Trust} and $\eta^t=1/\sqrt{t}$ in \eqref{DPADMM-2-Prox} can be scaled by multiplying a constant $a \in (0, \infty)$ without affecting the convergence results, as well as the proximity parameter $\widehat{\eta}^t$ for \texttt{OutP} (see Theorem 4 in \cite{huang2019dp}), which is a function of $1/\sqrt{t}$ and numerous parameters, such as $\bar{\epsilon}$, the Gaussian noise parameters, the numbers of data and classes, and so on.

In Figure \ref{fig:FEMNIST_Various_a}, we report the testing errors of the three algorithms when their proximity parameters, namely $\widehat{\eta}^t, \delta^t, \eta^t$ are multiplied by $a \in \{1,100,1000\}$.
First, we note that \texttt{OutP} with $a=1$ as in \cite{huang2019dp} produces the best performance, which implies that the paper \cite{huang2019dp} has well calibrated the proximity parameter $\widehat{\eta}^t$.
Second, the testing errors of \texttt{ObjP} with $a \in \{100, 1000\}$ are less than those of \texttt{OutP} with $a=1$. Even for some cases, \texttt{ObjP} outperforms \texttt{ObjT}.
This implies that the \texttt{ObjP} requires an additional hyperparameter tuning process.
Lastly, we note that the testing errors of \texttt{ObjT} are not greatly varied according to the value of $a$ compared with those of \texttt{ObjP}.
This is because the proximity of a new point $z^{t+1}_p$ from $z^t_p$ in \texttt{ObjP} can be affected by other parameters, such as $\rho^t$ in the objective function of \eqref{DPADMM-2-Prox} while the proximity in \texttt{ObjT} is controlled in the constraints.
This partially shows the benefit of using \texttt{ObjT}.

\begin{figure}[!h]
  \centering
  \begin{subfigure}[b]{0.3\textwidth}
    \centering
    \includegraphics[width=\textwidth]{./Figures1/FEMNIST_TestError_a1_EPS_0.05}
    \includegraphics[width=\textwidth]{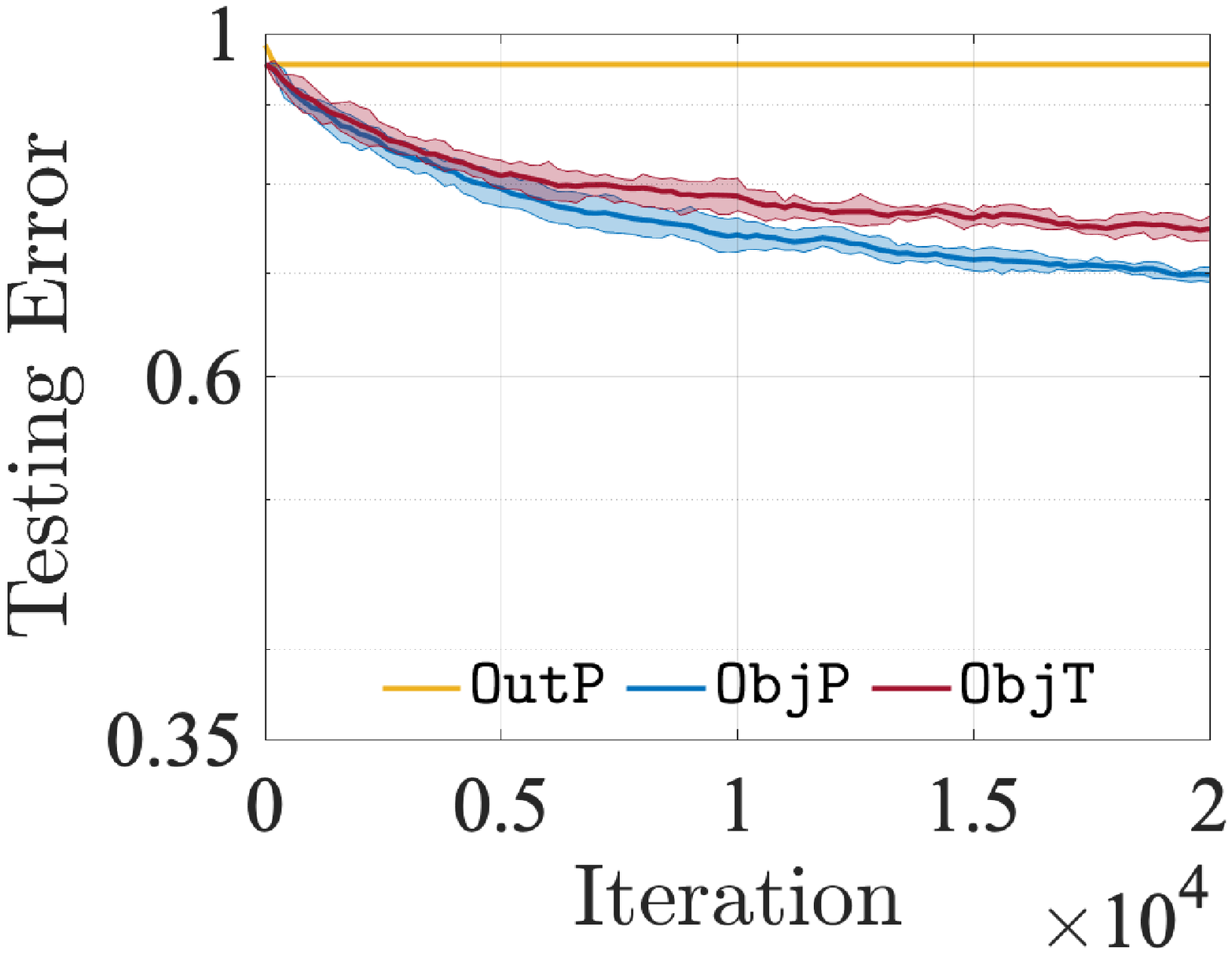}
    \includegraphics[width=\textwidth]{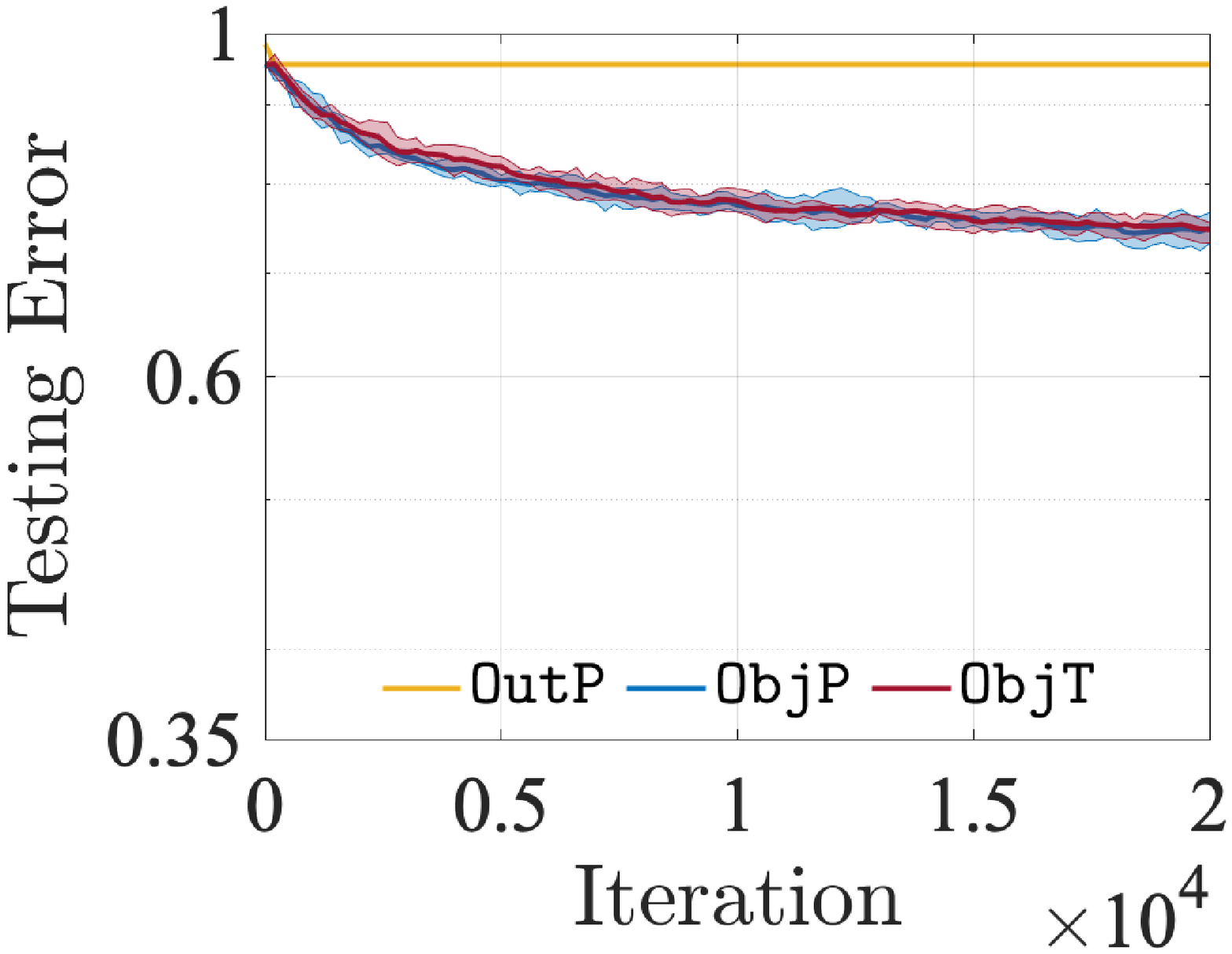}
    \caption{$\bar{\epsilon}=0.05$}
\end{subfigure}
  \begin{subfigure}[b]{0.3\textwidth}
    \centering
    \includegraphics[width=\textwidth]{./Figures1/FEMNIST_TestError_a1_EPS_0.1}
    \includegraphics[width=\textwidth]{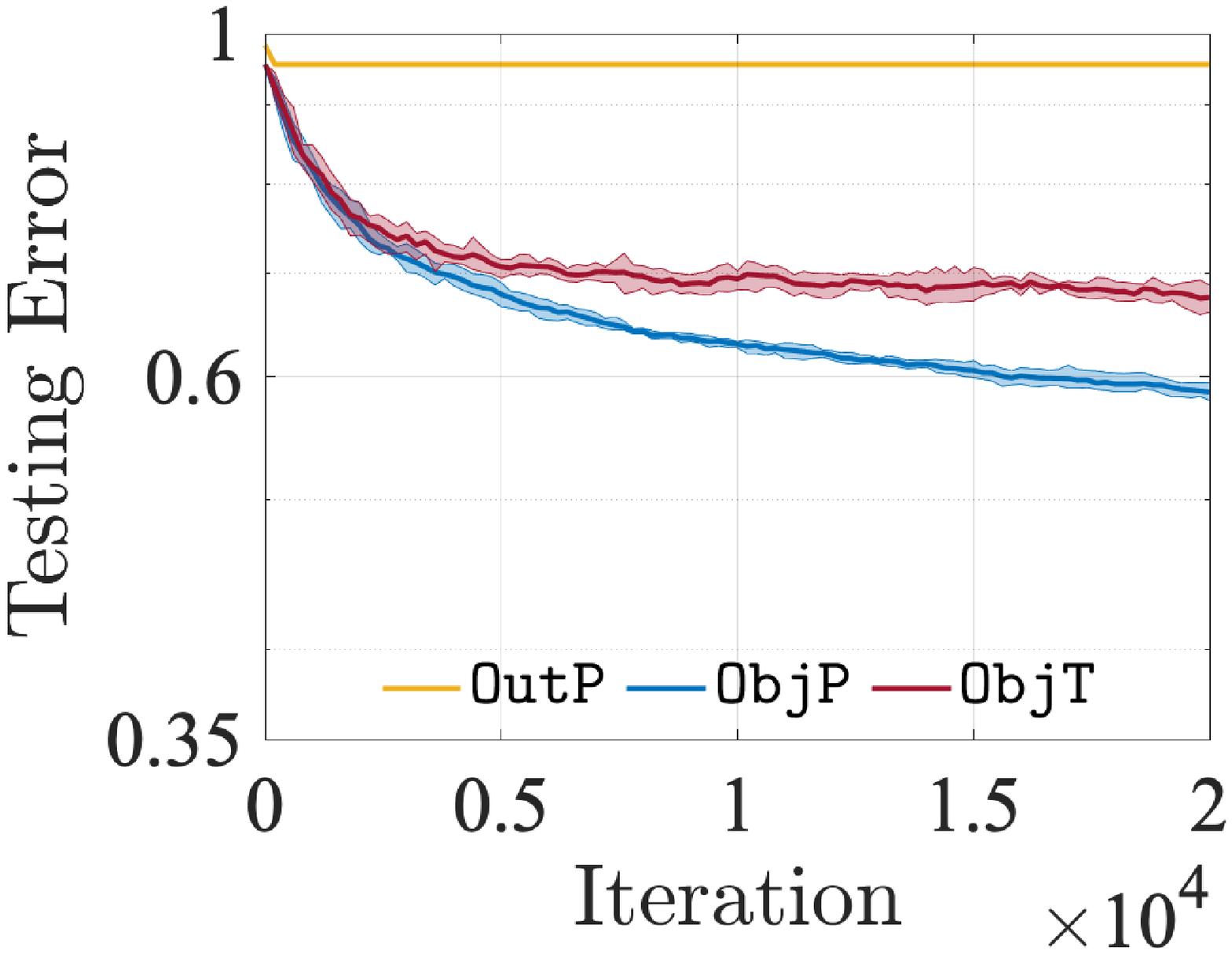}
    \includegraphics[width=\textwidth]{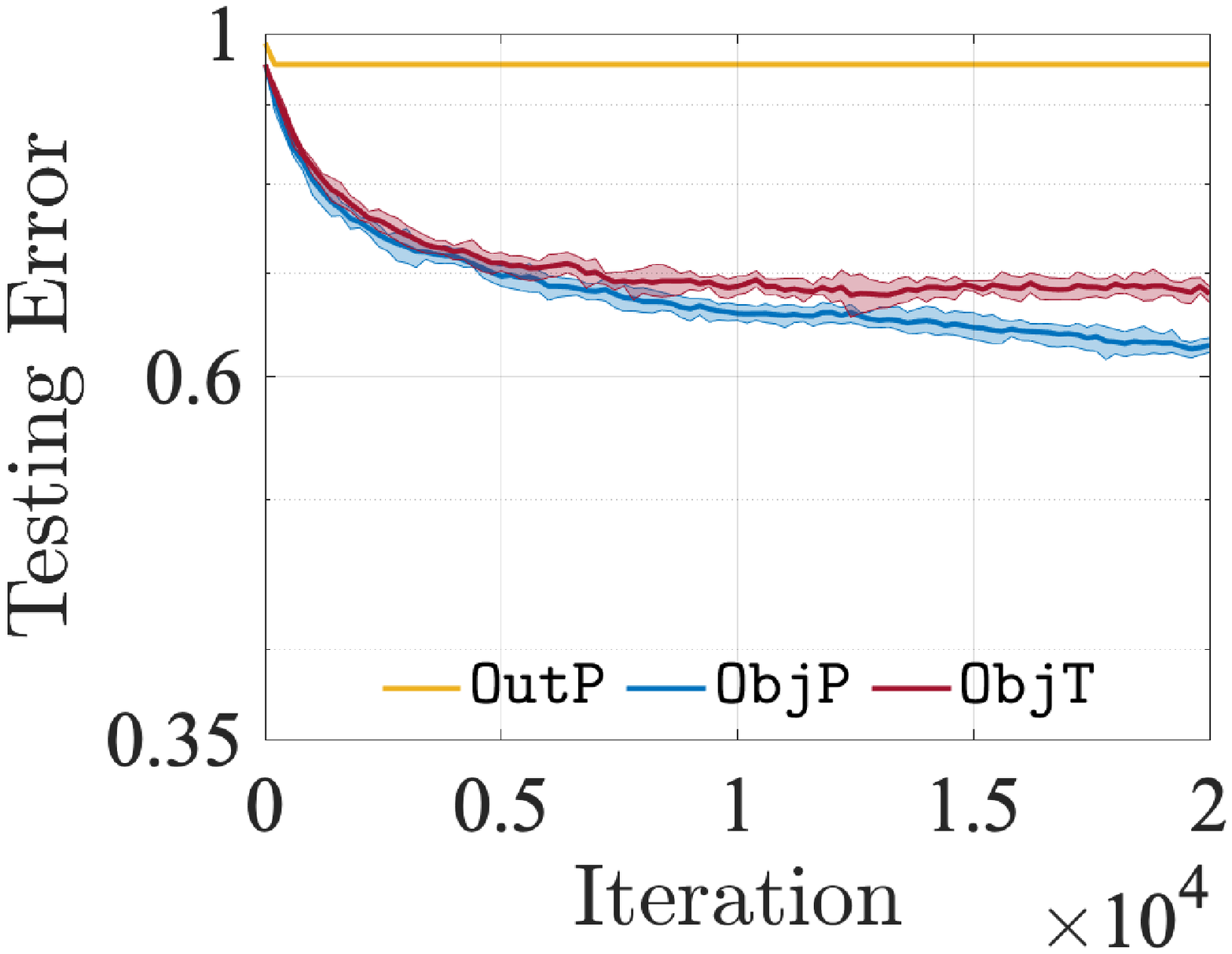}
    \caption{$\bar{\epsilon}=0.1$}
  \end{subfigure}
  \begin{subfigure}[b]{0.3\textwidth}
      \centering
      \includegraphics[width=\textwidth]{./Figures1/FEMNIST_TestError_a1_EPS_1.0}
      \includegraphics[width=\textwidth]{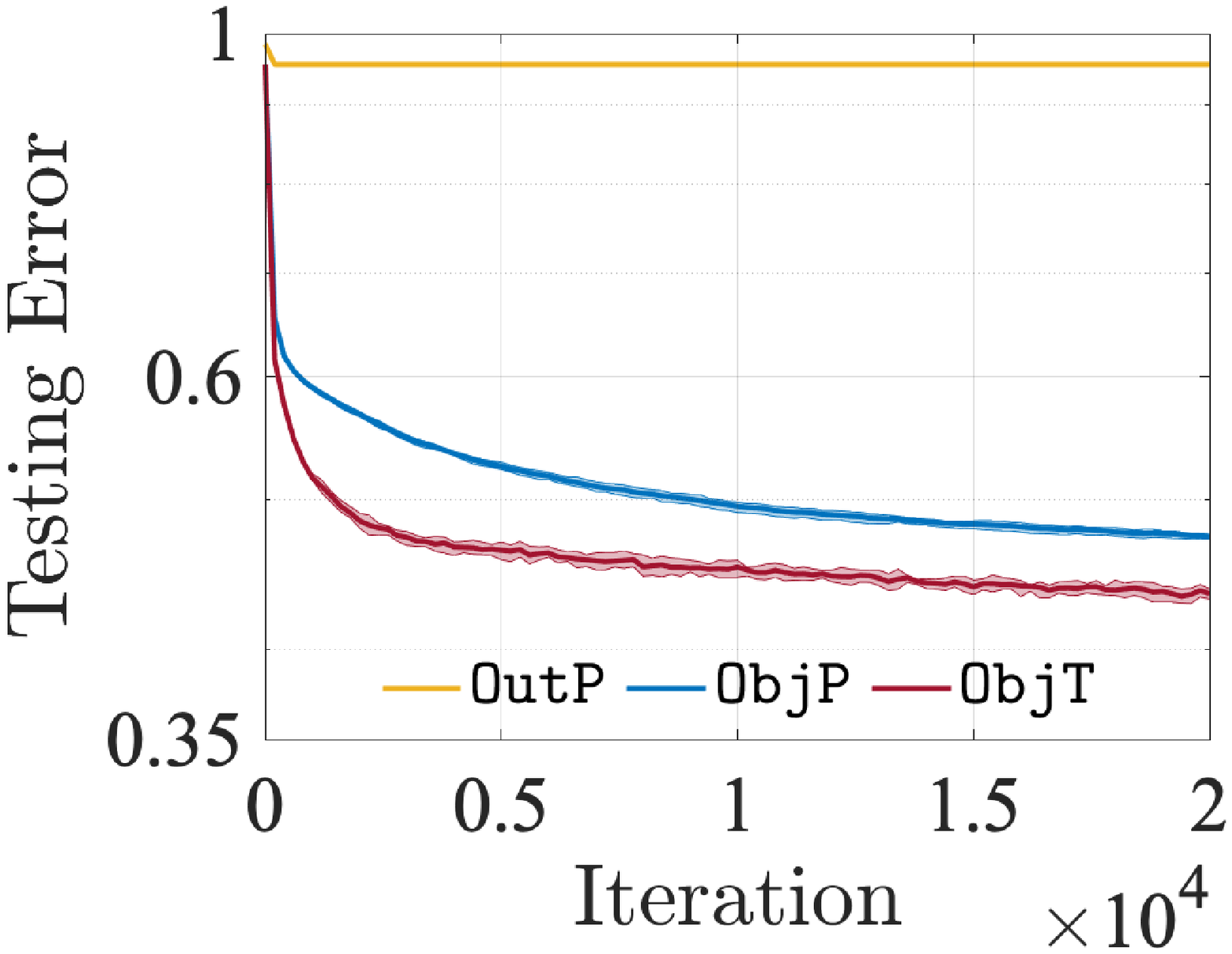}
      \includegraphics[width=\textwidth]{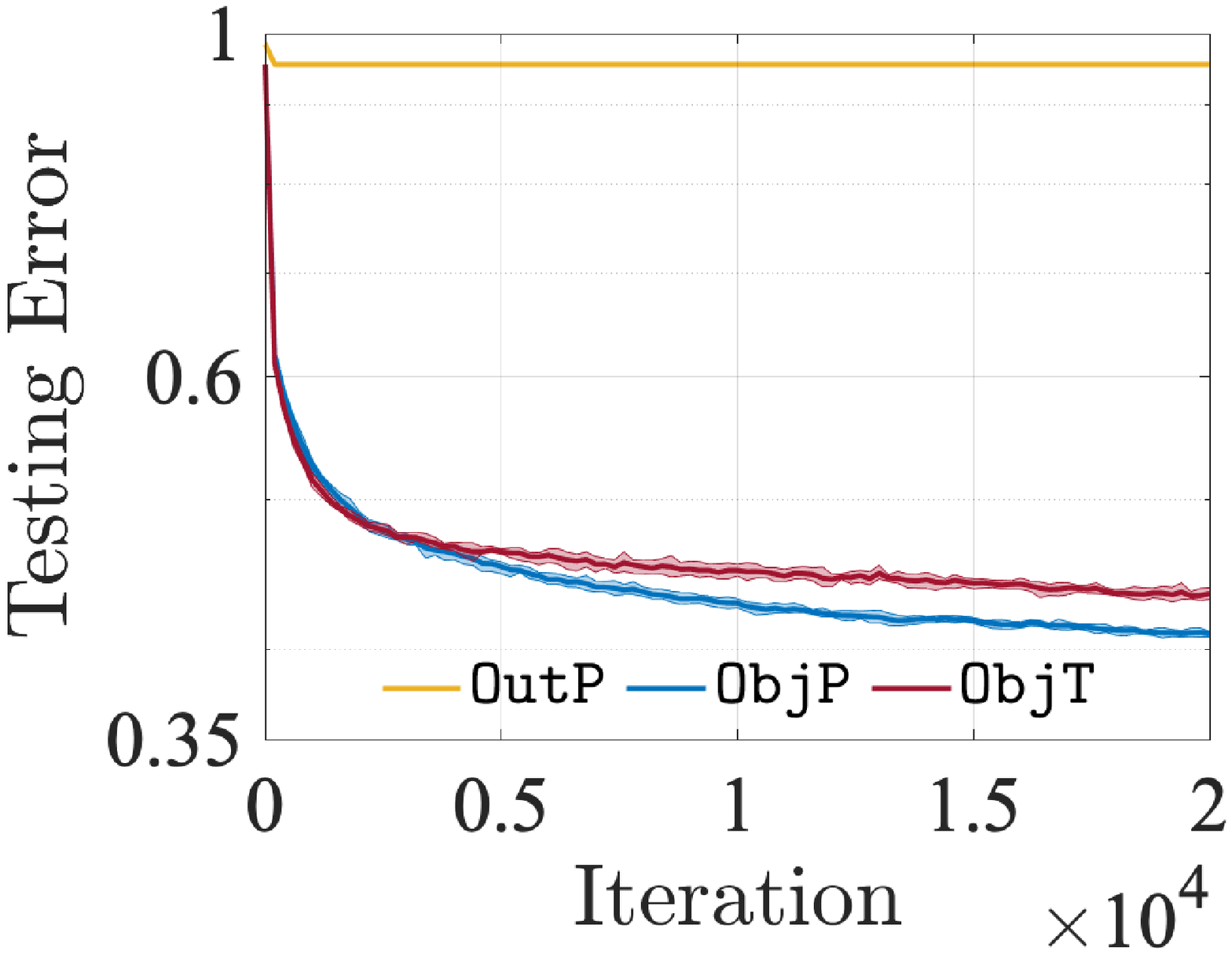}
      \caption{$\bar{\epsilon}=1$}
  \end{subfigure}
  % \begin{subfigure}[b]{0.3\textwidth}
  %     \centering
  %     \includegraphics[width=\textwidth]{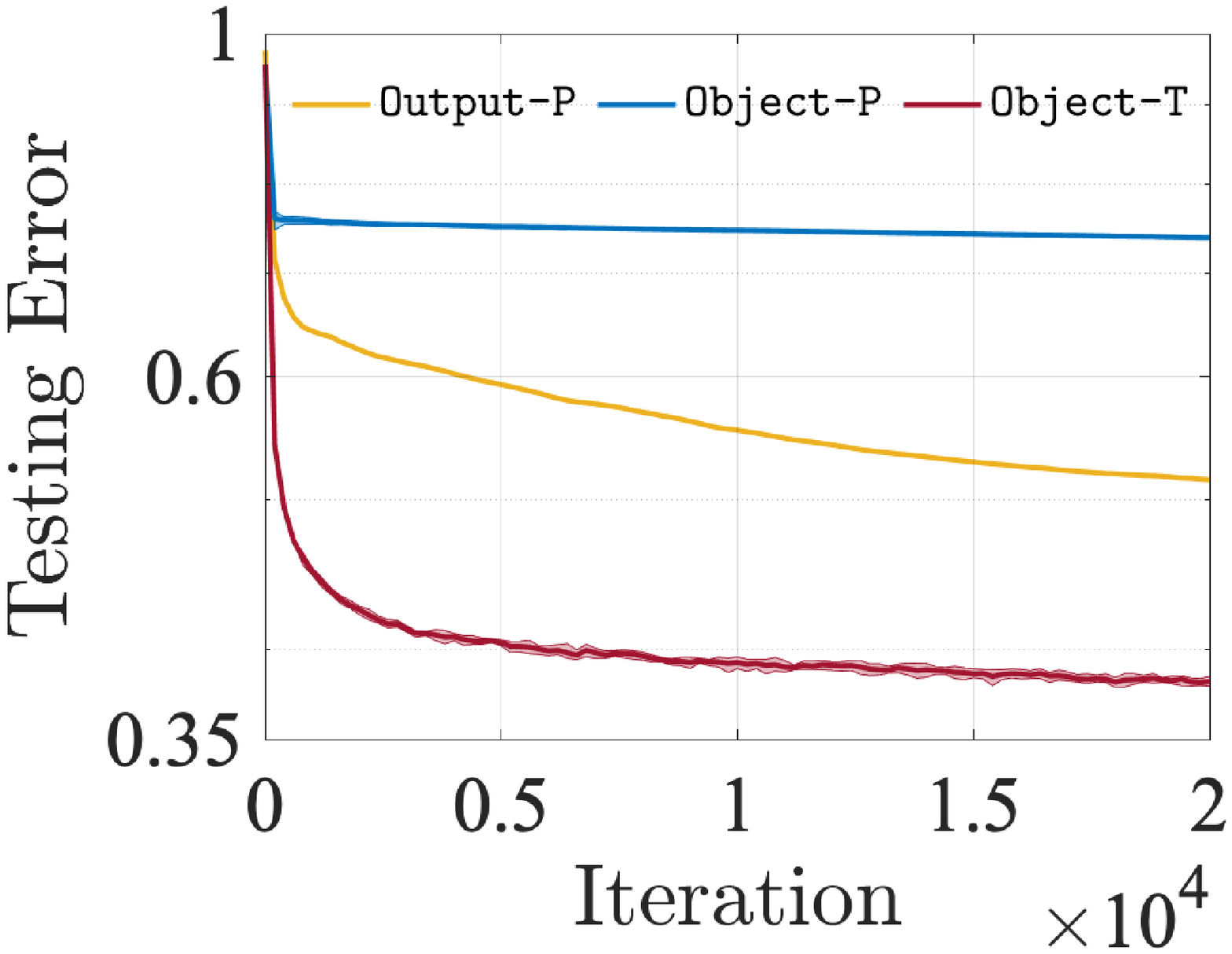}
  %     \includegraphics[width=\textwidth]{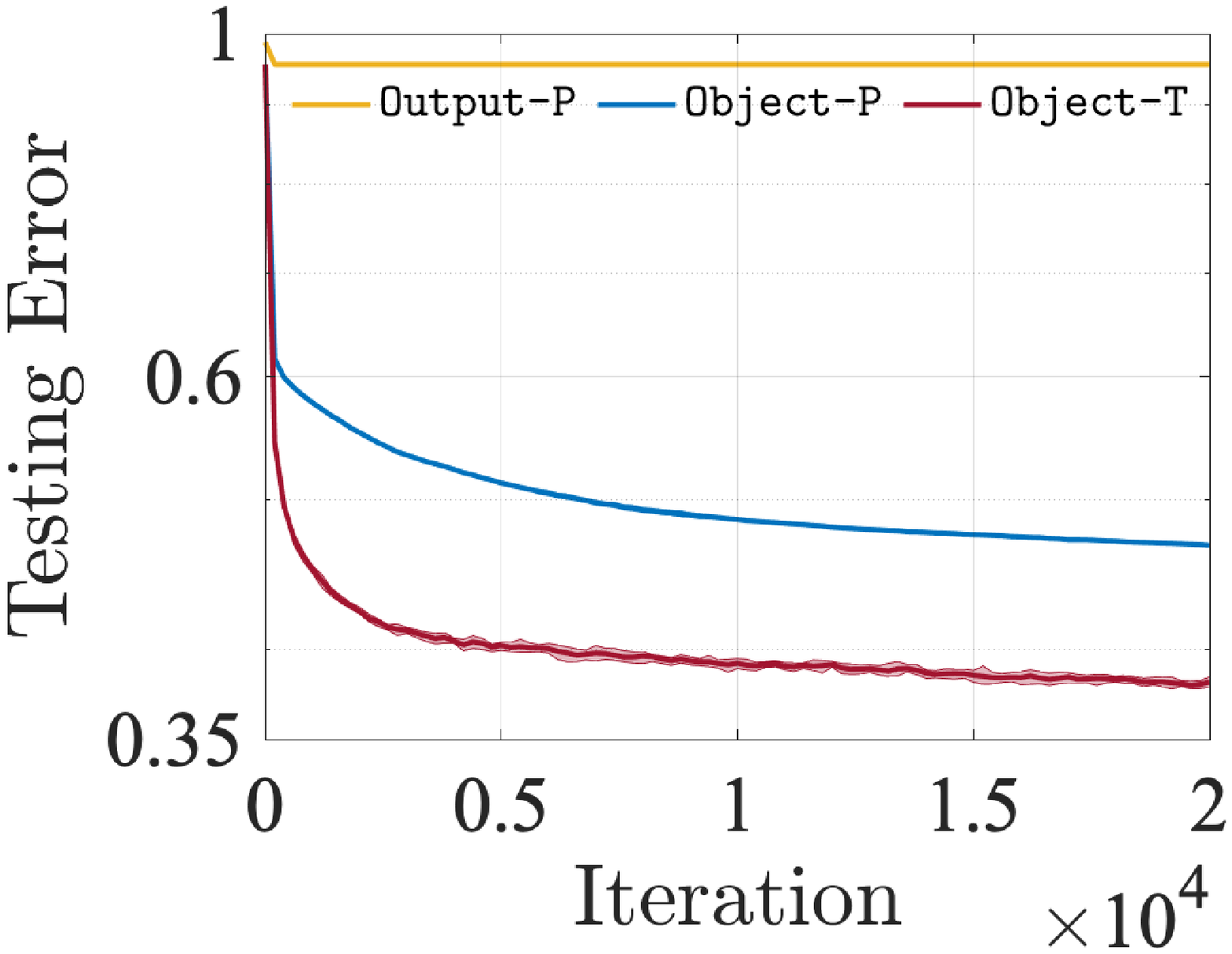}
  %     \includegraphics[width=\textwidth]{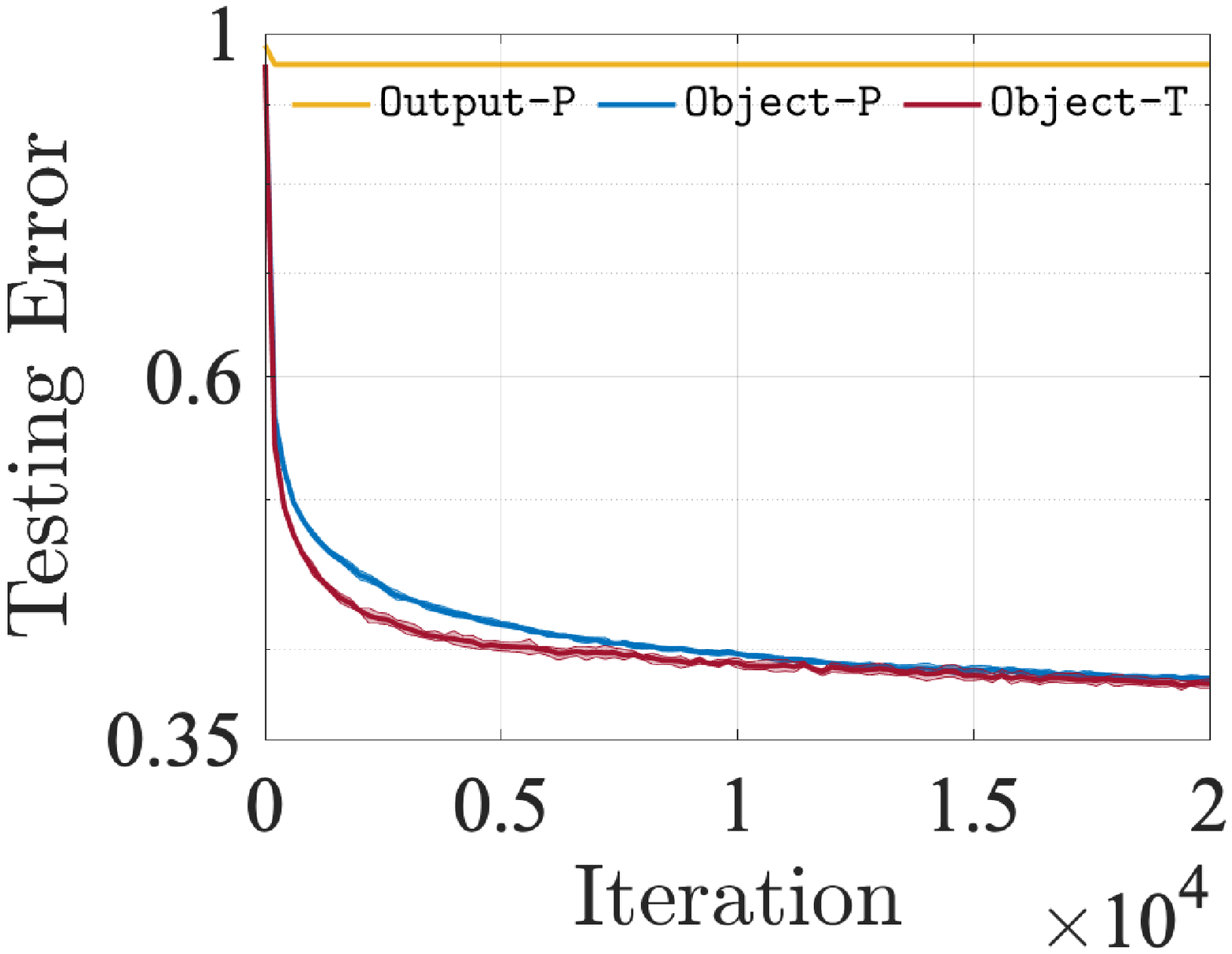}
  %     \caption{$\bar{\epsilon}=3$}
  % \end{subfigure}
  % \begin{subfigure}[b]{0.3\textwidth}
  %     \centering
  %     \includegraphics[width=\textwidth]{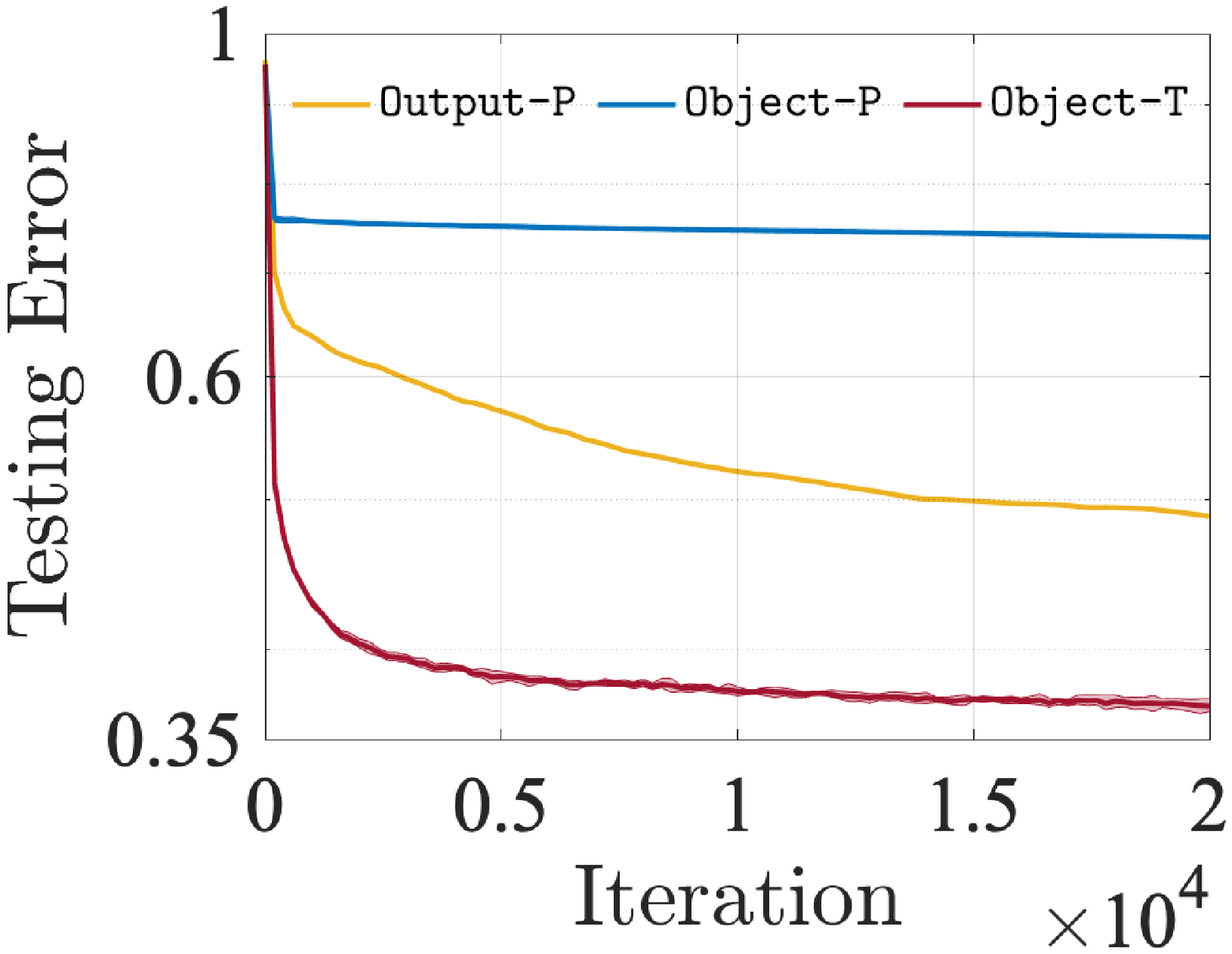}
  %     \includegraphics[width=\textwidth]{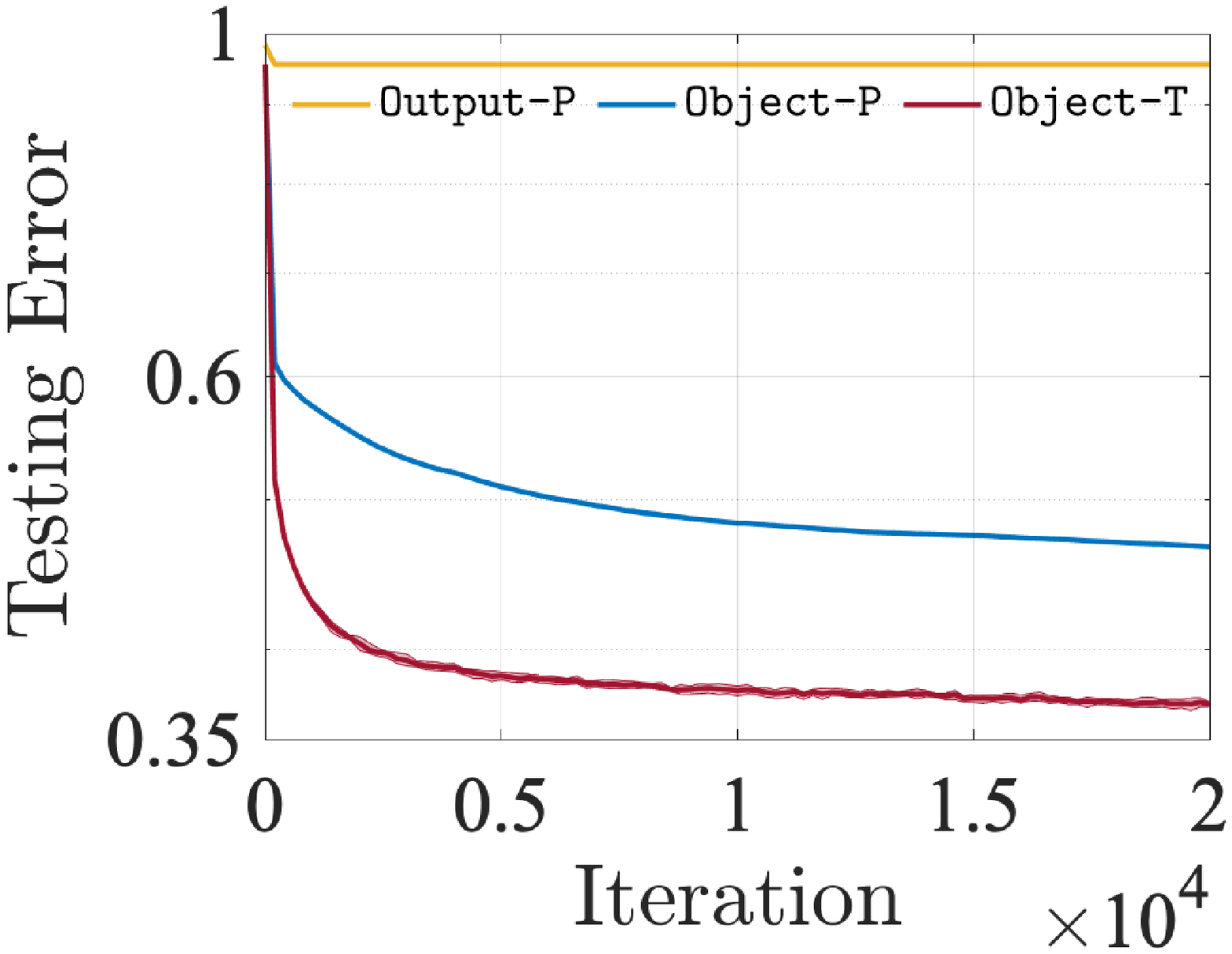}
  %     \includegraphics[width=\textwidth]{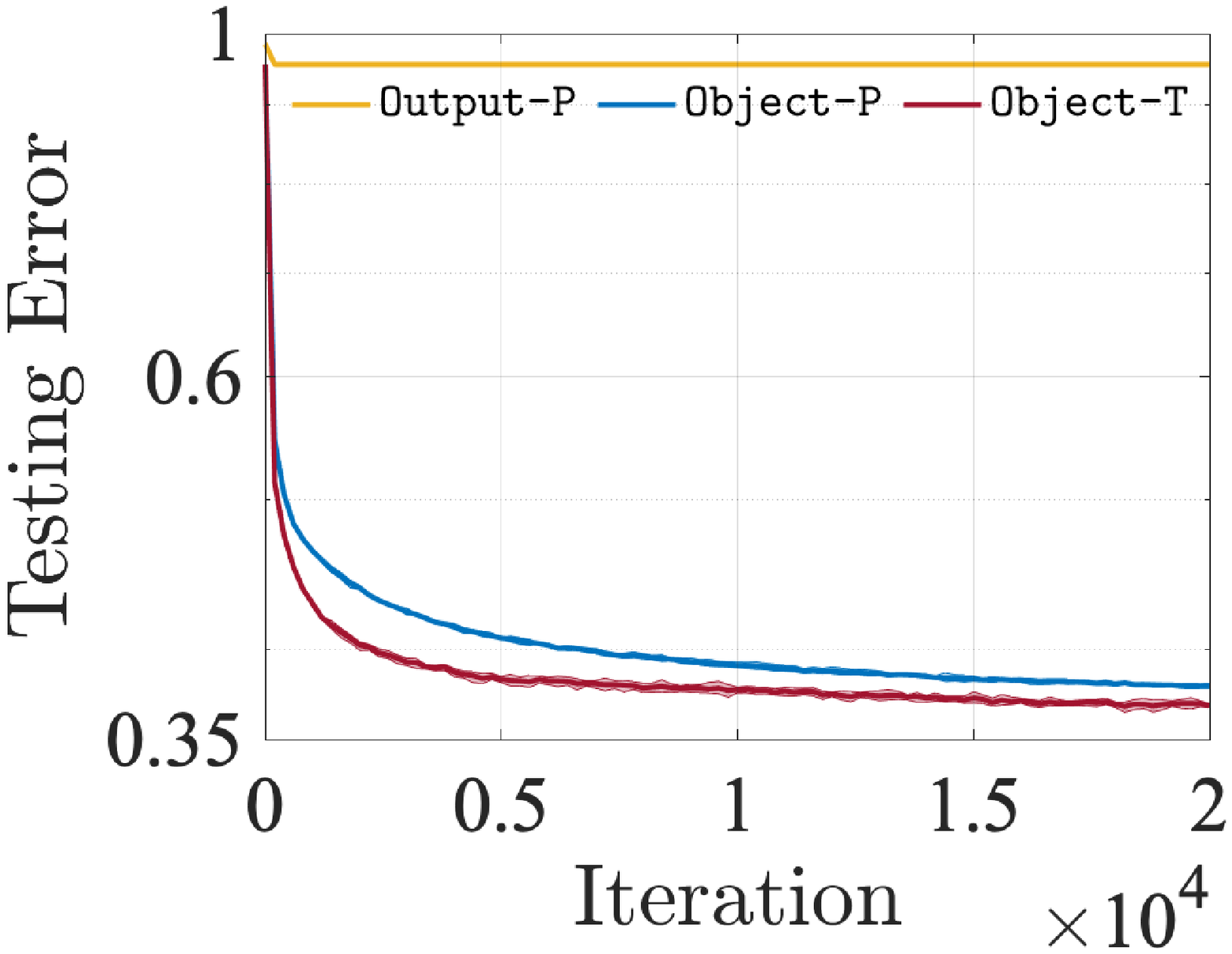}
  %     \caption{$\bar{\epsilon}=5$}
  % \end{subfigure}
     \caption{[FEMNIST] Testing errors when $a\in\{1, 100, 1000\}$ (top, middle, bottom).}
     \label{fig:FEMNIST_Various_a}
\end{figure}

\newpage

\end{document}